\def\1{\bm{1}}
\DeclareMathAlphabet{\mathsfit}{\encodingdefault}{\sfdefault}{m}{sl}
\SetMathAlphabet{\mathsfit}{bold}{\encodingdefault}{\sfdefault}{bx}{n}
\def\gD{{\mathcal{D}}}
\def\gF{{\mathcal{F}}}
\def\gL{{\mathcal{L}}}
\def\gP{{\mathcal{P}}}
\def\gR{{\mathcal{R}}}
\def\gY{{\mathcal{Y}}}
\newcommand{\E}{\mathbb{E}}
\newcommand{\R}{\mathbb{R}}
\newcommand{\Var}{\mathrm{Var}}
\newcommand{\Cov}{\mathrm{Cov}}
\DeclareMathOperator*{\argmin}{arg\,min}
\def\P {\mathbb{P}}
\def\E {\mathbb{E}}
\newtheorem{thm}{Theorem}[section]
\newtheorem{prop}{Proposition}[section]
\newtheorem{asm}{Assumption}[section]
\newtheorem{lem}{Lemma}[section]
\newtheorem{cor}{Corollary}[section]
\theoremstyle{definition}
\newtheorem{dfn}{Definition}[section]
\theoremstyle{remark}
\newtheorem{rem}{Remark}[section] 
\numberwithin{equation}{section}
\newcommand{\method}{PCAL}
\def\P{\mathbb{P}}
\def\E{\mathbb{E}}
\def\R{\mathbb{R}}
\def\0{{(0)}}
\def\1{{(1)}}
\def\2{{(2)}}
\def\3{{(3)}}
\def\k{{(k)}}
\def\sp{\mathrm{sp}}
\newcommand{\indep}{\perp \!\!\! \perp}
\def\sup{{\textnormal{sup}}}
\renewcommand{\hat}{\widehat}
\renewcommand{\tilde}{\widetilde}
\title{Labels or Preferences? Budget-Constrained Learning with Human Judgments over AI-Generated Outputs}
\patchcmd{\AB@output}{\par\vspace{1em}}{}{}{}
\renewcommand\AB@affilsepx{\quad}  
\author[1]{Zihan Dong}
\author[2]{Xiaotian Hou}
\author[3]{Ruijia Wu\thanks{Co-corresponding author}}
\author[1]{Linjun Zhang\protect\footnotemark[1]}
\affil[1]{Rutgers University}
\affil[2]{University of Pennsylvania}
\affil[3]{Shanghai Jiao Tong University}
\begin{document}
\maketitle
\begin{abstract}
The increasing reliance on human preference feedback to judge AI-generated pseudo labels has created a pressing need for principled, budget-conscious data acquisition strategies. We address the crucial question of how to optimally allocate a fixed annotation budget between ground-truth labels and pairwise preferences in AI. Our solution, grounded in semi-parametric inference, casts the budget allocation problem as a monotone missing data framework. Building on this formulation, we introduce Preference-Calibrated Active Learning (\method), a novel method that learns the optimal data acquisition strategy and develops a statistically efficient estimator for functionals of the data distribution. Theoretically, we prove the asymptotic optimality of our \method~estimator and establish a key robustness guarantee that ensures robust performance even with poorly estimated nuisance models.
Our flexible framework applies to a general class of problems, by directly optimizing the estimator's variance instead of requiring a closed-form solution.
This work provides a principled and statistically efficient approach for budget-constrained learning in modern AI. Simulations and real-data analysis demonstrate the practical benefits and superior performance of our proposed method.
\footnote{Code available at: \url{https://github.com/zihandong02/SD_auditor}}
\end{abstract}



\section{Introduction}
Recent advances in Artificial Intelligence (AI), particularly in training Large Language Models (LLMs), increasingly rely on combining two complementary sources of supervision: ground-truth labels and human preference data \citep{azar2024general,rafailov2023direct, hong2024orpo, ethayarajh2024kto, meng2024simpo}. While classical supervised learning relies on data with definitive ground-truth labels, state-of-the-art models are often refined using human preferences. This preference data is gathered by presenting human annotators with two candidate outputs (often generated by AI models) and asking them which one is better \citep{christiano2017deep}.

Preference-labeled data arise naturally in many real-world settings. In particular, one may have access to pseudo-labels, cheap proxy labels that approximate the truth but can be systematically biased—and the annotator is asked to compare two pseudo-labels and identify which one is closer to the ground truth. While acquiring high-quality ground-truth labels can be expensive \citep{zajac2023ground}, preference-labeled data collected through pairwise comparisons provides a more accessible and efficient alternative. This approach is particularly useful for recalibrating when initial pseudo-labels exhibit bias \citep{bender2021dangers, gehman2020realtoxicityprompts}, as it provides a direct way to align model outputs with human judgment \citep{leike2018scalable}. {A particularly prominent application} of pairwise comparison is in fine-tuning large language models(LLMs), where it is used to collect human feedback to align models with user preferences \citep{ouyang2022traininglanguagemodelsfollow, christiano2017deep,rafailov2023direct}.  Beyond LLMs, pairwise comparison are also widely used in the other fields, including ranking search results \citep{burges2005learning, lee2024methods}, evaluating text generation quality \citep{stiennon2020learning}. 
Another compelling application lies in clinical decision support, particularly in cases where obtaining precise, direct assessments is often challenging. For a clinician, it is often simpler to make a comparative judgment, such as identifying which of two potential diagnoses is more accurate or which treatment is preferable, than to assign a quantitative score to a single option \citep{verbeeck2023generalized}. {Pairwise comparison also plays a crucial role in algorithmic fairness and metric learning, where pairwise judgments enable learning of task-specific similarity metrics for individual fairness \citep{ilvento2019metric} and provide a framework for pairwise fairness constraints in ranking and regression \citep{narasimhan2020pairwise}.
}

Viewed through this lens, pairwise preference labels provide a natural middle tier between pseudo-labels and ground truth: they avoid the cost of collecting exact labels while conveying substantially more information than pseudo-labels alone. When multiple supervision types are available under a fixed annotation budget, a central question is how to allocate resources across data types in order to optimize statistical efficiency:
\begin{center}
\textit{Given data types with distinct costs, what is the optimal budget allocation, and what is the most efficient estimator for the resulting data mix?}
\end{center}

Our answer reveals that a principled mixture of both data types is optimal. On the one hand, ground-truth labels are essential: without them (or when their proportion is negligible), the estimator becomes inconsistent. On the other hand, whether to include preference data depends on comparing its information gain relative to its cost (see Corollary \ref{cor: need preference data} for the formal condition). Our analysis yields an automatic and principled recommendation for the most cost-effective data acquisition strategy.

We address this problem from a semi-parametric perspective, formulating the goal as the estimation of a functional of the underlying data distribution. In this framework, efficiency is characterized through asymptotic variance, and the goal is to construct estimators and data collection strategies that minimize this variance subject to the budget constraint. Our contributions are as follows: 
\begin{itemize}
    \item To the best of our knowledge, we are the first to provide a theoretical model to study the problem of learning from pairwise preference labels and pseudo-labels under a budget constraint, and provide the optimal analysis. Our formulation models this problem as a general monotone missing data problem under the Missing At Random (MAR) case, providing a rigorous theoretical foundation for analysis.
    \item Secondly, within this framework, we propose Preference-Calibrated Active Learning (\method), which jointly designs an optimal active sampling policy for acquiring labels under a budget constraint and constructs an efficient estimator of the target parameter using the resulting mixed supervision. 
    \item Third, we derive the semiparametric efficiency bound for our framework and show that the \method~estimator is asymptotically normal and achieves this efficiency bound under standard regularity conditions. We further establish a robustness guarantee: \method~is never worse than a natural baseline strategy, even when nuisance components are not consistently estimated. 
    \item Fourth, we develop new technical tools in semiparametric inference to enable optimal budget allocation under mixed supervision. 
    This setting, combining ground-truth labels, pairwise preference labels, and pseudo-labels, lies beyond existing tools developed for specialized settings such as average treatment effect (ATE) estimation. We develop a novel and general variance-minimization approach based on the efficient influence function (EIF), tailored to a multi-dimensional propensity structure, and optimize the resulting asymptotic variance functional numerically under the budget constraint. The key technical steps include an EIF characterization for multi-type monotone missingness and an optimization-based derivation of the optimal data acquisition policy. 
    {Our key technical innovation is threefold: First, we derive an EIF structure for the multi-type missing data problem (Proposition~\ref{prop: EIF MCAR monotone}). Second, rather than seeking closed-form solutions for optimal propensity scores, we directly express the asymptotic variance of our estimator as a functional of the multi-dimensional propensity scores. Third, we optimize this variance functional numerically with respect to the propensity scores under the budget constraint, yielding an optimal data acquisition strategy.} These tools may be of independent interest and can be readily adapted to other semiparametric problems involving heterogeneous supervision and budget-constrained data acquisition. 
    \item Finally, we demonstrate the practical utility of our framework through a real-world application on evaluating the politeness of online requests, the cleaned Stanford Politeness Corpus (sourced from Stack Exchange \& Wikipedia) \citep{DanescuNiculescuMizil2013}. In this study, we aim to estimate the relationship between linguistic hedging and politeness scores using a dataset of online requests. We leverage two AI models, GPT-4o-mini and DeepSeeker-V3.1, to generate pseudo-labels for politeness. By treating human-annotated scores as ground truth and constructing preference labels based on the proximity of model predictions to these human scores, we show how \method~can effectively allocate a limited budget to achieve precise estimation of the target parameter, significantly outperforming baseline strategies.
\end{itemize}


\subsection{Related Work}
Our work connects to three research areas. First, pairwise comparison data is widely used in ranking \citep{burges2005learning}, text generation evaluation \citep{stiennon2020learning}, and has become central to aligning LLMs with human values through Reinforcement Learning from Human Feedback (RLHF) \citep{ouyang2022traininglanguagemodelsfollow, christiano2017deep, rafailov2023direct}. 

Second, we build on semiparametric inference, which provides a framework for improving statistical efficiency by leveraging auxiliary information in missing data settings \citep{robins1994estimation, chernozhukov2018double}. This approach has seen a modern resurgence through Prediction-Powered Inference (PPI) \citep{angelopoulos2023prediction, angelopoulos2023ppi++}, where machine learning predictions enhance inference when true outcomes are scarce. Recent work has developed decorrelation techniques to ensure robustness \citep{miao2023assumption, ji2025predictionssurrogatesrevisitingsurrogate}.

Third, our work relates to experiment design and active learning, both addressing optimal data acquisition \citep{settles2009active, hahn2011adaptive}. While recent studies explore budget-constrained data selection \citep{zrnic2024active, angelopoulos2025costoptimalactiveaimodel}, they lack formal efficiency guarantees. Methods with efficiency guarantees \citep{ao2024predictionguidedactiveexperiments, zhang2025efficient} are limited to two data types and require closed-form solutions. Our framework overcomes these limitations by handling multiple data types and directly minimizing asymptotic variance without closed-form requirements. A comprehensive literature review is provided in Appendix~\ref{sec: app additional related work}.

\subsection{Notation}
For any dataset $\gD$ and function $f$, we follow the convention and denote the sample average of $f$ on $\gD$ and its expectation as $ \check{\P}_{\gD} [f] = \frac{1}{\abs{\gD}} \sum_{x_i \in \gD} f(x_i), \check{\P}[f] = \E \qty[f(X)]$.
For a matrix $A$, denote $\norm{A}_{\mathrm{sp}}$ as the spectral norm, $\norm{A}_{\infty, \infty} := \max_{1 \le i \le m, \, 1 \le j \le n} |A_{ij}|$ as the maximum absolute entry of $A$, $\sigma_{\min}(A) := \min_{\|x\|_2=1} \|Ax\|_2$ as the minimum singular value of $A$ and $\Tr(A)$ as the trace of $A$.
We define $[a] + b = \{i + b \mid i \in [a]\} = \{b + 1, b + 2, \dots, b + a\}$.
$X_n \xrightarrow{P} X$ denotes convergence in probability, defined as $\lim_{n \to \infty} P(|X_n - X| > \epsilon) = 0, \quad \forall \epsilon > 0$.
$\min\{a,b\}$ denotes the minimum of $a$ and $b$.
We use $C(\cdot)$ to denote some constants depending on the parameters specified in the argument that may vary from place to place. 
\subsection{Organization}
The remainder of this paper is organized as follows. In Section \ref{sec: problem description}, we formally introduce the problem formulation and our modeling framework. We then present our proposed method, \method, in Section \ref{sec: method} and establish its theoretical properties in Section \ref{sec: theory}. Section \ref{sec: experiment} provides an empirical evaluation with both simulation study and a real-data analysis. Finally, we conclude with a discussion of future research directions in Section \ref{sec: discussion}.

\section{Problem Formulation}
\label{sec: problem description}
{We consider the problem of estimating a target parameter $\theta(P_{X,Y}) \in \R^d$, defined as a functional of the joint distribution $P_{X, Y}$, where $X$ are the covariates and $Y$ is the true outcome. The available data comprises covariates $X$, along with pseudo-outcomes $W_1$ and $W_2$ generated by two distinct AI models. However, naive estimates constructed by treating these pseudo-outcomes as the true labels may exhibit inaccuracies and biases. To address this, we acquire additional data, either preference outcome $V$ indicating which diagnosis is more plausible, or to obtain the true outcome $Y$, potentially observed at different prices. 

{To make this setup more concrete, consider an AI-assisted content moderation task \citep{gorwa2020algorithmic}. For a user post $x$, two models propose risk assessments $w_1$ and $w_2$. A moderator can either provide a low-cost preference $v$ (based on a quick check), or conduct an expensive investigation to obtain the ground truth $y$. Optimizing this trade-off is crucial for scalability, as obtaining $y$ incurs significantly higher costs than $v$ \citep{stanford2024llm, arxiv2508.05527}.}

Given this framework, we address two fundamental questions: what is the most efficient method for estimating the target parameter $\theta(P_{X,Y})$? Furthermore, under a limited budget, how should the available resources be optimally allocated across these two types of outcome data collection?}

\subsection{Data Setup}
We assume access to a large dataset containing covariates and two pseudo outcomes $\{(x_i, w_{1i}, w_{2i}) | i \in [n_{\text{tot}}]\}$, where $n_{\text{tot}}$ is the total number of samples. We take pseudo-outcomes as cost-free. Given a fixed budget $B$, we consider the cost structure associated with outcome acquisition: the cost of obtaining a preference outcome $v_i$ alone is normalized to $1$, and the cost of obtaining both the true outcome $y_i$ and the preference outcome $v_i$ is denoted as $\rho$ with $\rho>1$. 

This setup results in three types of data:
\begin{enumerate}
    \item $\{(x_i, y_i, w_{1i}, w_{2i}, v_i) | i \in [n_{\text{lab}}]\}$: covariates, true outcome, two pseudo-outcomes and the preference outcome.
    
    \item $\{(x_i, w_{1i}, w_{2i}, v_i) | i \in [n_{\text{pre}}] + n_{\text{lab}}\}$: covariates, two pseudo-outcomes and the preference outcome. 
    
    \item $\{(x_i, w_{1i}, w_{2i}) \mid i \in [n_{\text{unl}}] + n_{\text{lab}} + n_{\text{pre}}\}$: covariates, two pseudo-outcomes.
\end{enumerate}
Here, $v_i = 1$ indicates that $w_{1i}$ is preferred, whereas $v_i = 0$ indicates that $w_{2i}$ is preferred. Correspondingly, the dataset consists of $n_{\text{lab}}$ fully labeled samples containing both the true outcome and the preference outcome, $n_{\text{pre}}$ samples with only the preference outcome, and $n_{\text{unl}}$ unlabeled samples. The sample sizes are related by $n_{\text{tot}} = n_{\text{lab}} + n_{\text{pre}} + n_{\text{unl}}$, and the total acquisition cost is constrained by the budget
\begin{equation}
\label{eq: budget constraint no random case}
\rho\cdot n_{\text{lab}} + n_{\text{pre}} \leq B.
\end{equation}
\begin{rem}
    In this paper, we consider acquiring either the preference outcome $v_i$ or the true outcome along with the corresponding preference outcome $(y_i,v_i)$. Specifically, if the true outcome $y_i$ is available, the associated preference outcome $v_i$ is observed automatically, i.e. $v_i$ is a \textbf{known deterministic function} of $(y_i, w_{1i}, w_{2i})$.
\end{rem}

\subsection{Missing Data Framework}
The data can be formalized within a missing data framework, where each observation is represented as $((X, Y, W_1, W_2, V)^R, R)$. Here, $X$ denotes the covariates, $Y$ is the true outcome, $W_1$ and $W_2$ are pseudo-outcomes based on AI models, and $V$ represents the preference outcome. Additionally, the superscript $^{R}$ indicates the type of missing pattern, such that $r_1^* = \{1,2,3,4,5\}, r_2^* = \{1,3,4,5\}, r_3^* = \{1,3,4\}$ represent the indices that can be observed in each type of dataset. 

This structure corresponds to a monotone missing data mechanism, where the occurrence of a variable in an earlier group implies its presence in subsequent groups, or vice versa. Moreover, we assume that $V$ is a known
deterministic function of $(Y, W_1, W_2)$. For instance,
one may take $V = \mathbbm{1}\qty{\norm{W_1 - Y} \leq \norm{W_2 - Y}}$. Under this assumption, once the true outcome $Y$ is observed, the corresponding preference outcome $V$ is determined automatically.

\paragraph{Covariate-aware strategy}

To improve labeling efficiency, we explore a covariate-aware strategy, which selects data points for labeling based on their observed features. This setting is captured by the Missing at Random (MAR) framework, under which the probability of data selection depends on the observed covariates. Formally, the MAR assumption states that the observation pattern is conditionally independent of the potential unobserved response given the observed covariates, i.e. $\P(R = r\mid X, Y, W_1, W_2, V) = \P(R = r\mid X, W_1, W_2)$. This framework is particularly relevant as it allows the labeling strategy to leverage the always-observed variables $(X, W_1, W_2)$ to identify and prioritize more informative samples. We therefore use the propensity scores
$\alpha_j(x, w_1, w_2) := dP_{R\mid X, W_1, W_2}(r_j^* \mid x, w_1, w_2), j \in [3]$
to guide the selection of data points for labeling. The induced distribution of the observed data will be
\begin{multline*}
    dP_{(X, Y, W_1, W_2, V)^R, R}((x, y, w_1, w_2, v)^r, r) = \\\sum_{j = 1}^3 \mathbbm{1}(r = r_j^*) \alpha_j(x, w_1, w_2) dP_{(X, Y, W_1, W_2, V)^{r_j^*}}((x, y, w_1, w_2, v)^{r_j^*}),
\end{multline*}
where $dP$ denotes the probability measure.

\section{Method}
\label{sec: method}

The budget allocation problem can be viewed as assigning labels to data points under the constraint \eqref{eq: budget constraint no random case}. Under the MAR framework defined above, it can be formulated as determining the propensity scores $\alpha_j(x, w_1, w_2)$, which specify the probability that an observation is assigned to label type $j$, $j\in[3]$. Thus, each data point is randomized into one of the three label type according to $\alpha_j(x, w_1, w_2)$. Consequently, the deterministic budget \eqref{eq: budget constraint no random case} is reformulated as the constraint on the expected total labeling cost.
\begin{equation}
\label{eq: budget constraint MAR inital}
\begin{cases}
\sum_{j = 1}^3 \alpha_j(x, w_1, w_2) = 1 \\
n_{\text{tot}} \check{\P}[\rho \alpha_1 + \alpha_2] \leq B
\end{cases},    
\end{equation}
where $\check{\P}[\rho \alpha_1 + \alpha_2]$ is the expected cost per sample. Define $\tau = \frac{B}{n_{\text{tot}}}$, we have
\begin{equation}
\label{eq: budget constraint MAR}
\begin{cases}
\sum_{j = 1}^3 \alpha_j(x, w_1, w_2) = 1 \\
\check{\P}[\rho \alpha_1 + \alpha_2] \leq \tau
\end{cases}.
\end{equation}

Now the task is to determine the propensity scores $\alpha_j(x, w_1, w_2), j \in [3]$, which govern how labels are acquired for an unlabeled dataset, and then use the resulting dataset to construct an efficient estimator. Accordingly, our procedure consists of two steps:
\begin{enumerate}
    \item Determine $\alpha_j(x, w_1, w_2), j \in [3]$. Subsequently, for each data point, we sample a missingness pattern according to $\alpha_j(x, w_1, w_2), j \in [3]$.
    \item Estimate $\theta(P_{X,Y})$ based on the dataset obtained from the first step.
\end{enumerate}
We assume that an initial labeled dataset $\gD_{r_1^*}^\0$ is available to provide a preliminary estimate of the propensity scores. We then allocate the budget based solely on the unlabeled dataset. {In the following, we use $n_1$ to denote the size of the total initial unlabeled dataset, which corresponds to $n_{\text{tot}}$ used above, and $n_0$ to denote the size of the initial labeled dataset used to obtain our allocation strategy.}
The overall pipeline is as follows.

\begin{algorithm}[!ht]
\caption{Preference-Calibrated Active Learning (\method)}
\label{alg: method}
\begin{algorithmic}[1]
\State \textbf{Input:} Total budget $B$; Initial labeled dataset $\gD_{r_1^*}^\0 = \{(x_i, y_i, w_{1i}, w_{2i}, v_i) \mid i \in [n_0]\}$; Unlabeled dataset $\gD_{\text{unl}} = \{(x_i, w_{1i}, w_{2i}) \mid i \in [n_1]\}$. 
\State \textbf{Output:} Estimator $\hat{\theta}^{\text{\method}}$.
\State Run Algorithm~\ref{alg: obtain alpha1} on the initial dataset $\gD_{r_1^*}^\0$ to estimate $\hat{\alpha}^{\text{\method}}_j(x, w_1, w_2), j \in [3]$.
\State Use $\hat{\alpha}^{\text{\method}}_j(x, w_1, w_2), j \in [3]$ and the budget $B$ to selectively label data from the unlabeled set and construct the final dataset $\gD = \{\gD_{r_j^*}, j \in [3]\}$.
\State Use $\gD = \{\gD_{r_j^*}, j \in [3]\}$ to run Algorithm~\ref{alg: estimator} to get our final estimator $\hat{\theta}^{\text{\method}}$.
\end{algorithmic}
\end{algorithm}

Algorithm \ref{alg: method} provides an overall pipeline with two key components: the allocation rule (Algorithm \ref{alg: obtain alpha1}) and the final estimator (Algorithm \ref{alg: estimator}). 

The main idea is summarized as follows.
Our goal is to construct the most efficient estimator of the target parameter $\theta$ under the constraint \eqref{eq: budget constraint MAR}, where ``the most efficient'' refers to minimizing the asymptotic variance of the proposed estimator. 
To achieve this, we first consider the form of the estimator for fixed propensity scores $\alpha_j(x, w_1, w_2), j \in [3]$. 
The key step is to construct an efficient estimator of $\theta$ based on the dataset $\gD$, which highly relies on the form of EIF.
Subsequently, its asymptotic variance then can be expressed as a function of propensity scores $\alpha_j(x, w_1, w_2), j \in [3]$, which naturally leads to an optimization problem: the optimal allocation is obtained by minimizing the variance with respect to $\alpha_j(x, w_1, w_2), j \in [3]$, thereby yielding the most efficient estimator. See Algorithm~\ref{alg: estimator} for the detailed procedure. 

To facilitate understanding of the remainder of the paper, we first provide a brief summary of essential concepts and results from semiparametric efficiency theory in Section \ref{sec: overview of semiparametric} and derive the EIF in Section \ref{sec: eif}. We then present our detailed algorithms in Section \ref{sec: algorithms}. Lastly we consider a special case, the covariate-agnostic strategy, in Section \ref{sec: mcar case}.

\subsection{Overview of semiparametric efficiency theory}
\label{sec: overview of semiparametric}

Let $\{X_i\}_{i=1}^n$ be independent and identically distributed with common distribution $P^* \in \gP$, and suppose there exists a dominating measure $\mu$ such that each element of $\gP$ can be represented by its corresponding density with respect to $\mu$. We are interested in a functional $\theta(\cdot): \gP \rightarrow \Theta \subset \R^k$, and write $\theta^* := \theta(P^*)$. Consider a one-dimensional regular parametric sub-model $\gP_T = \{ P_t : t \in T \subset \R \} \subset \gP$ such that $P_{t^*} = P^*$ for some $t^* \in T$. Each $\gP_T$ defines a score function at $P^*$. The set of score functions at $P^*$ from all such one-dimensional regular parametric sub-models of $\gP$ forms the \textit{tangent set} at $P^*$ relative to $\gP$, denoted as $\dot{\gP}_P$,  and the closed linear span of the tangent set is the \textit{tangent space} at $P^*$ relative to $\gP$, denoted as $\overline{\operatorname{lin}} \dot{\gP}_P$.

We define $\theta : \gP \mapsto \mathbb{R}^k$ to be \textit{differentiable at $P^*$} relative to a given tangent set $\dot{\gP}_P$ if there exists a continuous linear map $\dot{\theta}_P : L_2(P) \mapsto \mathbb{R}^k$ such that for every $g \in \dot{\gP}_P$ and a submodel $t \mapsto P_t$ with score function $g$, as $t \rightarrow 0$,
$\frac{\theta(P_t) - \theta(P^*)}{t} \longrightarrow \dot{\theta}_{P^*} g$.
This requires that the derivative of the map $t \mapsto \theta(P_t)$ exists in the ordinary sense, and also that it has a special representation.
By the Riesz representation theorem for Hilbert spaces, the map $\dot{\theta}_P$ can always be written in the form of an inner product with a fixed vector-valued, measurable function $\tilde{\theta}_P: \mathcal{X} \mapsto \mathbb{R}^k$, $\dot{\theta}_P g = \langle \tilde{\theta}_P, g \rangle_P = \int \tilde{\theta}_P g \, dP$.

Here the function $\tilde{\theta}_P$ is not uniquely defined by the functional $\theta$ and the model $\gP$, because only inner products of $\tilde{\theta}_P$ with elements of the tangent set are specified, and the tangent set does not span all of $L_2(P)$. However, it is always possible to find a candidate $\tilde{\theta}_P$ whose coordinate functions are contained in $\overline{\operatorname{lin}} \dot{\gP}_P$, the closure of the linear span of the tangent set. This function is unique and is called the \textit{efficient influence function}. It can be found as the projection of any other “influence function” onto the closed linear span of the tangent set.

An estimator $\hat{\theta}_n = \hat{\theta}_n(X_1, \ldots, X_n)$ of $\theta^*$ is \textit{regular} relative to $\gP$ if, for any regular parametric sub-model $\gP_T = \{ P_t : t \in T \subset \R \} \subset \gP$ such that $P^* = P_{t^*}$ for some $t^* \in T$,
$\sqrt{n} \qty[ \hat{\theta}_n - \theta\qty( P_{t^* + \frac{h}{\sqrt{n}}} ) ] \xrightarrow{d} L_{P^*}$
for all $h \in \R$ under $P_{t^* + \frac{h}{\sqrt{n}}}^n$, where $L_{P^*}$ is a random variable whose distribution depends on $P^*$ but not on $h$.




Semiparametric efficiency theory aims to provide a lower bound for the asymptotic variances of regular estimators.
A more comprehensive introduction can be found in Chapter 25 of \citet{van2000asymptotic}.

\subsection{Efficient Influence Function}
\label{sec: eif}
Given the propensity scores $\alpha_j(x, w_1, w_2)$,
it is necessary to derive the efficient influence function (EIF) for $\theta$, which provides the lowest-variance unbiased representation of $\theta$ in the semiparametric model. Based on this EIF, we then apply the decorrelation strategy to construct the estimator.

Let $\psi_{P_{X,Y}}(X, Y)$ denote the EIF of \( \theta(P_{X,Y}) \) in the nonparametric model space
$\mathcal{P}_{X,Y} = \{ \text{All distributions of } (X, Y) \}$.

\begin{asm}
\label{asm: model distribution}
The two AI-generated pseudo-outcomes, $W_1$ and $W_2$, are treated as unknown, meaning that the joint distribution of $(W_1, W_2)$ is unrestricted. The variable $V$ is assumed to be a deterministic function of $(Y, W_1, W_2)$, i.e., $V$ is fully determined given $(Y, W_1, W_2)$.
\end{asm}
Then, the EIF $\psi$ of $\theta(P_{X,Y})$ in the nonparametric observed model space $\gP_{(X,Y,W_1, W_2, V)^R, R} = \{ \text{All distributions of } ((X,Y,W_1, W_2, V)^R, R) \text{ satisfying Assumption~\ref{asm: model distribution}} \}$
has a simplified form based on $\psi_{P_{X,Y}}(X, Y)$, as stated in Proposition \ref{prop: EIF MCAR monotone} below.
\begin{prop}
\label{prop: EIF MCAR monotone}
Consider the model space defined above. When the missingness mechanism satisfies MAR, the EIF of $\theta$ is given by:
\begin{multline}
\label{eq: EIF MAR monotone}
   \psi\qty((X, Y, W_1, W_2, V)^{R},R) 
   = \mathbbm{1}\{R = r_1^*\} \frac{1}{\alpha_1(X, W_1, W_2)} 
     \psi_1(X, Y, W_1, W_2, V) \\
   + \sum_{j =1}^3 \mathbbm{1}\{R = r_j^*\} 
     \phi_j\qty((X, Y, W_1, W_2, V)^{r_j^*}).
\end{multline}
where
\begin{equation}
\label{eq: phi defn}
\begin{aligned}
&\phi_1(X, Y, W_1, W_2, V) 
= - \frac{\alpha_2(X, W_1, W_2)\psi_2(X, W_1, W_2, V)}{\gamma_2(X, W_1, W_2)\gamma_1(X, W_1, W_2)}  - \frac{\alpha_3(X, W_1, W_2)\psi_3(X, W_1, W_2)}{\gamma_2(X, W_1, W_2)} , \\
&\phi_2(X, W_1, W_2, V)  
= \frac{\psi_2(X, W_1, W_2, V)}{\gamma_2(X, W_1, W_2)} -\frac{\alpha_3(X, W_1, W_2) \psi_3(X, W_1, W_2)}{\gamma_3(X, W_1, W_2)\gamma_2(X, W_1, W_2)} 
    , \\
&\phi_3(X, W_1, W_2) 
= \psi_3(X, W_1, W_2).
\end{aligned}
\end{equation}
and $\psi_j\qty((X, Y, W_1, W_2, V)^{r_j^*}) = \E\qty[\psi_{P_{X,Y}}(X, Y) \mid (X, Y, W_1, W_2, V)^{r_j^*}]$, $\gamma_j(X, W_1, W_2) = \sum_{l = 1}^j \alpha_l(X, W_1, W_2)$.
\end{prop}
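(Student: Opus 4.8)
The plan is to derive the EIF using the machinery of Section~\ref{sec: overview of semiparametric}: reduce to the full-data EIF, then obtain the observed-data EIF by projecting an inverse-probability-weighted gradient onto the coarsening (augmentation) space. Throughout write $Z = (X,Y,W_1,W_2,V)$ for the full data and $O = ((X,Y,W_1,W_2,V)^R,R)$ for the observed data. The first step is to argue that in the full-data model — where all of $Z$ is observed — the EIF of $\theta(P_{X,Y})$ is exactly $\psi_{P_{X,Y}}(X,Y)$. Since Assumption~\ref{asm: model distribution} leaves the conditional law of $(W_1,W_2)$ given $(X,Y)$ unrestricted and makes $V$ a deterministic function of $(Y,W_1,W_2)$, the pathwise derivative of $\theta$ loads only on the $(X,Y)$-score, so the full-data EIF is a function of $(X,Y)$ alone and equals $\psi_{P_{X,Y}}$. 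This identifies the term $\psi_1$, because $\psi_1 = \E[\psi_{P_{X,Y}}\mid Z] = \psi_{P_{X,Y}}(X,Y)$.

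Next I would establish the tangent-space decomposition. Under MAR the observed-data likelihood factors into a full-data part and a coarsening part depending only on $(R,X,W_1,W_2)$, so the set of observed-data influence functions is $\{\text{IPW gradient}\}+\Lambda$, where $\Lambda = \{h(O): \E[h(O)\mid Z]=0\}$ is the augmentation space of the MAR coarsening mechanism, and the EIF is the unique member orthogonal to $\Lambda$. The \emph{essential structural step} is to exploit monotonicity of the nested patterns $r_3^*\subset r_2^*\subset r_1^*$: $\Lambda$ splits orthogonally as $\Lambda = \Lambda^{(1)}\oplus\Lambda^{(2)}$ over the two dropout transitions, namely whether $V$ is observed (risk set: all units; information set $(X,W_1,W_2)$) and whether $Y$ is observed (risk set: $R\in\{r_1^*,r_2^*\}$; information set $(X,W_1,W_2,V)$). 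Concretely,
\[
\Lambda^{(1)} = \{(\mathbbm{1}\{R=r_3^*\}-\alpha_3)\,g_1(X,W_1,W_2)\},\quad
\Lambda^{(2)} = \Big\{\big(\mathbbm{1}\{R=r_2^*\}-\tfrac{\alpha_2}{\gamma_2}\mathbbm{1}\{R\in\{r_1^*,r_2^*\}\}\big)\,g_2(X,W_1,W_2,V)\Big\},
\]
and I would verify, using MAR (i.e.\ $R\indep (Y,V)\mid (X,W_1,W_2)$), that each increment has conditional mean zero and that the two increments are orthogonal. The crucial point is that $g_2$ may depend on the observed $V$; this is exactly what allows $\psi_2$ to enter the final formula.

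The third step projects the IPW gradient $U := \mathbbm{1}\{R=r_1^*\}\psi_1/\alpha_1$, so that $\psi = U - \Pi(U\mid\Lambda^{(1)}) - \Pi(U\mid\Lambda^{(2)})$. Each projection is computed within its information set as (conditional covariance of $U$ with the increment) divided by (conditional second moment of the increment). Using MAR together with the tower identities $\psi_2 = \E[\psi_1\mid X,W_1,W_2,V]$ and $\psi_3 = \E[\psi_2\mid X,W_1,W_2]$, the two conditional second moments evaluate to $\alpha_3(1-\alpha_3)=\alpha_3\gamma_2$ and $\alpha_1\alpha_2/\gamma_2$, giving
\[
\Pi(U\mid\Lambda^{(1)}) = -(\mathbbm{1}\{R=r_3^*\}-\alpha_3)\tfrac{\psi_3}{\gamma_2},\qquad
\Pi(U\mid\Lambda^{(2)}) = -\tfrac{\psi_2}{\alpha_1}\big(\mathbbm{1}\{R=r_2^*\}-\tfrac{\alpha_2}{\gamma_2}\mathbbm{1}\{R\in\{r_1^*,r_2^*\}\}\big).
\]
Substituting and regrouping by $\mathbbm{1}\{R=r_j^*\}$, and recalling $\gamma_1=\alpha_1$ and $\gamma_3=1$, reads off precisely the coefficients $\phi_1,\phi_2,\phi_3$ in \eqref{eq: phi defn}.

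As a final consistency check I would verify the gradient property $\E[\psi\mid Z]=\psi_1$: taking $\E[\cdot\mid Z]$ and using $\P(R=r_j^*\mid Z)=\alpha_j$ cancels the $\psi_2$ contributions and collapses the $\psi_3$ contributions through $\gamma_2=\alpha_1+\alpha_2$, leaving $\psi_1$, which confirms both mean-zero and the correct pathwise derivative. I expect the main obstacle to be the second step — correctly identifying the augmentation space and its orthogonal monotone decomposition under the deterministic-$V$ constraint — since a naive treatment of the mechanism as a plain categorical model for $R$ given $(X,W_1,W_2)$ omits the $V$-dependence in the second transition and thereby loses the $\psi_2$ term entirely. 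Once this decomposition is in hand, the remaining projection algebra and regrouping are routine.
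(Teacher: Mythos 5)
Your proposal is correct, and it takes a genuinely different route from the paper's proof. The paper proceeds by guess-and-verify: it first writes the observed-data scores as $\sum_{j}\mathbbm{1}\{R=r_j^*\}\,\E[S_{X,Y,W_1,W_2}\mid (X,Y,W_1,W_2,V)^{r_j^*}]$, then (Step 1) checks that the candidate is an influence function --- the IPW term delivers the pathwise derivative, while the $\phi_j$-terms are orthogonal to every score because $\sum_{j}\alpha_j\phi_j=0$ pointwise --- and (Step 2) checks tangent-space membership by exhibiting an explicit generating full-data score $S_\psi=\tfrac{\psi_1}{\alpha_1}-\tfrac{\alpha_2}{\gamma_2\alpha_1}\psi_2-\tfrac{\alpha_3}{\gamma_2}\psi_3$ whose conditional expectations under the three patterns reproduce $\tfrac{\psi_1}{\alpha_1}+\phi_1$, $\phi_2$, $\phi_3$. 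You instead derive the formula constructively via the classical monotone-CAR projection: influence functions are IPW plus the augmentation space $\Lambda$, the EIF is the unique one orthogonal to $\Lambda$, and you compute the projection by splitting $\Lambda$ into the two orthogonal dropout increments. Your computations check out: both increment spaces have conditional mean zero and are mutually orthogonal under MAR, the conditional second moments are indeed $\alpha_3\gamma_2$ and $\alpha_1\alpha_2/\gamma_2$, the two projections are as you state, and regrouping by pattern (with $\gamma_1=\alpha_1$, $\gamma_3=1$) yields exactly \eqref{eq: phi defn}; you also correctly identify that $g_2$ must be allowed to depend on $V$, without which the $\psi_2$ terms are lost. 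Your route explains where the formula comes from and ties it to the Robins--Rotnitzky--Tsiatis AIPW theory; the paper's route is shorter given the answer and, as its remark notes, its inductive structure extends directly to more than three patterns.

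The one step you assert but never actually establish is the completeness of the decomposition $\Lambda=\Lambda^{(1)}\oplus\Lambda^{(2)}$. Verifying conditional mean zero and mutual orthogonality only shows $\Lambda^{(1)}\oplus\Lambda^{(2)}\subseteq\Lambda$; if the inclusion were strict, the residual $U-\Pi(U\mid\Lambda^{(1)})-\Pi(U\mid\Lambda^{(2)})$ would be orthogonal only to the spanned part of $\Lambda$, and your final check $\E[\psi\mid Z]=\psi_1$ certifies only that $\psi$ is \emph{an} influence function, not the efficient one, so it cannot substitute for spanning. Fortunately the claim is true and short to prove: any $h\in\Lambda$ has the form $\sum_j\mathbbm{1}\{R=r_j^*\}h_j$ with $\alpha_1h_1+\alpha_2h_2+\alpha_3h_3=0$, where $h_2$ may depend on $(X,W_1,W_2,V)$ and $h_3$ on $(X,W_1,W_2)$; taking $g_1=h_3/\gamma_2$ and $g_2=(\gamma_2h_2+\alpha_3h_3)/\alpha_1$ recovers the $h_3$- and $h_2$-components exactly, and the $h_1$-components then agree automatically since both sides satisfy the same linear constraint. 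With this three-line lemma inserted, your proof is complete.
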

\begin{rem}
A notable feature of this derivation is the inductive structure of the EIF. This structure not only simplifies the proof of efficiency (minimum variance) but also ensures our method extends naturally to scenarios with more than three data types. Furthermore, we can verify this result by demonstrating that the derived EIF lies within the model's tangent space. This procedure provides both a validation check and an alternative derivation. 
\end{rem}

\subsection{Algorithms}
\label{sec: algorithms}
As Algorithm \ref{alg: method} provides the overall procedure, in this section, we describe the details of Algorithm \ref{alg: estimator} and Algorithm \ref{alg: obtain alpha1}, which build on the EIF formulation derived in Section \ref{sec: eif}.

We begin by presenting the estimation algorithm. The main idea is to construct an efficient estimator using the labeled data, then apply the EIF derived above \eqref{eq: EIF MAR monotone} to decorrelate the estimator. The details are provided in  Algorithm \ref{alg: estimator} below.
\begin{algorithm}[!ht]
\caption{\method~EIF-Based Estimator}
\label{alg: estimator}
\begin{algorithmic}[1]
\State \textbf{Input:} Dataset $\gD = \{\gD_{r_j^*}, j \in [3]\}$; The propensity scores $\alpha_j(x, w_1, w_2), j \in [3]\}$.
\State \textbf{Output:} Estimator $\hat{\theta}^{\text{\method}}$.

\State \textbf{Step 1:} Randomly split the complete samples $\gD$ into three parts: $\gD_1$, $\gD_2$, and $\gD_3$.

\State \textbf{Step 2:} On $\gD_3 \cap \gD_{r_1^*}$, construct estimators 
$\hat{\psi}^\1_j\qty((x, y, w_1, w_2, v)^{r_j^*}),\ j \in [3]$. 
Then compute $\hat{\phi}^\1_j\qty((x, y, w_1, w_2, v)^{r_j^*}),\ j \in [3]$, 
according to Equation~\eqref{eq: phi defn}.

\State \textbf{Step 3:} On $\gD_2$, given the estimators in Step 2, estimate
\begin{equation*}
M^\1 = -\check{\P}\qty[\mathbbm{1}\{R = r_1^*\}\frac{\hat{\psi}^\1_1\hat{\phi}^{\1\top}_1}{\alpha_1}]\qty( \check{\P}\qty[\sum_{j = 1}^3 \mathbbm{1}\{R = r_j^*\} \hat{\phi}_j\hat{\phi}_j^\top])^{-1}   
\end{equation*}
by $\hat{M}^\1 = -\check{\P}_{\gD_2 \cap \gD_{r_1^*}}\qty[\mathbbm{1}\{R = r_1^*\}\frac{\hat{\psi}^\1_1\hat{\phi}^{\1\top}_1}{\alpha_1}]\qty( \check{\P}_{\gD_2}\qty[\sum_{j = 1}^3 \mathbbm{1}\{R = r_j^*\} \hat{\phi}_j\hat{\phi}_j^\top])^{-1}$.
\State \textbf{Step 4:} On $\gD_1$, compute an efficient estimator $\tilde{\theta}^{(1)}$ through the \textbf{weighted} algorithm for $\theta$ in $\check{\P}_{X,Y}$ and set
\begin{equation*}
    \hat{\theta}^{(1)} = \tilde{\theta}^{(1)} +
    \hat{M}^\1 \cdot \frac{1}{|\gD_1|} \qty( \sum_{i \in \gD_1} \sum_{j=1}^3 \mathbbm{1}\{r_i = r_j^*\} \hat{\phi}_j^\1\qty((x_i, y_i, w_{1i}, w_{2i}, v_i)^{r_j^*}) ).  
\end{equation*}
\State \textbf{Step 5:} Repeat Steps 2–4 with fold rotations: $(\gD_2, \gD_3, \gD_1)$ and $(\gD_3, \gD_1, \gD_2)$ to obtain $\hat{\theta}^{(2)}$ and $\hat{\theta}^{(3)}$

\State \textbf{Step 6:} Set the final estimator as the weighted average:
\[
\hat{\theta}^{\text{\method}} = \sum_{k = 1}^3 \frac{|\gD_k|}{n_1} \hat{\theta}^{(k)}.
\]
\end{algorithmic}
\end{algorithm}
\begin{rem}
    The matrix $M^\1$ serves as a decorrelation matrix, guaranteeing that our estimator, $\hat{\theta}^{(1)}$ is at least as efficient as the initial estimator, $\tilde{\theta}^{(1)}$, even when the nuisance functins  $\phi_j, j \in [3]$ are poorly estimated. This technique is similar to methods used in \citet{miao2023assumption, ji2025predictionssurrogatesrevisitingsurrogate}.
    The weighted algorithm addresses the distribution shift between the complete-case distribution (the conditional distribution given $R = r_1^*$)  and the full data distribution (the marginal distribution of $(X, Y, W_1, W_2, V)$). By reweighting each complete-case datapoint using the inverse propensity score $1/\alpha_1(X, W_1, W_2)$, we can recover unbiased expectations under the full data distribution. For example, for $Z$ estimation, if the initial estimation equation under the full data distribution is $\E\qty[f(X, Y, W_1, W_2, V)] = 0$, we construct the weighted estimation equation as $\E\qty[\mathbbm{1}\qty{R =  r_1^*} f(X, Y, W_1, W_2, V)/\alpha_1(X, W_1, W_2)] = 0$.
\end{rem}

Once the form of the efficient estimator is established, we can estimate its asymptotic variance and subsequently minimize it with respect to the propensity scores $\alpha_j(x, w_1, w_2), j \in [3]$. 

To establish the asymptotic normality of $\hat{\theta}^{\text{\method}}$, we state two natural assumptions.
\begin{asm}[Stability of the Black-Box Algorithm]
\label{asm: stability of psi}
There exists $\tilde{\phi}_j$ with $\check{\P}\qty[\norm{\tilde{\phi}_j}_2^2] < \infty$ such that as $n_1 \to \infty$,
$\E\qty[\check{\P}\qty[\norm{ \hat{\phi}_j^\k - \tilde{\phi}_j }_2^2 ]]\to 0$ for $k = 1,2,3$.
Moreover, $\E\qty[\check{\P}\qty[\norm{ \hat{\psi}_1^\k - \psi_1 }_2^2 ]]\to 0$ as $n_1 \to \infty$, i.e., $\psi_1$ is consistently estimable. Here, $\check{\P}$ denotes expectation over $(X, Y, W_1, W_2, V)$, and the outer expectation is over $\gD$ since the estimators are constructed from it.
\end{asm}
\begin{asm}
\label{asm: consistency of M}
    We assume $\norm{ \hat{M}^\1 - M }_\sp \xrightarrow{P} 0$.
\end{asm}
\begin{rem}
    Assumption \ref{asm: stability of psi} requires that the black box algorithms used to fit the nuisance components $\psi_1, \phi_j, j \in [3]$ are stable,implying they converge to fixed limits as $n_1$ goes to infinity. Crucially, consistency is not required for $\phi_j, j \in [3]$. For $\psi_1$, we require consistency, which is readily satisfied since we can already obtain an efficient estimator based on the labeled data. {Assumption \ref{asm: consistency of M} states that the empirical matrix $\hat{M}^\1$ is a consistent estimator of the true decorrelation matrix $M$, as $M^\1$ will converge to $M$.} 
\end{rem}
 The following theorem establishes that, assuming the propensity scores $\alpha_j(x, w_1, w_2)$ are known, the estimator is asymptotically normal.
\begin{thm}[Asymptotic Normality]
\label{thm: asymptotic normality}
Suppose Assumptions \ref{asm: model distribution} - \ref{asm: consistency of M} hold, we have
\begin{equation*}
    \sqrt{n_1}(\hat{\theta}^{\text{\method}} - \theta) = \sqrt{n_1} \check{\P}_{\gD} \qty[
\mathbbm{1}\{R = r_1^*\} \frac{\psi_1}{\alpha_1}
+ M \sum_{j=1}^3 \mathbbm{1}\{R = r_j^*\} \tilde{\phi}_j
] + o_p(1),
\end{equation*}
where
$M = -\check{\P}\qty[\mathbbm{1}\{R = r_1^*\}\frac{\psi_1\tilde{\phi}_1^\top}{\alpha_1}] \qty( \check{\P}\qty[\sum_{j = 1}^3 \mathbbm{1}\{R = r_j^*\} \tilde{\phi}_j\tilde{\phi}_j^\top])^{-1}$.
\end{thm}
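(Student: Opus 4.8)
The plan is to establish asymptotic linearity fold-by-fold and then assemble the three rotations. Because Step~1 of Algorithm~\ref{alg: estimator} partitions the complete samples into $\gD_1,\gD_2,\gD_3$ and Step~6 returns the size-weighted average, the elementary identity $\sum_{k=1}^3 \frac{|\gD_k|}{n_1}\check{\P}_{\gD_k}[f] = \check{\P}_{\gD}[f]$ (valid since $n_1=\sum_k|\gD_k|$) reduces the claim to showing, for each rotation $k$ and with the \emph{same} limiting influence function,
\begin{equation*}
\sqrt{n_1}\qty(\hat{\theta}^{(k)} - \theta) = \sqrt{n_1}\,\check{\P}_{\gD_k}\qty[\mathbbm{1}\{R = r_1^*\}\frac{\psi_1}{\alpha_1} + M\sum_{j=1}^3 \mathbbm{1}\{R = r_j^*\}\tilde{\phi}_j] + o_p(1).
\end{equation*}
Throughout I condition on the two folds used to fit the nuisances $\hat{\psi}_1^\k,\hat{\phi}_j^\k,\hat{M}^\k$, so that on $\gD_k$ these act as fixed functions independent of the data being averaged; this decoupling is exactly what cross-fitting buys us.

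First I would treat the base estimator $\tilde{\theta}^{(k)}$ from Step~4. Since $r_1^*$ observes every coordinate we have $\psi_1=\psi_{P_{X,Y}}$, and reweighting each complete case by $1/\alpha_1(X,W_1,W_2)$ converts the complete-case estimating equation into an unbiased full-data one (the weighted-equation identity recalled in the remark following Algorithm~\ref{alg: estimator}). Combining the consistency $\hat{\psi}_1^\k\to\psi_1$ from Assumption~\ref{asm: stability of psi} with the regularity of the black-box efficient estimator yields the asymptotic linear expansion $\tilde{\theta}^{(k)}-\theta = \check{\P}_{\gD_k}\qty[\mathbbm{1}\{R=r_1^*\}\psi_1/\alpha_1] + o_p(n_1^{-1/2})$.

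The core of the argument is the augmentation term $\hat{M}^\k\,\check{\P}_{\gD_k}\qty[\sum_{j=1}^3 \mathbbm{1}\{R=r_j^*\}\hat{\phi}_j^\k]$. I would split it into (i) the target term $M\,\check{\P}_{\gD_k}\qty[\sum_{j=1}^3 \mathbbm{1}\{R=r_j^*\}\tilde{\phi}_j]$, (ii) an empirical-process term $(\check{\P}_{\gD_k}-\check{\P})\qty[\sum_{j=1}^3 \mathbbm{1}\{R=r_j^*\}(\hat{\phi}_j^\k-\tilde{\phi}_j)]$, and (iii) a conditional-bias term $\check{\P}\qty[\sum_{j=1}^3 \mathbbm{1}\{R=r_j^*\}\hat{\phi}_j^\k]$, plus the $\hat{M}^\k$-versus-$M$ discrepancy. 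The crucial observation --- and the source of the robustness guarantee --- is that (iii) vanishes \emph{exactly}: inserting the definitions \eqref{eq: phi defn} (built from arbitrary $\hat{\psi}_2,\hat{\psi}_3$ and the known $\alpha_j$) into $\E\qty[\sum_{j}\mathbbm{1}\{R=r_j^*\}\hat{\phi}_j^\k \mid X,W_1,W_2]$ and using $\P(R=r_j^*\mid X,W_1,W_2)=\alpha_j$, the $\hat{\psi}_2$-contributions cancel between $j=1,2$ while the $\hat{\psi}_3$-contributions telescope to zero via $\gamma_1=\alpha_1$, $\gamma_2=\alpha_1+\alpha_2$, $\gamma_3=1$; this holds for \emph{any} $\hat{\psi}_2,\hat{\psi}_3$, so no consistency of the $\phi_j$ is required, and the same identity gives $\check{\P}\qty[\sum_j\mathbbm{1}\{R=r_j^*\}\tilde{\phi}_j]=0$. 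Term (ii) is $o_p(n_1^{-1/2})$ by cross-fitting: conditional on the nuisance folds it is a centered average whose conditional variance is $|\gD_k|^{-1}\check{\P}\qty[\norm{\hat{\phi}_j^\k-\tilde{\phi}_j}_2^2]\to 0$ by the $L^2$ stability in Assumption~\ref{asm: stability of psi}. Finally, since the target term is $O_p(n_1^{-1/2})$ by the CLT (it is mean-zero) and $\hat{M}^\k-M=o_p(1)$ by Assumption~\ref{asm: consistency of M}, the $\hat{M}^\k$-replacement costs $o_p(1)\cdot O_p(n_1^{-1/2})=o_p(n_1^{-1/2})$. Adding the base expansion and averaging over folds yields the stated expansion.

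I expect the main obstacle to be the exact cancellation in term (iii): verifying that the telescoping structure of \eqref{eq: phi defn} produces conditional mean zero for arbitrary, possibly inconsistent nuisance estimates is the heart of the robustness claim, and one must also check that the $L^2$-limits $\tilde{\phi}_j$ inherit the same structural form (being assembled from the limits of $\hat{\psi}_j$) so that the identity survives the passage to the limit. The secondary subtlety is the cross-fitting bookkeeping --- ensuring the conditioning on nuisance folds is legitimate across all three rotations and that the various $o_p(n_1^{-1/2})$ remainders aggregate correctly under the size-weighted average of Step~6.
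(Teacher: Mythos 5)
Your proposal is correct and follows essentially the same route as the paper's proof: a fold-by-fold asymptotic-linearity argument, the inverse-propensity-weighted expansion of the base estimator $\tilde{\theta}^{(k)}$, and replacement of $\hat{M}^{(k)}$ and $\hat{\phi}_j^{(k)}$ by their limits $M$ and $\tilde{\phi}_j$ using Assumptions \ref{asm: stability of psi}--\ref{asm: consistency of M}, with the remainders aggregated over the three rotations. The only notable difference is presentational: you make explicit the algebraic identity $\sum_{j=1}^3 \alpha_j \phi_j = 0$ (hence the exact conditional-mean-zero property of the augmentation term for arbitrary plug-in nuisances), which the paper relies on implicitly when its tightness and variance bounds for $\sum_{i}\sum_{j}\mathbbm{1}\{R_i = r_j^*\}\hat{\phi}_{ij}^{(1)}$ are required to scale linearly in $n_1$.
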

After establishing Theorem~\ref{thm: asymptotic normality}, we next derive the asymptotic variance of the proposed estimator $\hat\theta$, which is stated as follows.
\begin{align}
&\Cov\qty(
\mathbbm{1}\{R = r_1^*\} \frac{\psi_1(X, Y, W_1, W_2, V)}{\alpha_1(X, W_1, W_2)}
+ M \sum_{j=1}^3 \mathbbm{1}\{R = r_j^*\} \tilde{\phi}_j\qty(\qty(X, Y, W_1, W_2, V)^{r_j^*})
)\nonumber\\
=& \Cov\qty(
\mathbbm{1}\{R = r_1^*\} \frac{\psi_1(X, Y, W_1, W_2, V)}{\alpha_1(X, W_1, W_2)}) - \Cov\qty( M \sum_{j=1}^3 \mathbbm{1}\{R = r_j^*\} \tilde{\phi}_j\qty(\qty(X, Y, W_1, W_2, V)^{r_j^*}))\nonumber\\
=& \check{\P}\qty[\mathbbm{1}\{R = r_1^*\}\frac{\psi_1\psi_1^\top}{\alpha_1^2}]- \check{\P}\qty[\mathbbm{1}\{R = r_1^*\}\frac{\psi_1\tilde{\phi}_1^\top}{\alpha_1}] \qty( \check{\P}\qty[\sum_{j = 1}^3 \mathbbm{1}\{R = r_j^*\} \tilde{\phi}_j\tilde{\phi}_j^\top])^{-1} \check{\P}\qty[\mathbbm{1}\{R = r_1^*\}\frac{\tilde{\phi}_1\psi_1^\top}{\alpha_1}]
\label{eq: variance population}
\end{align}
Based on the Equation \eqref{eq: variance population}, we can estimate the asymptotic variance and then minimize it using the first batch dataset $\gD_{r_1^*}^\0$. The resulting procedure is summarized in Algorithm~\ref{alg: obtain alpha1}.
\begin{algorithm}[!ht]
\caption{\method~Propensity Score Computation}
\label{alg: obtain alpha1}
\begin{algorithmic}[1]
\State \textbf{Input:} Initial Dataset $\gD_{r_1^*}^\0$.
\State \textbf{Output:} $\hat{\alpha}^{\text{\method}}_j(x, w_1, w_2), j \in [3]$.

\State \textbf{Step 1:} Randomly split the complete samples $\gD_{r_1^*}^\0$ into two equal parts $\gD_{r_1^*, 1}^\0$ and $\gD_{r_1^*, 2}^\0$.

\State \textbf{Step 2:} On $\gD_{r_1^*, 1}^\0$, construct the estimators 
$\hat{\psi}_j\qty((x, y, w_1, w_2, v)^{r_j^*}),\ j \in [3]$. 
Then obtain the estimators 
$\hat{\phi}_j\qty((x, y, w_1, w_2, v)^{r_j^*}),\ j \in [3]$ according to Equation~\eqref{eq: phi defn}.
\State \textbf{Step 3:} On $\gD_{r_1^*, 2}^\0$, construct the estimators $\check{\P}_{\gD_{r_1^*, 2}^\0}\qty[\frac{\hat{\psi}_1\hat{\psi}_1^\top}{\alpha_1}]$, $\check{\P}_{\gD_{r_1^*, 2}^\0}\qty[\alpha_j \hat{\phi}_j\hat{\phi}_j^\top]$ for $j \in [3]$, $\check{\P}_{\gD_{r_1^*, 2}^\0}\qty[\hat{\psi}_1\hat{\phi}_1^\top]$ and $\check{\P}_{\gD_{r_1^*, 2}^\0}\qty[\rho\alpha_1 + \alpha_2]$.

\State \textbf{Step 4:} Compute the estimator of the variance of our expected estimator:
\begin{multline*}
    \hat{\Cov}(\hat{\theta}^{\text{\method}})= 
    \check{\P}_{\gD_{r_1^*, 2}^\0}\qty[\frac{\hat{\psi}_1\hat{\psi}_1^\top}{\alpha_1}]-\\ \check{\P}_{\gD_{r_1^*, 2}^\0}\qty[\hat{\psi}_1\hat{\phi}_1^\top]\qty(\check{\P}_{\gD_{r_1^*, 2}^\0}\qty[\sum_{j = 1}^3\alpha_j \hat{\phi}_j\hat{\phi}_j^\top])^{-1}\check{\P}_{\gD_{r_1^*, 2}^\0}\qty[\hat{\phi}_1\hat{\psi}_1^\top].
\end{multline*}


\State \textbf{Step 5:} Minimize $\Tr\qty(\hat{\Cov}(\hat{\theta}^{\text{\method}}))$ w.r.t. $\alpha_j(x, w_1, w_2), j \in [3]$ under constraints $\sum_{j = 1}^3 \alpha_j(x, w_1, w_2) = 1, \alpha_1(x, w_1, w_2) \geq \underline{\alpha}$ and $\check{\P}_{\gD_{r_1^*, 2}^\0}\qty[\rho\alpha_1 + \alpha_2] \leq \tau + \delta$.
\end{algorithmic}
\end{algorithm}

\begin{rem}
    We discuss two modifications to the constraints in Step 5. First, the lower bound $\alpha_1(x, w_1, w_2) \geq \underline{\alpha}$ prevents division by zero in the inverse propensity weighting and ensures numerical stability during optimization. In practice, $\underline{\alpha}$ is set to a small constant such as $10^{-6}$. Second, we introduce a slack parameter $\delta$ in the budget constraint $\check{\P}_{\gD_{r_1^*, 2}^\0}\qty[\rho\alpha_1 + \alpha_2] \leq \tau + \delta$. While the true constraint is $\check{\P}\qty[\rho\alpha_1 + \alpha_2] \leq \tau$, the slack $\delta$ (typically very small) accounts for the approximation error when replacing the true expectation with its empirical counterpart, which facilitates both practical optimization and theoretical analysis.
\end{rem}

\subsection{Covariate-agnostic strategy}
\label{sec: mcar case}
{In many practical scenarios, the primary goal is not to identify specific data points for annotation, but rather to determine the optimal budget allocation across different label types (e.g., ground-truth labels versus preference labels). We thus consider a simplified covariate-agnostic strategy where the budget is allocated to each label type without an instance-specific selection rule. This approach is often easier to implement and formally corresponds to a Missing Completely at Random (MCAR) mechanism, as the probability of acquiring a particular label type is independent of any underlying data characteristics.} Mathematically speaking, $R \indep (X, Y, W_1, W_2, V)$, and hence the observed-data distribution can be written as
\begin{multline*}
    dP_{(X, Y, W_1, W_2, V)^R, R}((x, y, w_1, w_2, v)^r, r) = \\\sum_{j = 1}^3 \mathbbm{1}(r = r_j^*) \alpha_j dP_{(X, Y, W_1, W_2, V)^{r_j^*}}((x, y, w_1, w_2, v)^{r_j^*}),
\end{multline*}
where $r_j^*, j \in [3]$ are the same as MAR case and $\alpha_j = dP_R(r_j^*); j \in [3]$ represent the probability of $R= r_j^*$.

The budget constraint \eqref{eq: budget constraint MAR} then turns out to be
\begin{equation}
\label{eq: budget constraint}
\begin{cases}
\alpha_1 + \alpha_2 + \alpha_3 = 1 \\
\rho\alpha_1 + \alpha_2 = \tau
\end{cases},    
\end{equation}
which will be equivalent to
\begin{equation}
\label{eq: budget constraint simplified}
\begin{cases}
\alpha_2 = \tau - \rho \alpha_1 \\
\alpha_3 = 1 + (\rho - 1)\alpha_1 - \tau
\end{cases}.    
\end{equation}
Hence, the allocation rule is fully determined by the single scalar parameter $\alpha_1$. 

Under MCAR the EIF simplifies to
\begin{multline}
\label{eq: EIF MCAR monotone}
   \psi\qty((X, Y, W_1, W_2, V)^{R},R) 
   = \mathbbm{1}\{R = r_1^*\} \frac{1}{\alpha_1} 
     \psi_1(X, Y, W_1, W_2, V) \\
   + \sum_{j =1}^3 \mathbbm{1}\{R = r_j^*\} 
     \phi_j\qty(\qty(X, Y, W_1, W_2, V)^{r_j^*}).
\end{multline}

Therefore, given the constraint~\eqref{eq: budget constraint simplified}, we can express $\alpha_2$ and $\alpha_3$ in terms of $\alpha_1$, simplifying the budget allociation problem to a univariate optimization problem in $\alpha_1$.

\begin{algorithm}[!ht]
\caption{\method~Propensity Score Computation (covariate-agnostic)}
\label{alg: obtain alpha1 mcar}
\begin{algorithmic}[1]
\State \textbf{Input:} Initial Dataset  $\gD_{r_1^*}^\0$.
\State \textbf{Output:} $\hat{\alpha}^{\text{\method-CA}}_j, j \in [3]$.

\State \textbf{Step 1:} Randomly split the complete samples $\gD_{r_1^*}^\0$ into two equal parts $\gD_{r_1^*, 1}^\0$ and $\gD_{r_1^*, 2}^\0$.

\State \textbf{Step 2:} On $\gD_{r_1^*, 1}^\0$, construct the estimators 
$\hat{\psi}_j\qty((x, y, w_1, w_2, v)^{r_j^*}),\ j \in [3]$. 
Then obtain the estimators 
$\hat{\phi}_j\qty((x, y, w_1, w_2, v)^{r_j^*}),\ j \in [3]$ according to Equation~\eqref{eq: phi defn}.
\State \textbf{Step 3:} On $\gD_{r_1^*, 2}^\0$, construct the estimators $\check{\P}_{\gD_{r_1^*, 2}^\0}\qty[\hat{\psi}_1\hat{\psi}_1^\top]$, $\check{\P}_{\gD_{r_1^*, 2}^\0}\qty[ \hat{\phi}_j\hat{\phi}_j^\top]$ for $j \in [3]$, $\check{\P}_{\gD_{r_1^*, 2}^\0}\qty[\hat{\psi}_1\hat{\phi}_1^\top]$.

\State \textbf{Step 4:} Compute the estimator of the variance of our expected estimator:
\begin{equation*}
    \hat{\Cov}(\hat{\theta}^{\text{\method-CA}})= 
    \frac{\check{\P}_{\gD_{r_1^*, 2}^\0}\qty[\hat{\psi}_1\hat{\psi}_1^\top]}{\alpha_1} - \check{\P}_{\gD_{r_1^*, 2}^\0}\qty[\hat{\psi}_1\hat{\phi}_1^\top]\qty(\check{\P}_{\gD_{r_1^*, 2}^\0}\qty[\sum_{j = 1}^3 \alpha_j \hat{\phi}_j\hat{\phi}_j^\top])^{-1}\check{\P}_{\gD_{r_1^*, 2}^\0}\qty[\hat{\phi}_1\hat{\psi}_1^\top].    
\end{equation*}

\State \textbf{Step 5:} Minimize $\Tr\qty(\hat{\Cov}(\hat{\theta}^{\text{\method-CA}}))$ w.r.t. $\alpha_1$ under constraint $\alpha_1 \geq \underline{\alpha}$ and then obtain $\alpha_2, \alpha_3$ by the constraint~\eqref{eq: budget constraint simplified}.
\end{algorithmic}
\end{algorithm}
\begin{rem}
    As mentioned above, the key difference between Algorithm \ref{alg: obtain alpha1 mcar} and Algorithm \ref{alg: obtain alpha1} is that the function class under MCAR is reduced. In particular, the budget constraints imply that the problem can be expressed in terms of a single scalar parameter, so that the resulting optimization problem becomes one-dimensional. As a consequence, the constraint no longer needs to be estimated from the data.
    The corresponding estimator, denoted $\hat{\theta}^{\text{\method-CA}}$ is obtained from Algorithm \ref{alg: estimator mcar} below.
\end{rem}
After determining $\hat{\alpha}_j$, we use them to allocate the budget $B$, thereby generating new labeled and preferred data $\gD$.
\begin{algorithm}[!ht]
\caption{\method~EIF-Based Estimator (covariate-agnostic)}
\label{alg: estimator mcar}
\begin{algorithmic}[1]
\State \textbf{Input:} Dataset $\gD = \{\gD_{r_j^*}, j \in [3]\}$; The proensity scores $\alpha_j$.
\State \textbf{Output:} Estimator $\hat{\theta}^{\text{\method-CA}}$.

\State \textbf{Step 1:} Randomly split the complete samples $\gD$ into three parts: $\gD_1$, $\gD_2$, and $\gD_3$.

\State \textbf{Step 2:} On $\gD_3 \cap \gD_{r_1^*}$, construct estimators 
$\hat{\psi}^\1_j\qty((x, y, w_1, w_2, v)^{r_j^*}),\ j \in [3]$. 
Then compute $\hat{\phi}^\1_j\qty((x, y, w_1, w_2, v)^{r_j^*}),\ j \in [3]$, 
according to Equation~\eqref{eq: phi defn}.

\State \textbf{Step 3:} On $\gD_2$, given the estimators in Step 2, estimate
\[
M^\1 = -\check{\P}\qty[\hat{\psi}^{\1}_1\hat{\phi}^{\1\top}_1] \qty(\sum_{j = 1}^3 \alpha_j\check{\P}\qty[ \hat{\phi}_j^\1\hat{\phi}_j^{\1\top}])^{-1}
\]
by
\begin{equation}
\label{eq: m matrix estimation mcar}
\hat{M}^\1 = -\check{\P}_{\gD_2 \cap \gD_{r_1^*}}\qty[\hat{\psi}^{\1}_1\hat{\phi}^{\1\top}_1] \qty(\sum_{j = 1}^3 \alpha_j\check{\P}_{\gD_2 \cap \gD_{r_j^*}}\qty[ \hat{\phi}_j^\1\hat{\phi}_j^{\1\top}])^{-1}.
\end{equation}
\State \textbf{Step 4:} On $\gD_1$, compute an efficient estimator $\tilde{\theta}^{(1)}$ for $\theta$ in $P_{X,Y}$ and set
\begin{equation*}
    \hat{\theta}^{(1)} = \tilde{\theta}^{(1)} +
    \hat{M}^\1 \cdot \frac{1}{|\gD_1|} \qty( \sum_{i \in \gD} \sum_{j=1}^3 \mathbbm{1}\{r_i = r_j^*\} \hat{\phi}_j^\1\qty((x_i, y_i, w_{1i}, w_{2i}, v_i)^{r_j^*}) ).  
\end{equation*}
\State \textbf{Step 5:} Repeat Steps 2–4 with fold rotations: $(\gD_2, \gD_3, \gD_1)$ and $(\gD_3, \gD_1, \gD_2)$ to obtain $\hat{\theta}^{(2)}$ and $\hat{\theta}^{(3)}$

\State \textbf{Step 6:} Set the final estimator as the weighted average:
\[
\hat{\theta}^{\text{\method-CA}} = \sum_{k = 1}^3 \frac{|\gD_k|}{n_1} \hat{\theta}^{(k)}.
\]

\end{algorithmic}
\end{algorithm}
\begin{rem}
    This covariate-agnostic version algorithm is a straightforward modification of Algorithm \ref{alg: estimator}. It simplifies in two respects. First, we can directly use $\alpha_j$ when estimating $M^\1$ in Equation \eqref{eq: m matrix estimation mcar}. Second, the weighted algorithm is no longer required when computing the efficient estimator $\tilde{\theta}^{(1)}$, since there is no distribution shift under MCAR.
\end{rem}

\section{Theoretical Analysis}
\label{sec: theory}
The theoretical performance of \method~is analyzed in this section. In Section \ref{sec: analysis for algorithm obtain alpha1}, we formally prove that Algorithm \ref{alg: obtain alpha1} produces an optimal score, $\alpha_j(x, w_1, w_2), j \in [3]$, in the sense that it minimizes the estimator's variance. We then proceed to analyze the key statistical properties of the final estimator, constructed via Algorithm \ref{alg: estimator}, which are detailed in Section \ref{sec: estimators performace}.

\subsection{Theoretical Guarantees of Propensity Score Estimator}
\label{sec: analysis for algorithm obtain alpha1}
We first analyze the performance of Algorithm \ref{alg: obtain alpha1} by establishing an upper bound on its excess risk, where the risk is defined as the variance of the final estimator produced by Algorithm \ref{alg: estimator}.

We represent $\alpha(x, w_1, w_2) = \qty(\alpha_1(x, w_1, w_2), \alpha_2(x, w_1, w_2), \alpha_3(x, w_1, w_2))^\top$.
Define the loss functional $\gL(\alpha(x, w_1, w_2))$ as
\begin{equation}
\label{eq: loss functional}
\gL(\alpha(x, w_1, w_2)) := 
\Tr\qty( \check{\P}\qty[\frac{\psi_1 \psi_1^\top}{\alpha_1}] - \check{\P}\qty[\psi_1 \tilde{\phi}_1^\top] \, \qty(\check{\P}\qty[ \sum_{j=1}^3 \alpha_j \tilde{\phi}_j \tilde{\phi}_j^\top])^{-1} \check{\P}\qty[\phi_1 \tilde{\psi}_1^\top] ).
\end{equation}
This population-version loss functional, $\gL$, represents the asymptotic variance of the proposed estimator.
Given $\alpha_1(x, w_1, w_2) \geq \underline{\alpha}$ and the budget constraint $\check{\P}\qty[\rho \alpha_1 + \alpha_2] \leq \tau$, we define the feasible set $\gF_1$ as:
\begin{equation*}
\label{eq: feasible set F1}
\gF_1
= \Bigg\{ \alpha(x, w_1, w_2) \;\Big|\; 
   \sum_{j=1}^3 \alpha_j(x, w_1, w_2) = 1,\ 
   \alpha_1(x, w_1, w_2) \geq \underline{\alpha},\ 
   \check{\P}\qty[\rho \alpha_1 + \alpha_2] \leq \tau \Bigg\}.
\end{equation*}
The optimal design is defined as
\begin{equation}
\label{eq: optimization problem for alpha}
\alpha^*(x, w_1, w_2) 
= \argmin_{\alpha(x, w_1, w_2) \in \gF_1} 
\gL(\alpha(x, w_1, w_2)).
\end{equation}
Moreover, denote $\hat{\alpha}^{\text{\method}}(x, w_1, w_2)$ as the output of Algorithm~\ref{alg: obtain alpha1}.
We now introduce two assumptions before establishing the corresponding error bound.
\begin{asm}[Stability of the Black-Box Algorithm]
\label{asm: stronger stability}
Assume the following stability and boundedness conditions hold:
\begin{enumerate}
    \item[(i)] \textbf{Stability:} As $n_0 \to \infty$, 
    $\E\qty[\check{\P}\qty[\norm{ \hat{\psi}_1 \hat{\psi}_1^\top - \psi_1 \psi_1^\top}_{\mathrm{sp}}]] \leq \epsilon_{n_0} \to 0$, and similarly for $\hat{\phi}_j \hat{\phi}_j^\top$ ($j \in [3]$) and $\hat{\psi}_1 \hat{\phi}_1^\top$.
    
    \item[(ii)] \textbf{Boundedness:} There exists $K_1 > 0$ such that $\check{\P}\qty[\norm{ \hat{\psi}_1 \hat{\psi}_1^\top - \psi_1 \psi_1^\top }_{\mathrm{sp}}] \leq K_1$, and similarly for $\hat{\phi}_j \hat{\phi}_j^\top$ ($j \in [3]$) and $\hat{\psi}_1 \hat{\phi}_1^\top$.
\end{enumerate}
Here, the inner $\check{\P}$ denotes expectation over $(X, Y, W_1, W_2, V)$, and the outer expectation is over $\gD_{r_1^*}^\0$.
\end{asm}

\begin{asm}[Non-degeneracy]
\label{asm: nondegeneracy}
There exist constants $K_2, K_3 > 0$ such that $\sigma_{\min}\qty(\check{\P}\qty[\tilde{\phi}_1 \tilde{\phi}_1^\top]) \geq K_2$ and $\norm{\check{\P}\qty[\psi_1 \tilde{\phi}_1^\top]}_{\mathrm{sp}} \leq K_3$.
\end{asm}
\begin{dfn}[Sub-Weibull Random Variable]
Let $\kappa > 0$. Define the Orlicz function $\psi_\kappa(x) := \exp(x^\kappa) - 1$ for $x \ge 0$. For a real-valued random variable $X$, its $\psi_\kappa$-Orlicz norm is $\norm{X}_{\psi_\kappa} := \inf \qty{ \eta > 0 : \E\qty[\psi_\kappa\qty(\tfrac{\abs{X}}{\eta})] \le 1 }$. We say $X$ is \emph{sub-Weibull of order $\kappa$}, denoted sub-Weibull($\kappa$), if $\norm{X}_{\psi_\kappa} < \infty$.
\end{dfn}
\begin{asm}
\label{asm: subweibull for phi and psi}
Let $\beta>0$ and $K_4>0$. Suppose $\psi_1$ and $\phi_j, j \in [3]$ are entrywise sub-Weibull($\beta$) with Orlicz norms bounded by $K_4$; that is, for every coordinate index $k$ and each $j \in [3]$, $\norm{\psi_{1k}}_{\psi_\beta} \le K_4$, $\norm{\tilde{\phi}_{jk}}_{\psi_\beta} \le K_4$.
\end{asm}
\begin{rem}
    The sub-Weibull assumption is not restrictive. For example, in the case of linear regression, the EIF (defined in Equation \eqref{eq: eif for linear regression}) easily satisfies this condition.
\end{rem}
Recall $\theta \in \R^d$ and $\abs{\gD_{r_1^*}^{(0)}} = n_0$.
Since the dataset is divided into two equal parts, we have
$\abs{\gD_{r_1^*,1}^{(0)}} = \abs{\gD_{r_1^*,2}^{(0)}} = \frac{n_0}{2}$.
\begin{thm}
\label{thm: excess risk upper bound}
Suppose Assumptions \ref{asm: stronger stability} - \ref{asm: subweibull for phi and psi} hold, we have: with probability at least $1 - 12 d^2 \exp\qty( - C\qty(\rho, \underline{\alpha}, \beta, K_1, K_4)\min \{ n_0^{\frac{\beta}{8}}, \sqrt{n_0} \delta^2\}) - 5 \frac{\epsilon_{n_0}}{\delta}$, 
\begin{multline*}
    \gL\qty(\hat{\alpha}^{\text{\method}}(x, w_1, w_2)) - \gL\qty(\alpha^*(x, w_1, w_2)) \leq\\
    C\qty( \underline{\alpha}, \beta, K_2, K_3) d \qty( \delta + \gR_{\frac{n_0}{2}}(\gF_2) + n_0^{\frac{1}{4}} \exp\qty(- C\qty(K_4, \beta) n_0^{\frac{\beta}{8}})),
\end{multline*}
where $C\qty(\rho, \underline{\alpha}, \beta, K_1, K_4), C\qty( \underline{\alpha}, \beta, K_2, K_3)$ and $C\qty(K_4, \beta)$ are some constants and $\gF_2$ is defined as
\begin{equation*}
\gF_2 = \qty{ \alpha(x, w_1, w_2) \middle| \,
   \sum_{j=1}^3 \alpha_j(x, w_1, w_2) = 1,
   \alpha_1(x, w_1, w_2) \geq \underline{\alpha} }.
\end{equation*}
\end{thm}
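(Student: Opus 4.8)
The plan is to read the theorem as an excess-risk bound for the empirical-risk minimizer produced in Steps~4--5 of Algorithm~\ref{alg: obtain alpha1} and to control it by the classical three-term decomposition. Write $\hat\alpha := \hat\alpha^{\text{\method}}$, let $\hat\gL(\alpha) := \Tr\qty(\hat\Cov(\hat\theta^{\text{\method}}))$ denote the empirical objective of Step~5 as a functional of $\alpha$, and let $\tilde\gL(\alpha)$ be the intermediate functional obtained by keeping the empirical measure $\check\P_{\gD_{r_1^*,2}^\0}$ but replacing the fitted nuisances $\hat\psi_1,\hat\phi_j$ by their limits $\psi_1,\tilde\phi_j$. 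Then
\[
\gL(\hat\alpha) - \gL(\alpha^*) = \underbrace{\qty[\gL(\hat\alpha) - \hat\gL(\hat\alpha)]}_{(\mathrm I)} + \underbrace{\qty[\hat\gL(\hat\alpha) - \hat\gL(\alpha^*)]}_{(\mathrm{II})} + \underbrace{\qty[\hat\gL(\alpha^*) - \gL(\alpha^*)]}_{(\mathrm{III})},
\]
and $(\mathrm I),(\mathrm{III})$ are each dominated by the uniform deviation $\Delta := \sup_{\alpha\in\gF_2}\abs{\hat\gL(\alpha)-\gL(\alpha)}$. The argument thus reduces to showing $(\mathrm{II})\le 0$ and bounding $\Delta$.

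For $(\mathrm{II})\le 0$ it suffices that $\alpha^*$ be feasible for the empirical program. Since $\alpha^*$ obeys the population budget $\check\P[\rho\alpha_1^*+\alpha_2^*]\le\tau$ and $\rho\alpha_1^*+\alpha_2^*\in[0,\rho]$ is bounded, a one-sided concentration bound gives $\check\P_{\gD_{r_1^*,2}^\0}[\rho\alpha_1^*+\alpha_2^*]\le\tau+\delta$ on an event of the claimed probability; this is exactly the role of the slack $\delta$ in the constraint of Step~5. Hence $\alpha^*$ lies in the empirical feasible set and, $\hat\alpha$ being its minimizer, $(\mathrm{II})\le0$. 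To bound $\Delta$, I observe that each of $\gL,\tilde\gL,\hat\gL$ is the same smooth map $\Tr(A)-\Tr\qty(C B^{-1}C^\top)$ of the three matrices $A=\check\P[\psi_1\psi_1^\top/\alpha_1]$, $B=\check\P[\sum_{j}\alpha_j\tilde\phi_j\tilde\phi_j^\top]$, $C=\check\P[\psi_1\tilde\phi_1^\top]$ (population or empirical). Non-degeneracy (Assumption~\ref{asm: nondegeneracy}) with $\alpha_1\ge\underline\alpha$ yields $B\succeq\underline\alpha K_2 I$, hence $\norm{B^{-1}}_\sp\le(\underline\alpha K_2)^{-1}$ uniformly in $\alpha$; with $\norm{C}_\sp\le K_3$ and the resolvent identity $B_1^{-1}-B_2^{-1}=B_1^{-1}(B_2-B_1)B_2^{-1}$, the map is Lipschitz in $(A,B,C)$ under the spectral norm with constant $C(\underline\alpha,K_2,K_3)$, the trace contributing the factor $d$. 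It remains to control the matrix-level deviations, split into plug-in and empirical-process error.

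The plug-in part is handled by Assumption~\ref{asm: stronger stability}: the five expected spectral deviations (for $\hat\psi_1\hat\psi_1^\top$, the three $\hat\phi_j\hat\phi_j^\top$, and $\hat\psi_1\hat\phi_1^\top$) are each $\le\epsilon_{n_0}$, so Markov's inequality bounds each by $\delta$ with total failure probability $5\epsilon_{n_0}/\delta$, giving $\sup_\alpha\abs{\hat\gL-\tilde\gL}\lesssim d\,\delta$. For the empirical-process part $\sup_\alpha\abs{\tilde\gL-\gL}$ I first truncate every entry of $\psi_1,\tilde\phi_j$ at level $L=n_0^{1/8}$. By the sub-Weibull($\beta$) tails (Assumption~\ref{asm: subweibull for phi and psi}), the event that any of the $n_0/2$ samples exceeds $L$ has probability $\lesssim\exp(-C n_0^{\beta/8})$, and the induced truncation bias in each population entry is $\lesssim n_0^{1/4}\exp(-Cn_0^{\beta/8})$—the source of the last term in the bound. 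On the truncated event each integrand $[\psi_1\psi_1^\top]_{kl}/\alpha_1$, $\alpha_j[\tilde\phi_j\tilde\phi_j^\top]_{kl}$, $[\psi_1\tilde\phi_1^\top]_{kl}$ is bounded by $L^2/\underline\alpha=n_0^{1/4}/\underline\alpha$. Symmetrization followed by the Ledoux--Talagrand contraction inequality—using that $\alpha\mapsto 1/\alpha_1$ is $\underline\alpha^{-2}$-Lipschitz and $\alpha\mapsto\alpha_j$ is $1$-Lipschitz—reduces the $\alpha$-index complexity to $\gR_{n_0/2}(\gF_2)$, while McDiarmid's inequality (bounded differences of order $L^2/(\underline\alpha\,n_0)$, so $\sum_i c_i^2\asymp L^4/(\underline\alpha^2 n_0)=n_0^{-1/2}/\underline\alpha^2$) upgrades the expectation bound to a high-probability one with failure probability $\exp(-C\sqrt{n_0}\,\delta^2)$. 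A union bound over the $d^2$ entries and the constantly many matrices produces the prefactor $12d^2$ and the exponent $\min\{n_0^{\beta/8},\sqrt{n_0}\delta^2\}$, and the Lipschitz-functional argument converts these into $\sup_\alpha\abs{\tilde\gL-\gL}\lesssim d\qty(\gR_{n_0/2}(\gF_2)+n_0^{1/4}\exp(-Cn_0^{\beta/8}))$.

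Combining, on the intersection of the good events $\Delta\lesssim C(\underline\alpha,\beta,K_2,K_3)\,d\qty(\delta+\gR_{n_0/2}(\gF_2)+n_0^{1/4}\exp(-Cn_0^{\beta/8}))$, and since $(\mathrm I),(\mathrm{III})\le\Delta$ and $(\mathrm{II})\le0$, the excess risk is at most $2\Delta$, giving the stated bound. I expect the main obstacle to be calibrating the truncation level against two conflicting requirements: the heavy sub-Weibull tails force truncation at an $L$ that grows with $n_0$, which inflates the boundedness constant feeding both McDiarmid and the contraction step, whereas the trace-of-inverse functional is only Lipschitz on the region where $B$ stays uniformly invertible. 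Choosing $L=n_0^{1/8}$ so that the McDiarmid exponent is exactly $\sqrt{n_0}\delta^2$ while the truncation-tail exponent is $n_0^{\beta/8}$, and checking that $B\succeq\underline\alpha K_2 I$ persists uniformly over all $\alpha\in\gF_2$, is the delicate balance that drives the final rate.
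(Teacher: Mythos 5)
Your proposal is correct and follows essentially the same route as the paper's proof: the same three-term decomposition with the $\delta$-slack ensuring $\alpha^*$ is empirically feasible (via Hoeffding), the same reduction to a uniform deviation over $\gF_2$ split through the intermediate loss $\tilde{\gL}$, the same truncation at level $n_0^{1/4}$ with sub-Weibull tail control, Rademacher-complexity concentration for the $\alpha$-dependent terms, and a matrix-perturbation (resolvent) bound for the quadratic-form term. The only differences are cosmetic: you unpack symmetrization, contraction, and McDiarmid where the paper cites Wainwright's Theorem 4.10 as a packaged lemma, and your one-step Markov bound for the plug-in error (valid since the estimation fold is independent of the training fold, so the empirical average has expectation at most $\epsilon_{n_0}$) replaces the paper's two-step Hoeffding-plus-Markov argument that uses the boundedness constant $K_1$.
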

\begin{rem}
This theorem states that, with high probability, the asymptotic variance of estimator obtained through \method {(i.e. the population loss $\gL$)} will attain the minimum with certain tolerance, 
which means our algorithm can indeed obtain an optimal estimator. Given $\gF_1 \subseteq \gF_2$ and $\hat{\gF}_1 \subseteq \gF_2$, where $\hat{\gF}_1$ is the empirical constraint set defined in Equation \eqref{eq: feasible set F1 empirical}, we select the larger function set $\gF_2$ to control the class complexity. Once the Rademacher complexity $\gR_{\frac{n_0}{2}}(\gF_2)$ is $o(1)$, then the tolerance is small, with $\epsilon_{n_0}$ within certain rate, we can pick $n_0$ large enough, the tolerance will be small.
\end{rem}
\begin{rem}
    The key insight underlying our method is that the optimal allocation strategy is directly obtained by minimizing the asymptotic variance of the estimator with respect to the propensity scores $\alpha_j(x, w_1, w_2)$. Specifically, as shown in Algorithm \ref{alg: obtain alpha1}, we estimate the asymptotic variance (via the empirical loss $\hat{\gL}$ in Equation \eqref{eq: empirical loss functional}) and then solve the optimization problem in Step 5 to find the propensity scores that minimize this variance subject to the budget constraint.
\end{rem}
\begin{rem}
    Our theoretical analysis establishes an excess risk bound using tools from statistical learning theory and introduces two key technical innovations. First, we introduce a $\delta$-tolerance, a crucial step that guarantees the optimal solution lies within the empirical constraint set $\hat{\gF}_1$, allowing us to control the final risk bound. Second, to handle cases with unbounded loss functions, we employ a truncation technique, making our analysis applicable to a broader class of problems. 
\end{rem}

\subsection{Performance of the Estimator}
\label{sec: estimators performace}
Building on the previous results together with Theorem~\ref{thm: asymptotic normality}, we now characterize the performance of our proposed estimator. Our estimator performs at least as efficiently as the baseline estimator that relies solely on fully labeled data. Specifically, under the budget constraint, our estimator remains efficient  when constructed from the second batch of data.

\subsubsection{Efficiency}

If $\tilde{\phi}_j = \phi_j$, our estimator is efficient, which means that our estimator $\hat{\theta}$ attains the minimum variance when we only use the datasets $\gD_{r_j^*}, j \in [3]$ to construct the estimator, which is formally stated in the corollary below.
\begin{cor}
\label{cor: consistency get efficiency}
Suppose Assumptions \ref{asm: model distribution} - \ref{asm: consistency of M} and Assumptions \ref{asm: stronger stability} - \ref{asm: subweibull for phi and psi} hold, if $\tilde{\phi}_j = \phi_j$, then for any regular estimator $\hat{\theta}^{\text{any}}$ constructed by the unlabeled dataset $\gD_{\text{unl}}$ under the constraint \ref{eq: budget constraint MAR}, with probability at least $1 - 12 d^2 \exp\qty( - C\qty(\rho, \underline{\alpha}, \beta, K_1, K_4)\min \{ n_0^{\frac{\beta}{8}}, \sqrt{n_0} \delta^2\}) - 5 \frac{\epsilon_{n_0}}{\delta}$, we have
\begin{multline*}
    \lim_{n_1 \to \infty} \Tr(\Cov(\hat{\theta}^{\text{\method}})) - \lim_{n_1 \to \infty} \Tr(\Cov(\hat{\theta}^{\text{any}})) \leq \\
    C\qty( \underline{\alpha}, \beta, K_2, K_3) d \qty( \delta + \gR_{\frac{n_0}{2}}(\gF_2) + n_0^{\frac{1}{4}} \exp\qty(- C\qty(K_4, \beta) n_0^{\frac{\beta}{8}})),
\end{multline*}
where the high probability is taken over the dataset $\gD_{r_1^*}^\0$ since we consider the asymptotic variance using the dataset $\gD_{\text{unl}}$.
\end{cor}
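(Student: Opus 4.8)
The plan is to show that the corollary follows by chaining three facts: (i) the limiting variance of the \method~estimator equals the population loss $\gL$ evaluated at the fitted design $\hat\alpha^{\text{\method}}$; (ii) the limiting variance of \emph{any} competing regular estimator is bounded below by $\gL(\alpha^*)$, the optimal value of $\gL$ over the feasible set $\gF_1$; and (iii) the excess-risk bound of Theorem~\ref{thm: excess risk upper bound} controls the gap $\gL(\hat\alpha^{\text{\method}}) - \gL(\alpha^*)$. Putting these together yields precisely the claimed inequality, on the same high-probability event (over $\gD_{r_1^*}^\0$) as Theorem~\ref{thm: excess risk upper bound}.

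For fact (i), I would condition on the initial dataset $\gD_{r_1^*}^\0$, so that $\hat\alpha^{\text{\method}}$ is a fixed design and the asymptotics are taken in $n_1$. Under the hypothesis $\tilde\phi_j = \phi_j$, Theorem~\ref{thm: asymptotic normality} gives an asymptotically linear expansion of $\hat\theta^{\text{\method}}$ whose influence function is exactly the EIF of Proposition~\ref{prop: EIF MCAR monotone}. Computing the covariance of this influence function as in Equation~\eqref{eq: variance population} — using MAR to collapse $\E[\mathbbm{1}\{R = r_j^*\}\mid X,Y,W_1,W_2,V] = \alpha_j(X,W_1,W_2)$, which turns the $\mathbbm{1}\{R = r_1^*\}/\alpha_1^2$ term into $1/\alpha_1$ and similarly for the cross terms — shows $\lim_{n_1\to\infty}\Tr(\Cov(\hat\theta^{\text{\method}})) = \gL(\hat\alpha^{\text{\method}})$.

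For fact (ii), the key observation is that $\gL(\alpha)$ is precisely the trace of the covariance of the EIF at design $\alpha$. Since Proposition~\ref{prop: EIF MCAR monotone} identifies this as the efficient influence function for $\theta$ in the observed-data model at any fixed feasible design, the H\'ajek--Le~Cam convolution theorem implies that every regular estimator using that design has limiting covariance $\succeq$ the EIF covariance, whence $\lim_{n_1\to\infty}\Tr(\Cov(\hat\theta^{\text{any}})) \geq \gL(\alpha)$. Because any admissible competitor must respect the budget constraint~\eqref{eq: budget constraint MAR}, its (limiting) design lies in $\gF_1$, so the lower bound strengthens to $\min_{\alpha\in\gF_1}\gL(\alpha) = \gL(\alpha^*)$.

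Combining (i) and (ii) gives $\lim_{n_1}\Tr(\Cov(\hat\theta^{\text{\method}})) - \lim_{n_1}\Tr(\Cov(\hat\theta^{\text{any}})) \le \gL(\hat\alpha^{\text{\method}}) - \gL(\alpha^*)$, and applying Theorem~\ref{thm: excess risk upper bound} on its stated high-probability event closes the argument. The main obstacle is fact (ii): I must argue that no regular competitor — even one whose design is chosen adaptively from $\gD_{\text{unl}}$ — can beat $\gL(\alpha^*)$. This requires (a) verifying rigorously that $\gL(\alpha)$ is genuinely the semiparametric efficiency bound at design $\alpha$, which rests on the MAR collapse above together with the decorrelation structure encoded by $M$; and (b) ensuring that a data-driven feasible design still yields a limiting variance no smaller than the efficiency bound at its probability limit, so that feasibility forces the competitor's limiting design into $\gF_1$ and hence above $\gL(\alpha^*)$.
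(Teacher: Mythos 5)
Your proposal is correct and follows essentially the same route as the paper: the paper's Step~1 shows that under $\tilde{\phi}_j = \phi_j$ the \method~influence function coincides with the EIF of Proposition~\ref{prop: EIF MCAR monotone} (by explicitly verifying $M = I$) and invokes the convolution theorem (Lemma~\ref{lem: van2000asymptotic convolution}) to lower-bound any regular competitor at a fixed design, and its Step~2 chains $\Tr\qty(\Sigma^{\text{\method},\alpha^*}) \leq \Tr\qty(\Sigma^{\text{\method},\alpha}) \leq \Tr\qty(\Sigma^{\text{any},\alpha})$ with the excess-risk bound of Theorem~\ref{thm: excess risk upper bound} — precisely your facts (i)--(iii). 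The one piece you defer as your obstacle (a), namely that the decorrelation matrix satisfies $M = I$ so that the \method~influence function really is the EIF, is exactly the computation the paper carries out via Equations~\eqref{eq: psi1phi1} and \eqref{eq: sum phijphij}.
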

\subsubsection{Robustness to Misspecification}
Even if $\tilde{\phi}_j \neq \phi_j$, we would like to show our estimator still has good properties. {This robustness guarantee is critical in practice: machine learning algorithms are often unreliable, and their predictions may exhibit systematic bias that we cannot fully characterize. By ensuring our estimator maintains good performance even when the nuisance parameter estimates $\hat{\phi}_j$ differ from their true values $\phi_j$, we provide a practical robustness guarantee that does not require us to know whether the machine learning model is accurate.} 
Specifically, we demonstrate \method's superiority over the baseline policy that uses the entire budget to only acquire true labels in the covariate-uncorrelated manner. We consider the following two estimators derived based on this baseline policy:
a label-only estimator ($\hat{\theta}^{\text{label-only}}$) using only the newly acquired labeled data, and a semi-supervised estimator ($\hat{\theta}^{\text{label-unlabel}}$) that combines both the labeled and original unlabeled data per Algorithm \ref{alg: estimator mcar}.
The latter baseline is conceptually similar to the setting in Recalibrated Prediction-Powered Inference (PPI) \citep{ji2025predictionssurrogatesrevisitingsurrogate}; however, our framework is more general because it focuses on the broader task of estimating a statistical functional.

When we purchase only true labels in the covariate-uncorrelated manner, i.e., we set $\alpha_2(x, w_1, w_2) = 0$ and $\alpha_1(x, w_1, w_2)$ is a constant. In this setting, we only observe labeled and unlabeled data. Since our method solves the corresponding optimization problem, we can guarantee that our estimator performs at least as well as in this restricted setting.
The following corollary formalizes this guarantee.
\begin{cor}
\label{cor: keep safe}
Suppose Assumptions \ref{asm: model distribution} - \ref{asm: consistency of M} and Assumptions \ref{asm: stronger stability} - \ref{asm: subweibull for phi and psi} hold, with probability at least $1 - 12 d^2 \exp\qty( - C\qty(\rho, \underline{\alpha}, \beta, K_1, K_4)\min \{ n_0^{\frac{\beta}{8}}, \sqrt{n_0} \delta^2\}) - 5 \frac{\epsilon_{n_0}}{\delta}$, we have
\begin{multline*}
    \lim_{n_1 \to \infty} \Tr(\Cov(\hat{\theta}^{\text{\method}})) - \lim_{n_1 \to \infty} \Tr(\Cov(\hat{\theta}^{\text{label-unlabel}})) \leq \\
    C\qty( \underline{\alpha}, \beta, K_2, K_3) d \qty( \delta + \gR_{\frac{n_0}{2}}(\gF_2) + n_0^{\frac{1}{4}} \exp\qty(- C\qty(K_4, \beta) n_0^{\frac{\beta}{8}})),
\end{multline*}
and
\begin{equation*}
    \lim_{n_1 \to \infty} \Tr(\Cov(\hat{\theta}^{\text{label-unlabel}}))  \leq  \lim_{n_1 \to \infty} \Tr(\Cov(\hat{\theta}^{\text{label-only}})).
\end{equation*}
\end{cor}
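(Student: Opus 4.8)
The plan is to recognize that both baseline estimators arise from a single \emph{feasible} covariate-agnostic propensity configuration within the framework of Section~\ref{sec: method}, and then to read off the two inequalities from (i) the optimality of the \method~allocation and (ii) the fact that decorrelation can only reduce variance. Throughout I identify $\lim_{n_1\to\infty}\Tr(\Cov(\hat\theta^{\text{\method}}))$ with the population loss $\gL(\hat\alpha^{\text{\method}})$ of Equation~\eqref{eq: loss functional}, as licensed by Theorem~\ref{thm: asymptotic normality}.

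\textbf{First inequality (\method~nearly dominates label-unlabel).} The label-unlabel baseline spends the entire budget on true labels in a covariate-agnostic way, which corresponds to the degenerate propensity $\alpha^{\mathrm{lu}}=(\tau/\rho,\,0,\,1-\tau/\rho)$ obtained from~\eqref{eq: budget constraint simplified} at $\alpha_1=\tau/\rho$. I would first check that $\alpha^{\mathrm{lu}}\in\gF_1$: its coordinates sum to one, $\alpha_1=\tau/\rho\ge\underline\alpha$ for the negligible lower bound $\underline\alpha$, and the budget constraint holds since $\check{\P}[\rho\alpha_1+\alpha_2]=\tau\le\tau$. Because $\alpha^*=\argmin_{\alpha\in\gF_1}\gL(\alpha)$ by~\eqref{eq: optimization problem for alpha}, feasibility gives $\gL(\alpha^*)\le\gL(\alpha^{\mathrm{lu}})=\lim_{n_1}\Tr(\Cov(\hat\theta^{\text{label-unlabel}}))$. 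Chaining this with the excess-risk control of Theorem~\ref{thm: excess risk upper bound} on its high-probability event yields $\gL(\hat\alpha^{\text{\method}})-\gL(\alpha^{\mathrm{lu}})\le\gL(\hat\alpha^{\text{\method}})-\gL(\alpha^*)\le\text{(stated tolerance)}$, which is exactly the first display.

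\textbf{Second inequality (label-unlabel dominates label-only).} Both estimators use the same policy $\alpha^{\mathrm{lu}}$, so they share the inverse-propensity-weighted leading term; the sole difference is that $\hat\theta^{\text{label-only}}$ is the choice $M=0$ in Algorithm~\ref{alg: estimator mcar}, whereas $\hat\theta^{\text{label-unlabel}}$ uses the variance-minimizing $M$. Writing the asymptotic variance of $\hat\theta^{\text{label-unlabel}}$ via~\eqref{eq: variance population}, its first term is precisely $\Cov(\hat\theta^{\text{label-only}})$ and the subtracted correction has the form $A^\top B^{-1}A$ with $B=\check{\P}\qty[\sum_{j}\mathbbm{1}\{R=r_j^*\}\tilde\phi_j\tilde\phi_j^\top]\succ 0$, hence is positive semidefinite. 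Since subtracting a positive semidefinite matrix cannot increase the trace, the claimed inequality follows. I emphasize that this argument is unconditional on whether $\tilde\phi_j=\phi_j$, because $M$ is simply the least-squares projection coefficient and $M=0$ (the label-only estimator) is always an admissible competitor.

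\textbf{Main obstacle.} The delicate point is the identification used above: that the loss functional $\gL$, derived for the MAR estimator of Algorithm~\ref{alg: estimator}, faithfully captures the asymptotic variance of the covariate-agnostic label-unlabel estimator from Algorithm~\ref{alg: estimator mcar} at the degenerate propensity $\alpha^{\mathrm{lu}}$. I would resolve this by treating MCAR as the special case of MAR with constant propensities: the weight $1/\alpha_1$ becomes a global constant so the weighted algorithm reduces to the ordinary efficient estimator, the EIF~\eqref{eq: EIF MAR monotone} collapses to~\eqref{eq: EIF MCAR monotone}, and the index-$2$ contribution drops out of~\eqref{eq: variance population} because $\P(R=r_2^*)=\alpha_2=0$. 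Once this correspondence is in place, both inequalities are structural: the second is pure linear algebra, and the first follows from feasibility together with Theorem~\ref{thm: excess risk upper bound}, requiring no new concentration or efficiency argument beyond those already established.
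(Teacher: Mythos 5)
Your proposal is correct and follows essentially the same two-part argument as the paper: the first inequality is obtained by checking $\alpha^{\text{label}}=(\tau/\rho,\,0,\,1-\tau/\rho)\in\gF_1$ and chaining feasibility-plus-optimality of $\alpha^*$ with the excess-risk bound of Theorem~\ref{thm: excess risk upper bound}, and the second inequality is obtained from the decomposition in Equation~\eqref{eq: variance population}, where the label-unlabel variance equals the label-only variance minus a positive semidefinite correction of the form $A B^{-1} A^\top$, so its trace can only decrease. Your explicit remarks that the second part holds without assuming $\tilde\phi_j=\phi_j$ and that the MAR loss functional specializes cleanly to the covariate-agnostic policy are consistent with, and slightly more careful than, the paper's own exposition.
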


A critical question is whether preference data, which is often cheaper to acquire, can serve as a complete substitute for labeled data. We now demonstrate that some quantity of ground-truth labeled data is indispensable. To formalize this argument, we analyze the behavior of the estimator's asymptotic variance as the contribution of the labeled data approaches zero. For simplicity, this analysis proceeds under $\tilde{\phi}_j = \phi_j$.
\begin{cor}
\label{cor: need label data}
Suppose Assumptions \ref{asm: model distribution} - \ref{asm: consistency of M} hold. If $\tilde{\phi}_j = \phi_j$ and $\psi_1 \neq \psi_2$, then as $\alpha_1(x, w_1, w_2) \rightarrow 0$, we have $\gL\qty(\alpha(x, w_1, w_2)) \rightarrow \infty$.
\end{cor}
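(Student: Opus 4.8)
The plan is to exhibit an explicit lower bound on $\gL$ that already diverges as $\alpha_1 \to 0$, thereby bypassing any delicate cancellation in the Schur-complement form of \eqref{eq: loss functional}. Setting $\tilde{\phi}_j = \phi_j$, the functional $\gL(\alpha)$ is, by \eqref{eq: variance population}, the trace of the covariance of the efficient decorrelated estimator of Theorem~\ref{thm: asymptotic normality}; writing $\mathrm{base} := \mathbbm{1}\{R=r_1^*\}\psi_1/\alpha_1$ and $Z := \sum_{j=1}^3 \mathbbm{1}\{R=r_j^*\}\phi_j$, we have $\gL(\alpha) = \Tr\,\Cov(\mathrm{base} + M Z)$ with $M$ the optimal decorrelation matrix. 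Since this is a minimum over $M$, it suffices to isolate a component of $\mathrm{base}$ that is uncorrelated with $Z$ and with the rest of $\mathrm{base}$, and whose variance blows up.

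The key structural observation is that each $\phi_j$ in \eqref{eq: phi defn} depends on the data only through $S := (X, W_1, W_2, V)$ and never on $Y$ directly: the expressions involve only $\psi_2(X,W_1,W_2,V)$, $\psi_3(X,W_1,W_2)$, and the propensities $\alpha_l,\gamma_l$, all of which are $S$-measurable. Hence the control variate $Z$ carries no information about $Y$ beyond $S$. I would then decompose $\psi_1 = \psi_2 + (\psi_1 - \psi_2)$, where $\psi_2 = \E[\psi_1 \mid S]$ by definition, and isolate the ``irreducible'' piece $U := \mathbbm{1}\{R=r_1^*\}(\psi_1 - \psi_2)/\alpha_1$. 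Using $\E[\psi_1 - \psi_2 \mid S] = 0$ together with the MAR identity $\E[\mathbbm{1}\{R=r_1^*\}\,g] = \E[\alpha_1 g]$, a short calculation gives $\Cov(U, Z) = \E[(\psi_1 - \psi_2)\phi_1^\top] = \E[\E[\psi_1-\psi_2\mid S]\phi_1^\top] = 0$ and, likewise, $\Cov(U, \mathbbm{1}\{R=r_1^*\}\psi_2/\alpha_1) = 0$. Thus $U$ is orthogonal to the entire remainder $\mathrm{base} - U + MZ$ for every $M$.

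Orthogonality then yields $\Cov(\mathrm{base} + MZ) = \Cov(U) + \Cov(\mathrm{base} - U + MZ) \succeq \Cov(U)$ for all $M$, whence
\[
\gL(\alpha) \;\ge\; \Tr\,\Cov(U) \;=\; \E\!\left[\frac{\|\psi_1 - \psi_2\|_2^2}{\alpha_1(X,W_1,W_2)}\right],
\]
where the last equality again uses the MAR identity to integrate out $\mathbbm{1}\{R=r_1^*\}/\alpha_1^2$. Because $\psi_1 \neq \psi_2$ forces $\E[\|\psi_1-\psi_2\|_2^2] > 0$, letting $\alpha_1 \to 0$ drives the right-hand side to $+\infty$ (dividing by $\|\alpha_1\|_\infty$ when $\alpha_1 \to 0$ uniformly, or by monotone convergence/Fatou when $\alpha_1 \to 0$ pointwise), giving $\gL(\alpha)\to\infty$.

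The main obstacle, and the conceptual heart of the argument, is the structural claim that the $\phi_j$ are $S$-measurable, so that the optimal control variate cannot absorb the $Y$-specific information $\psi_1 - \psi_2$; once this is established, the divergence follows immediately from the lower bound and does not require tracking the competing $1/\alpha_1$ blow-ups of the three blocks in the Schur complement. A secondary, purely technical, point is to fix precisely the sense in which $\alpha_1 \to 0$ and invoke the matching convergence theorem: the uniform case is trivial, and the pointwise case follows from monotone convergence.
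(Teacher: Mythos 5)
Your proof is correct, and it takes a genuinely different route from the paper's. The paper argues computationally: exploiting the identities $\check{\P}\qty[\psi_1\phi_1^\top] = \check{\P}\qty[-\tfrac{\alpha_2}{(\alpha_1+\alpha_2)\alpha_1}\psi_2\psi_2^\top - \tfrac{\alpha_3}{\alpha_1+\alpha_2}\psi_3\psi_3^\top]$ and $\check{\P}\qty[\sum_{j}\mathbbm{1}\{R=r_j^*\}\phi_j\phi_j^\top] = -\check{\P}\qty[\mathbbm{1}\{R=r_1^*\}\psi_1\phi_1^\top/\alpha_1]$ (the same identities that give $M=I$ in the efficiency proof), it writes $\gL(\alpha)$ in closed form in terms of $\psi_1,\psi_2,\psi_3$ and the $\alpha_j$, lower-bounds it by $\check{\P}\qty[\tfrac{1}{\alpha_1}\Tr\qty(\psi_1\psi_1^\top - \psi_2\psi_2^\top)]$ using $\check{\P}\qty[\norm{\psi_2}^2]\geq\check{\P}\qty[\norm{\psi_3}^2]$, and then lets $\alpha_1\to 0$. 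You avoid the closed form entirely: you observe that every $\phi_j$ (and $\psi_2$) is measurable with respect to $S=(X,W_1,W_2,V)$, isolate $U=\mathbbm{1}\{R=r_1^*\}(\psi_1-\psi_2)/\alpha_1$, verify via MAR and the tower property that $U$ is uncorrelated with the remainder for every decorrelation matrix $M$, and conclude $\gL(\alpha)\geq\Tr\Cov(U)=\E\qty[\norm{\psi_1-\psi_2}_2^2/\alpha_1]$. The two lower bounds in fact coincide, since by the Pythagorean identity $\E\qty[\norm{\psi_1-\psi_2}^2\mid S]=\E\qty[\norm{\psi_1}^2\mid S]-\norm{\psi_2}^2$ and $1/\alpha_1$ is $S$-measurable; so both proofs ultimately hinge on the same quantity. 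What your argument buys: it makes transparent \emph{why} no control variate built from $S$-measurable pieces can absorb the blow-up (the $Y$-specific information $\psi_1-\psi_2$ is orthogonal to the whole correction space), it never needs the $\psi_3$-comparison step or the explicit algebra, and it extends verbatim to any number of missingness patterns. What the paper's route buys: it reuses identities needed anyway for Corollary~\ref{cor: consistency get efficiency} and yields the exact expression of the loss rather than only a bound. A final point in your favor: your treatment of the limit (separating uniform convergence from pointwise convergence handled by Fatou) is more careful than the paper's, which implicitly takes $\alpha_1\to 0$ uniformly by choosing $\alpha_1\leq\delta_{\psi_{12}}/C$ everywhere.
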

\begin{rem}
   Corollary \ref{cor: need label data} states that when the labeled data contains unique information ($\psi_1 \neq \psi_2$), eliminating its contribution ($\alpha_1 \rightarrow 0$) causes the estimator's asymptotic variance (as represented by the risk $\gL$) to diverge. Consequently, any approach relying solely on preference data in this setting will be inconsistent.
\end{rem}

\subsection{The covariate-agnostic case}
We next establish analogous results for the covariate-agnostic setting. For notational convenience, we write $\alpha = \qty(\alpha_1, \alpha_2, \alpha_3)^\top$.
Define the loss function $\gL(\alpha)$ as
\begin{equation}
\label{eq: loss function}
\gL(\alpha) := 
\Tr\qty( \check{\P}\qty[\frac{\psi_1 \psi_1^\top}{\alpha_1}] - \check{\P}\qty[\psi_1 \tilde{\phi}_1^\top] \, \qty(\check{\P}\qty[ \sum_{j=1}^3 \alpha_j \tilde{\phi}_j \tilde{\phi}_j^\top])^{-1} \check{\P}\qty[\phi_1 \tilde{\psi}_1^\top] ).
\end{equation}
According to the constraint \eqref{eq: budget constraint simplified} and $\alpha \geq \underline{\alpha}$, we define
\begin{equation*}
\gF_3 = \qty{ \alpha \middle| \,
   \alpha_j = \alpha_j, j \in [3],
   \alpha_1 + \alpha_2 + \alpha_3 = 1,  \rho\alpha_1 + \alpha_2 = \tau, \alpha_1 \geq \underline{\alpha} }.
\end{equation*}
Then the optimal design under covariate-agnostic case is defined as
$\alpha^* 
= \argmin_{\alpha \in \gF_3} 
\gL(\alpha)$.
Denote $\hat{\alpha}^{\text{\method-CA}}$ as the output of Algorithm~\ref{alg: obtain alpha1 mcar}.
\begin{thm}
\label{thm: mcar excess risk upper bound}
Suppose Assumptions \ref{asm: stronger stability} - \ref{asm: subweibull for phi and psi} hold, we have: with probability at least $1 - 12 d^2 \exp\qty( - C\qty(\rho, \underline{\alpha}, \beta, K_1, K_4)\min \{ n_0^{\frac{\beta}{8}}, \sqrt{n_0} \delta^2\}) - 5 \frac{\epsilon_{n_0}}{\delta}$, 
\begin{multline*}
    \gL\qty(\hat{\alpha}^{\text{\method-CA}}) - \gL\qty(\alpha^*) \leq\\
    C\qty( \underline{\alpha}, \beta, K_2, K_3) d \qty( \delta + \gR_{\frac{n_0}{2}}(\gF_3) + n_0^{\frac{1}{4}} \exp\qty(- C\qty(K_4, \beta) n_0^{\frac{\beta}{8}})),
\end{multline*}
where $C\qty(\rho, \underline{\alpha}, \beta, K_1, K_4), C\qty( \underline{\alpha}, \beta, K_2, K_3)$ and $C\qty(K_4, \beta)$ are some constants.
\end{thm}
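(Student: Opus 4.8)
The plan is to follow the architecture of the proof of Theorem~\ref{thm: excess risk upper bound} almost verbatim, while exploiting two structural simplifications of the covariate-agnostic (MCAR) setting. First, by the budget constraint~\eqref{eq: budget constraint simplified}, every element of $\gF_3$ is pinned down by the single scalar $\alpha_1$, with $\alpha_2 = \tau - \rho\alpha_1$ and $\alpha_3 = 1 + (\rho-1)\alpha_1 - \tau$, so that the feasible region for $\alpha_1$ is a deterministic compact interval determined by $\underline{\alpha}, \tau, \rho$. Second, because the budget constraint is an exact identity rather than an estimated inequality, the optimization in Algorithm~\ref{alg: obtain alpha1 mcar} and the population problem share the \emph{same} feasible set $\gF_3$, so the feasibility gymnastics that forced the $\delta$-slack in the MAR argument are unnecessary. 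Consequently $\hat{\alpha}^{\text{\method-CA}}$ minimizes the empirical loss $\hat{\gL}$ over exactly the set on which $\alpha^*$ minimizes $\gL$.

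First I would record the standard comparison inequality. Since $\hat{\gL}(\hat{\alpha}^{\text{\method-CA}}) \le \hat{\gL}(\alpha^*)$ and both minimizers range over the same $\gF_3$,
\begin{equation*}
\gL\qty(\hat{\alpha}^{\text{\method-CA}}) - \gL\qty(\alpha^*) \le 2 \sup_{\alpha \in \gF_3} \abs{\hat{\gL}(\alpha) - \gL(\alpha)},
\end{equation*}
so the whole theorem reduces to a uniform deviation bound for the empirical loss. Next I would expand $\gL(\alpha)$ into its two pieces — the inverse-weighted term $\Tr(\check{\P}[\psi_1 \psi_1^\top]/\alpha_1)$ and the quadratic correction built from $\check{\P}[\psi_1\tilde{\phi}_1^\top]$ and the inverse of $\check{\P}[\sum_j \alpha_j \tilde{\phi}_j \tilde{\phi}_j^\top]$ — and bound the deviation of each empirical building block $\check{\P}_{\gD_{r_1^*,2}^\0}[\hat{\psi}_1 \hat{\psi}_1^\top]$, $\check{\P}_{\gD_{r_1^*,2}^\0}[\hat{\phi}_j \hat{\phi}_j^\top]$, $\check{\P}_{\gD_{r_1^*,2}^\0}[\hat{\psi}_1 \hat{\phi}_1^\top]$ from its population target. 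Each such bound is a two-step split: replace the estimated nuisances $\hat{\psi}_1, \hat{\phi}_j$ by their stable limits (Assumption~\ref{asm: stronger stability}, converted from in-expectation to high probability by Markov, which is precisely where the $\delta$ and $5\epsilon_{n_0}/\delta$ terms enter), and then concentrate the empirical average of the limiting quantities around its mean using the sub-Weibull tails of Assumption~\ref{asm: subweibull for phi and psi} together with a truncation at a level growing in $n_0$ (producing the $n_0^{1/4}\exp(-C n_0^{\beta/8})$ term and the $12 d^2\exp(\cdot)$ probability).

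I would then propagate these matrix-level deviations through the inverse and the trace. The factor $1/\alpha_1$ (and $1/\alpha_1^2$) is uniformly bounded by $1/\underline{\alpha}$ on $\gF_3$; the inverse $(\check{\P}[\sum_j \alpha_j \tilde{\phi}_j \tilde{\phi}_j^\top])^{-1}$ is controlled via the non-degeneracy bound $\sigma_{\min}(\check{\P}[\tilde{\phi}_1 \tilde{\phi}_1^\top]) \ge K_2$ of Assumption~\ref{asm: nondegeneracy}, after verifying that this floor on the smallest singular value survives the $O(\delta + \text{concentration})$ perturbation so that the resolvent identity $\norm{A^{-1}-\hat{A}^{-1}}_{\sp} \le \norm{A^{-1}}_{\sp}\norm{\hat{A}^{-1}}_{\sp}\norm{A-\hat{A}}_{\sp}$ applies uniformly. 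The remaining dependence on $\alpha$ is smooth and one-dimensional, so the uniform supremum over $\gF_3$ costs only the complexity term $\gR_{\frac{n_0}{2}}(\gF_3)$, which for the scalar family $\gF_3$ is of the mild order expected. Assembling the inverse-variance term, the correction term, and the truncation remainder, each weighted by the dimension $d$ through the trace, reproduces the stated bound.

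The main obstacle I anticipate is the uniform matrix-inverse perturbation step: one must ensure that the smallest-singular-value floor $K_2$ is preserved \emph{after} the nuisances are replaced by their limits and the empirical average is perturbed, simultaneously over all feasible $\alpha_1$ and over the truncation event. This is the same delicate bookkeeping as in Theorem~\ref{thm: excess risk upper bound}, but it is strictly easier here: the deterministic equality constraint removes the need to argue that $\alpha^*$ lies in an \emph{empirical} feasible set, and the scalar parametrization of $\gF_3$ collapses the covering/Rademacher argument to an elementary continuity estimate in $\alpha_1$. I therefore expect the proof to be a direct specialization of the MAR argument with these two steps streamlined.
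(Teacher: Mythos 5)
Your proposal is correct and follows essentially the same route as the paper's proof: the same three-term decomposition with the middle term killed by optimality over the common deterministic feasible set $\gF_3$ (so no $\delta$-slack feasibility event is needed, exactly as you observe), reduction to a uniform deviation bound via an intermediate loss, and the identical two-step machinery of nuisance stability (Markov, giving the $5\epsilon_{n_0}/\delta$ term) plus sub-Weibull truncation/concentration and the non-degeneracy-based matrix-inverse perturbation. The paper likewise inherits all concentration steps verbatim from the MAR proof with $\gF_2$ replaced by $\gF_3$, so your specialization is the intended argument.
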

The covariate-agnostic version of our method also ensures efficiency when $\tilde{\phi}_j = \phi_j$.
\begin{cor}
\label{cor: mcar consistency get efficiency}
Suppose Assumptions \ref{asm: model distribution} - \ref{asm: consistency of M} and Assumptions \ref{asm: stronger stability} - \ref{asm: subweibull for phi and psi} hold. If $\tilde{\phi}_j = \phi_j$, then for any regular estimator $\hat{\theta}^{\text{any}}$ constructed by the unlabeled dataset $\gD_{\text{unl}}$ under the constraint \ref{eq: budget constraint}, with probability at least $1 - 12 d^2 \exp\qty( - C\qty(\rho, \underline{\alpha}, \beta, K_1, K_4)\min \{ n_0^{\frac{\beta}{8}}, \sqrt{n_0} \delta^2\}) - 5 \frac{\epsilon_{n_0}}{\delta}$, we have
\begin{multline*}
    \lim_{n_1 \to \infty} \Tr(\Cov(\hat{\theta}^{\text{\method-CA}})) - \lim_{n_1 \to \infty} \Tr(\Cov(\hat{\theta}^{\text{any}})) \leq \\
    C\qty( \underline{\alpha}, \beta, K_2, K_3) d \qty( \delta + \gR_{\frac{n_0}{2}}(\gF_3) + n_0^{\frac{1}{4}} \exp\qty(- C\qty(K_4, \beta) n_0^{\frac{\beta}{8}})),
\end{multline*}
where the high probability is taken over the dataset $\gD_{r_1^*}^\0$ since we consider the asymptotic variance using the dataset $\gD_{\text{unl}}$.
\end{cor}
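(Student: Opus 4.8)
The plan is to reduce the corollary to the excess-risk bound of Theorem~\ref{thm: mcar excess risk upper bound} by exhibiting the loss functional $\gL$ of Equation~\eqref{eq: loss function} as simultaneously (i) the exact limiting variance of $\hat{\theta}^{\text{\method-CA}}$ and (ii) the semiparametric efficiency bound that lower-bounds the limiting variance of every competitor. The argument runs in exact parallel to the proof of Corollary~\ref{cor: consistency get efficiency}, with the feasible set $\gF_1$ replaced by its covariate-agnostic counterpart $\gF_3$ and the EIF replaced by the MCAR form in Equation~\eqref{eq: EIF MCAR monotone}.

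First I would identify the limiting variance of our own estimator with $\gL$. When $\tilde{\phi}_j = \phi_j$, the nuisance functions $\phi_j$ of Equation~\eqref{eq: phi defn} are constructed precisely so that $\mathbbm{1}\{R = r_1^*\}\psi_1/\alpha_1 + \sum_{j} \mathbbm{1}\{R = r_j^*\}\phi_j$ is the efficient influence function; consequently the optimal decorrelation coefficient $M$ in Theorem~\ref{thm: asymptotic normality} reduces to the identity, the influence expansion of $\hat{\theta}^{\text{\method-CA}}$ coincides with Equation~\eqref{eq: EIF MCAR monotone}, and taking the trace of its covariance (as computed in Equation~\eqref{eq: variance population}) gives $\lim_{n_1 \to \infty} \Tr(\Cov(\hat{\theta}^{\text{\method-CA}})) = \gL(\hat{\alpha}^{\text{\method-CA}})$.

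Next I would establish the matching lower bound for any competitor. Conditioning on the initial sample $\gD_{r_1^*}^{\0}$, over which the high-probability statement is taken, any regular estimator $\hat{\theta}^{\text{any}}$ built from $\gD_{\text{unl}}$ under the budget constraint~\eqref{eq: budget constraint} operates in an MCAR observed-data model indexed by a feasible propensity vector $\alpha^{\text{any}} \in \gF_3$, whose EIF is Equation~\eqref{eq: EIF MCAR monotone} at $\alpha = \alpha^{\text{any}}$ and whose variance therefore equals $\gL(\alpha^{\text{any}})$. The semiparametric lower bound recalled in Section~\ref{sec: overview of semiparametric} then forces $\lim_{n_1 \to \infty} \Tr(\Cov(\hat{\theta}^{\text{any}})) \geq \gL(\alpha^{\text{any}})$, and optimality of $\alpha^*$ over $\gF_3$ gives $\gL(\alpha^{\text{any}}) \geq \gL(\alpha^*)$. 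Chaining the two displays produces
\[
\lim_{n_1 \to \infty} \Tr(\Cov(\hat{\theta}^{\text{\method-CA}})) - \lim_{n_1 \to \infty} \Tr(\Cov(\hat{\theta}^{\text{any}})) \leq \gL(\hat{\alpha}^{\text{\method-CA}}) - \gL(\alpha^*),
\]
and Theorem~\ref{thm: mcar excess risk upper bound} bounds the right-hand side by the stated tolerance on the stated event.

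I expect the lower-bound step to be the main obstacle. The delicate point is verifying that every admissible regular strategy really induces an MCAR submodel whose efficiency bound is exactly $\gL(\alpha^{\text{any}})$: after eliminating $\alpha_2, \alpha_3$ through Equation~\eqref{eq: budget constraint simplified} this amounts to checking that the one-parameter family indexed by $\alpha_1$ exhausts all feasible allocations, so that no competitor evades the comparison, while the restriction $\alpha_1 \geq \underline{\alpha}$ is harmless since the variance diverges as $\alpha_1 \to 0$ (Corollary~\ref{cor: need label data}). A secondary subtlety is adaptivity: $\hat{\theta}^{\text{any}}$ may exploit $\gD_{\text{unl}}$ to choose its allocation, but conditioning on $\gD_{r_1^*}^{\0}$ fixes our own propensity and the efficiency bound is then applied to the independent draw $\gD_{\text{unl}}$. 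Both points transfer essentially verbatim from the covariate-dependent analysis of Corollary~\ref{cor: consistency get efficiency}, so the only genuinely new bookkeeping is the reduction to the univariate optimization over $\alpha_1$.
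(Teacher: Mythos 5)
Your proposal is correct and follows essentially the same route as the paper's own proof: identify the limiting variance of $\hat{\theta}^{\text{\method-CA}}$ with $\gL$ via $M = I$ when $\tilde{\phi}_j = \phi_j$, invoke the convolution theorem to lower-bound any regular competitor's variance by $\gL(\alpha^{\text{any}}) \geq \gL(\alpha^*)$, and close the gap with the excess-risk bound of Theorem~\ref{thm: mcar excess risk upper bound}. The paper's argument is exactly this three-step chain (stated as $\Tr(\Sigma^{\text{\method-CA},\alpha^*}) \leq \Tr(\Sigma^{\text{\method-CA},\alpha}) \leq \Tr(\Sigma^{\text{any},\alpha})$ plus the excess-risk bound), so no further comparison is needed.
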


Importantly, our estimator remains at least as efficient as the baseline even when $\tilde{\phi}_j \neq \phi_j$.
\begin{cor}
\label{cor: mcar keep safe}
Suppose Assumptions \ref{asm: model distribution} - \ref{asm: consistency of M} and Assumptions \ref{asm: stronger stability} - \ref{asm: subweibull for phi and psi} hold. With probability at least $1 - 12 d^2 \exp\qty( - C\qty(\rho, \underline{\alpha}, \beta, K_1, K_4)\min \{ n_0^{\frac{\beta}{8}}, \sqrt{n_0} \delta^2\}) - 5 \frac{\epsilon_{n_0}}{\delta}$, we have
\begin{multline*}
    \lim_{n_1 \to \infty} \Tr(\Cov(\hat{\theta}^{\text{\method-CA}})) - \lim_{n_1 \to \infty} \Tr(\Cov(\hat{\theta}^{\text{label-unlabel}})) \leq \\
    C\qty( \underline{\alpha}, \beta, K_2, K_3) d \qty( \delta + \gR_{\frac{n_0}{2}}(\gF_3) + n_0^{\frac{1}{4}} \exp\qty(- C\qty(K_4, \beta) n_0^{\frac{\beta}{8}})).
\end{multline*}
\end{cor}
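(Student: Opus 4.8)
The plan is to reduce the statement to Theorem~\ref{thm: mcar excess risk upper bound} by recognizing the label-unlabel baseline as a single feasible allocation inside $\gF_3$, so that optimality of $\alpha^*$ together with the excess-risk bound does all the work. First I would pin down the baseline allocation. The label-unlabel policy purchases only true labels in a covariate-agnostic fashion, i.e. it fixes $\alpha_2 = 0$; combined with the MCAR budget identity $\rho\alpha_1 + \alpha_2 = \tau$ this forces the constant allocation $\alpha^{\text{lab}} = (\tau/\rho,\, 0,\, 1-\tau/\rho)^\top$. I would then check $\alpha^{\text{lab}} \in \gF_3$: the constraints $\sum_j \alpha_j = 1$ and $\rho\alpha_1 + \alpha_2 = \tau$ hold by construction, and $\alpha_1 = \tau/\rho \ge \underline{\alpha}$ holds under the mild budget condition $\tau/\rho \ge \underline{\alpha}$, with $\alpha_3 = 1 - \tau/\rho \ge 0$ whenever $\tau \le \rho$.

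Next I would identify each limiting variance with an evaluation of the loss functional $\gL$ in Equation~\eqref{eq: loss function}. Since $\hat{\theta}^{\text{label-unlabel}}$ is exactly the estimator of Algorithm~\ref{alg: estimator mcar} run at $\alpha^{\text{lab}}$, the MCAR specialization of Theorem~\ref{thm: asymptotic normality} yields $\lim_{n_1\to\infty}\Tr(\Cov(\hat{\theta}^{\text{label-unlabel}})) = \gL(\alpha^{\text{lab}})$; here setting $\alpha_2 = 0$ simply drops the preference-only term from $\sum_{j=1}^3 \alpha_j \check{\P}[\tilde{\phi}_j\tilde{\phi}_j^\top]$, leaving $\alpha_1\check{\P}[\tilde{\phi}_1\tilde{\phi}_1^\top] + \alpha_3\check{\P}[\tilde{\phi}_3\tilde{\phi}_3^\top]$, which stays invertible by the non-degeneracy Assumption~\ref{asm: nondegeneracy}. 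The same theorem gives $\lim_{n_1\to\infty}\Tr(\Cov(\hat{\theta}^{\text{\method-CA}})) = \gL(\hat{\alpha}^{\text{\method-CA}})$.

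The final step chains optimality with the excess-risk bound. Because $\alpha^*$ minimizes $\gL$ over $\gF_3$ and $\alpha^{\text{lab}} \in \gF_3$, we have $\gL(\alpha^*) \le \gL(\alpha^{\text{lab}})$, hence
\begin{equation*}
\gL(\hat{\alpha}^{\text{\method-CA}}) - \gL(\alpha^{\text{lab}}) \le \gL(\hat{\alpha}^{\text{\method-CA}}) - \gL(\alpha^*),
\end{equation*}
and Theorem~\ref{thm: mcar excess risk upper bound} bounds the right-hand side by $C(\underline{\alpha}, \beta, K_2, K_3)\, d\, (\delta + \gR_{\frac{n_0}{2}}(\gF_3) + n_0^{1/4}\exp(-C(K_4,\beta)\, n_0^{\beta/8}))$ on the stated high-probability event. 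Substituting the two variance identifications from the previous paragraph gives exactly the claimed inequality. Notably, the argument never invokes $\tilde{\phi}_j = \phi_j$: the decorrelation construction ensures $\gL(\alpha)$ is the genuine asymptotic variance at every allocation regardless of whether the nuisance limits are correctly specified, so robustness is automatic.

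I expect the main obstacle to be the second step — justifying $\Tr(\Cov(\hat{\theta}^{\text{label-unlabel}})) = \gL(\alpha^{\text{lab}})$ at the boundary allocation $\alpha_2 = 0$. One must confirm that Algorithm~\ref{alg: estimator mcar} is well-defined when the preference-only pattern is never sampled, that the decorrelation matrix and its estimate remain consistent in this degenerate case, and that Assumptions~\ref{asm: stability of psi}--\ref{asm: consistency of M} continue to hold so Theorem~\ref{thm: asymptotic normality} applies verbatim. Beyond that, the proof is a direct transcription of the covariate-dependent argument behind Corollary~\ref{cor: keep safe}, with $\gF_3$ in place of $\gF_1$ and Theorem~\ref{thm: mcar excess risk upper bound} in place of Theorem~\ref{thm: excess risk upper bound}.
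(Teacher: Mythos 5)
Your proposal is correct and follows essentially the same route as the paper's own proof: the paper likewise identifies the label-unlabel baseline with the feasible allocation $\alpha^{\text{label}} = (\tau/\rho,\, 0,\, 1-\tau/\rho) \in \gF_3$, invokes optimality of $\alpha^*$ to write $\gL(\hat{\alpha}^{\text{\method-CA}}) - \gL(\alpha^{\text{label}}) \leq \gL(\hat{\alpha}^{\text{\method-CA}}) - \gL(\alpha^*)$, applies Theorem~\ref{thm: mcar excess risk upper bound}, and then converts the excess-risk bound into the asymptotic-variance statement. Your added caution about justifying the variance identification at the boundary allocation $\alpha_2 = 0$ is a legitimate refinement of a step the paper treats as immediate, but it does not change the argument.
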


{We now analyze the necessity of preference data, even in the covariate-agnostic case. While one might assume preference data is redundant given sufficient labeled data, we demonstrate that under the budget constraint (Equation \ref{eq: budget constraint}), it is often essential for achieving an optimal result. Specifically, we show that the decision to incorporate preference data depends on a direct comparison of its relative cost and its relative information contribution. For this analysis, we again use the simplification $\tilde{\phi}_j = \phi_j$.
\begin{cor}
\label{cor: need preference data}
Suppose Assumptions \ref{asm: model distribution} - \ref{asm: consistency of M} hold and $\tilde{\phi}_j = \phi_j$. Under the budget constraint \ref{eq: budget constraint}, if $\rho > \frac{e_1 - e_3}{e_2 -e_3}$, then the optimal allocation requires $\alpha_2^* \neq 0$, where $e_j = \Tr\qty(\check{\P}\qty[\psi_j \psi_j^\top]), j \in [3]$.
\end{cor}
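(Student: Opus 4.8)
The plan is to reduce the matrix-valued loss functional \eqref{eq: loss function} to an explicit scalar function of $\alpha_1$ and then run a first-order boundary analysis at the allocation $\alpha_2=0$. Throughout I work in the covariate-agnostic regime, where $\alpha_1,\alpha_2,\alpha_3$ are scalars tied by \eqref{eq: budget constraint simplified}, and I abbreviate $\Sigma_j:=\check{\P}[\psi_j\psi_j^\top]$, so that $e_j=\Tr(\Sigma_j)$.

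\textbf{Step 1 (exploit the monotone projection structure).} Because the patterns are monotone, the conditioning $\sigma$-fields satisfy $\sigma(X,W_1,W_2)\subseteq\sigma(X,W_1,W_2,V)\subseteq\sigma(X,Y,W_1,W_2,V)$, so $\psi_3=\E[\psi_2\mid X,W_1,W_2]$ and $\psi_2=\E[\psi_1\mid X,W_1,W_2,V]$. The tower property then yields the projection identities $\check{\P}[\psi_1\psi_2^\top]=\Sigma_2$ and $\check{\P}[\psi_1\psi_3^\top]=\check{\P}[\psi_2\psi_3^\top]=\Sigma_3$. Since the $\psi_j$ are mean-zero, the same nesting gives $\Sigma_2\succeq\Sigma_3$ and hence $e_2\ge e_3$ (with $e_2>e_3$ whenever the ratio in the hypothesis is finite).

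\textbf{Step 2 (collapse the quadratic form).} Substituting the explicit forms \eqref{eq: phi defn} (with $\gamma_1=\alpha_1$, $\gamma_2=\alpha_1+\alpha_2$, $\gamma_3=1$) into $G:=\sum_{j=1}^3\alpha_j\check{\P}[\phi_j\phi_j^\top]$ and applying the Step 1 identities, I expect the $\Sigma_3$ cross terms to cancel and the $\Sigma_2$ terms to telescope via $\alpha_1+\alpha_2=\gamma_2$, leaving the clean form $G=\frac{\alpha_2}{\gamma_2\alpha_1}\Sigma_2+\frac{\alpha_3}{\gamma_2}\Sigma_3$. The same identities give $\check{\P}[\psi_1\phi_1^\top]=-\frac{\alpha_2}{\gamma_2\alpha_1}\Sigma_2-\frac{\alpha_3}{\gamma_2}\Sigma_3=-G$. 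Consequently the correction term in \eqref{eq: loss function} collapses, $\check{\P}[\psi_1\phi_1^\top]G^{-1}\check{\P}[\phi_1\psi_1^\top]=(-G)G^{-1}(-G)=G$, and taking traces produces the scalar loss $\gL=\frac{e_1}{\alpha_1}-\frac{\alpha_2 e_2}{\gamma_2\alpha_1}-\frac{\alpha_3 e_3}{\gamma_2}$. (As a consistency check, the identity $\check{\P}[\psi_1\phi_1^\top]=-G$ also says that the decorrelation matrix $M$ of Algorithm~\ref{alg: estimator mcar} equals $I$ when $\tilde\phi_j=\phi_j$.)

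\textbf{Step 3 (boundary first-order condition).} Using \eqref{eq: budget constraint simplified}, rewrite everything in the single free variable $t:=\alpha_1$, with $\alpha_2=\tau-\rho t$, $\gamma_2=\tau-(\rho-1)t$, and $\alpha_3=1-\gamma_2$. The allocation $\alpha_2=0$ is exactly the right endpoint $t_0=\tau/\rho$ of the feasible interval, and for $t$ slightly below $t_0$ the allocation stays feasible with $\alpha_2>0$. At $t_0$ the factor $\tau-\rho t$ vanishes, which shortens the computation; differentiating term by term I expect $\gL'(t_0)=\frac{\rho^2}{\tau^2}\bigl[\rho(e_2-e_3)-(e_1-e_3)\bigr]$. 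Since $e_2-e_3>0$ under the hypothesis, the bracket is strictly positive precisely when $\rho>\frac{e_1-e_3}{e_2-e_3}$. A positive derivative at the right endpoint means $\gL$ strictly decreases as we move into $\alpha_2>0$, so the minimizer cannot sit at $\alpha_2=0$; therefore $\alpha_2^*\neq 0$, as claimed.

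The main obstacle is Step 2: the cancellation giving $G=\frac{\alpha_2}{\gamma_2\alpha_1}\Sigma_2+\frac{\alpha_3}{\gamma_2}\Sigma_3$ together with the identity $\check{\P}[\psi_1\phi_1^\top]=-G$. The bookkeeping with the weights $\alpha_j/\gamma_j$ is where a sign or index slip is most likely, and it is essential that the $\Sigma_3$ cross terms cancel exactly (using $\gamma_3=1$ and $\alpha_1+\alpha_2+\alpha_3=1$) so that the matrix inverse disappears. Once $\check{\P}[\psi_1\phi_1^\top]=-G$ is established, the remainder is a one-variable calculus computation and a sign check; crucially, the noncommutativity of $\Sigma_2$ and $\Sigma_3$ never intervenes, because the quadratic form collapses to $\Tr(G)$ before any inverse is evaluated.
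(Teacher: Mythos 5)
Your proposal is correct and follows essentially the same route as the paper: the identities in your Step~2 are exactly the paper's Equations~\eqref{eq: psi1phi1} and \eqref{eq: sum phijphij} (which yield $M=I$ and the scalar loss \eqref{eq: loss based on phij}), and your Step~3 boundary derivative $\gL_1'(\tau/\rho)=\frac{\rho^2}{\tau^2}\qty[\rho(e_2-e_3)-(e_1-e_3)]$ is precisely the paper's first-order argument at $\alpha_2=0$. The only difference is that you re-derive these intermediate identities self-contained, whereas the paper cites them from its earlier proofs of Corollaries~\ref{cor: consistency get efficiency} and \ref{cor: need label data}.
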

\begin{rem}
Corollary \ref{cor: need preference data} provides the formal condition for including preference data in the optimal solution. The terms $(e_1 - e_3)$ and $(e_2 - e_3)$ quantify the additional information gain from labeled data and preference data, respectively. The condition $\alpha_2^* \neq 0$ signifies that preference data is given a non-zero weight. This occurs when the price ratio $\rho$ is greater than the information-gain ratio, $\frac{e_1 - e_3}{e_2 - e_3}$. In short, preference data is incorporated whenever its `information-per-cost' is sufficiently advantageous compared to that of labeled data.
\end{rem}}

\section{Numerical Experiments}
\label{sec: experiment}
In this section, we evaluate the performance of our proposed method through simulations and a real data analysis, comparing it against two baselines.
\subsection{Linear Regression}
For M-estimation with $\theta^* = \arg\min_{\theta \in \Theta} \E[\ell_\theta(X, Y)]$ and convex loss $\ell_\theta$, the EIF takes the form $\psi_{P_{X,Y}}(X, Y) = -\qty( \E [\nabla^2 \ell_\theta(X, Y)] )^{-1} \nabla \ell_\theta(X, Y)$.

We specialize to linear regression $Y = X^\top \theta^* + \epsilon$, where $X \sim \mathcal{N}(0, \Sigma_X)$ and $\epsilon \sim \mathcal{N}(0, \sigma^2_\epsilon)$ are independent. Two AI models produce pseudo-outcomes
\begin{equation*}
    W_1 = X^\top \tilde{\theta}_1 + \eta_1\epsilon + \epsilon_1, \quad
    W_2 = X^\top \tilde{\theta}_2 + \eta_2\epsilon + \epsilon_2,
\end{equation*}
where $\tilde{\theta}_1, \tilde{\theta}_2$, $\eta_1, \eta_2$ are model-specific parameters, and $\epsilon_1, \epsilon_2$ are independent noise terms. The preference label is $V = \mathbbm{1}\qty{\norm{W_1 - Y} \leq \norm{W_2 - Y}}$.
For the least-squares loss $\theta^* = \arg\min_{\theta} \E[(Y - X^\top \theta)^2]$, the EIF reduces to
\begin{equation}
\label{eq: eif for linear regression}
    \qty(\E[XX^\top])^{-1}X(Y - X^\top\theta^*).
\end{equation}

For our simulation study, we configured the data-generating process with the following parameters. The dimension of $X$ is set to $d = 5$ and covariance matrix $\Sigma_X = I_d$. The ground-truth parameter vector is $\theta^* = (0.2,0.4,0.6,0.8,1.0)$ and we set $\sigma_\epsilon = 16$. Two AI models are generated as $\tilde{\theta}_1 = \theta^* + N(0, 0.2I_d)$ with $\tilde{\theta}_2 = \theta^* + N(0, 0.1I_d)$, $\eta_1 = \eta_2 = 0.075$, $\epsilon_1 \sim N(0, 16), \epsilon_2 \sim N(0, 36)$.
We generate $n_0 = 2000$ unlabeled observations and $n_1 = 20000$ labeled candidates.
To evaluate the system under different scenarios, we vary the price $c \in \{5, 10, 20\}$. The budget $\tau$ is adjusted in correspondence with each price level.

We focus on the inference of the first coordinate $\theta_1^*$ of the parameter $\theta$ and implement Algorithm \ref{alg: method} to assess its performance in both covariate-aware and covariate-agnostic settings. The nuisance functions $\psi_1, \phi_j, j \in [3]$ are estimated using the three-layer neural network. In the covariate-aware setting, propensity scores are estimated with a two-layer neural network. For each design, we construct 90\% confidence intervals and summarize interval lengths and empirical coverage over repeated simulations.

\begin{figure}[!ht]
  \centering
  \includegraphics[width=\linewidth]{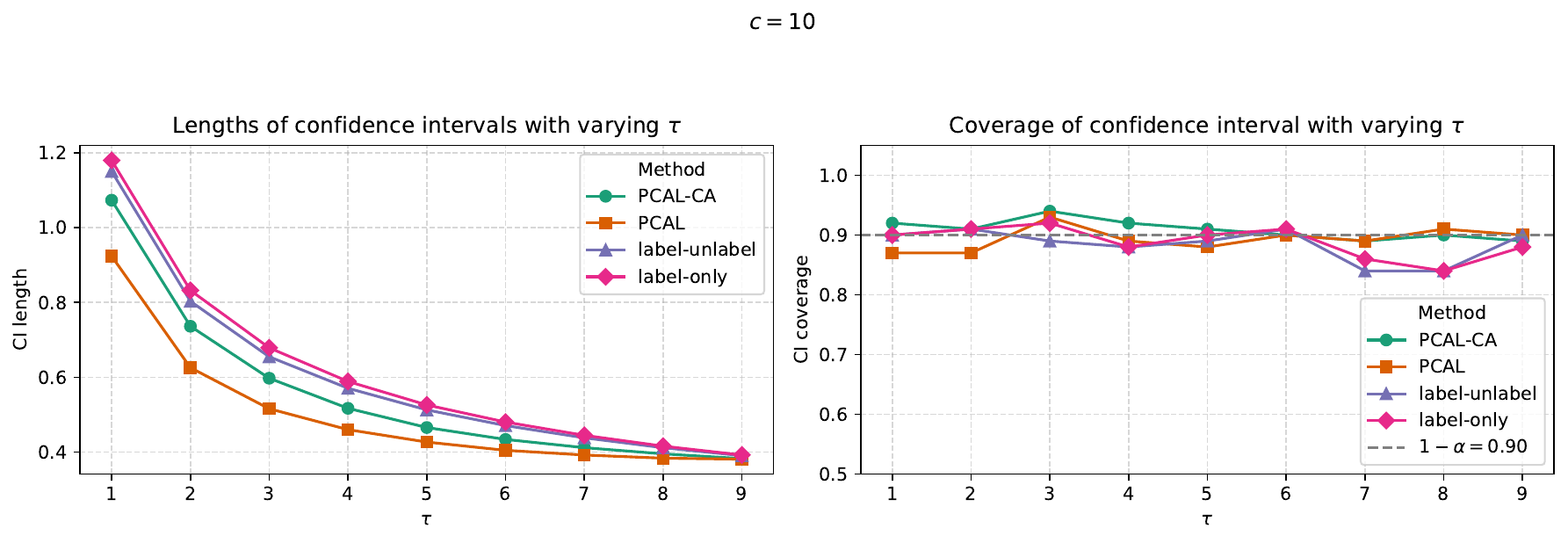}
  \caption{Confidence interval results of $\theta_1^*$ with $c=10$. Results for $c \in \{5, 20\}$ are provided in Appendix~\ref{sec: app additional simulation figures}.}
  \label{fig:lm_simulation_main}
\end{figure}

Figure~\ref{fig:lm_simulation_main} compares the performance of our proposed methods against baseline approaches across different budget levels $\tau$ for $c=10$. We evaluate four methods: (1) \method~(our covariate-aware approach), (2) \method-CA~(our covariate-agnostic variant), (3) label-unlabel (a semi-supervised baseline combining newly acquired labeled data with the original unlabeled data, conceptually similar to Prediction-Powered Inference), and (4) label-only (using only newly acquired labeled data).
The left panel displays the average confidence interval length for $\theta_1^*$. Both \method~and \method-CA~produce substantially shorter confidence intervals than the baseline methods, demonstrating that our optimal allocation strategy significantly enhances statistical efficiency. Notably, the covariate-aware \method~achieves slightly shorter intervals than \method-CA, indicating that leveraging covariate information provides additional efficiency gains.
The right panel reports the empirical coverage rates. All four methods achieve coverage close to the nominal 90\% level, confirming that the efficiency gains do not come at the expense of statistical validity. Results for other cost settings ($c \in \{5, 20\}$) show qualitatively similar patterns and are provided in Appendix~\ref{sec: app additional simulation figures}.
\subsection{Real Data Analysis: Politeness of Online Requests}
The proposed framework is also illustrated using a real dataset \citep{danescu2013computational}.
Hedges such as ``maybe'', ``I think'', ``possibly'' typically soften statements, reduce directness, or signal uncertainty. 
Motivated by \cite{ji2025predictionssurrogatesrevisitingsurrogate} and \cite{gligoric2025can}, we examine the relationship between  politeness and linguist device of hedging using our proposed \method.

Following \cite{ji2025predictionssurrogatesrevisitingsurrogate}, we used their cleaned dataset containing 5512 online requests from Stack Exchange and Wikipedia. Each request includes (i) a human-assessed politeness score (true outcome $Y$) averaged from five evaluators on a 1–25 scale, (ii) two hedging-related features (features 3 and 10) as covariates $X$, and (iii) AI-predicted politeness scores $W_1$ and $W_2$ generated by GPT-4o-mini and DeepSeeker-V3.1, respectively. The preference indicator is defined as $V = \mathbbm{1}\qty{\norm{W_1 - Y} \leq \norm{W_2 - Y}}$. For coverage evaluation, we use the estimand computed on the full labeled dataset as a proxy for $\theta^*$.

For model training, we randomly select 1000 instances for Stage 1 to compute the optimal $\alpha$ values. Subsequently, we randomly sample 2000 observations from the remaining 3000 for Stage 2. Each experiment is repeated 100 times, and we compare the performance of \method, \method-CA, label-unlabel, and label-only under varying settings of $c$ and $\tau$.

\begin{figure}[!ht]
  \centering
  \includegraphics[width=\linewidth]{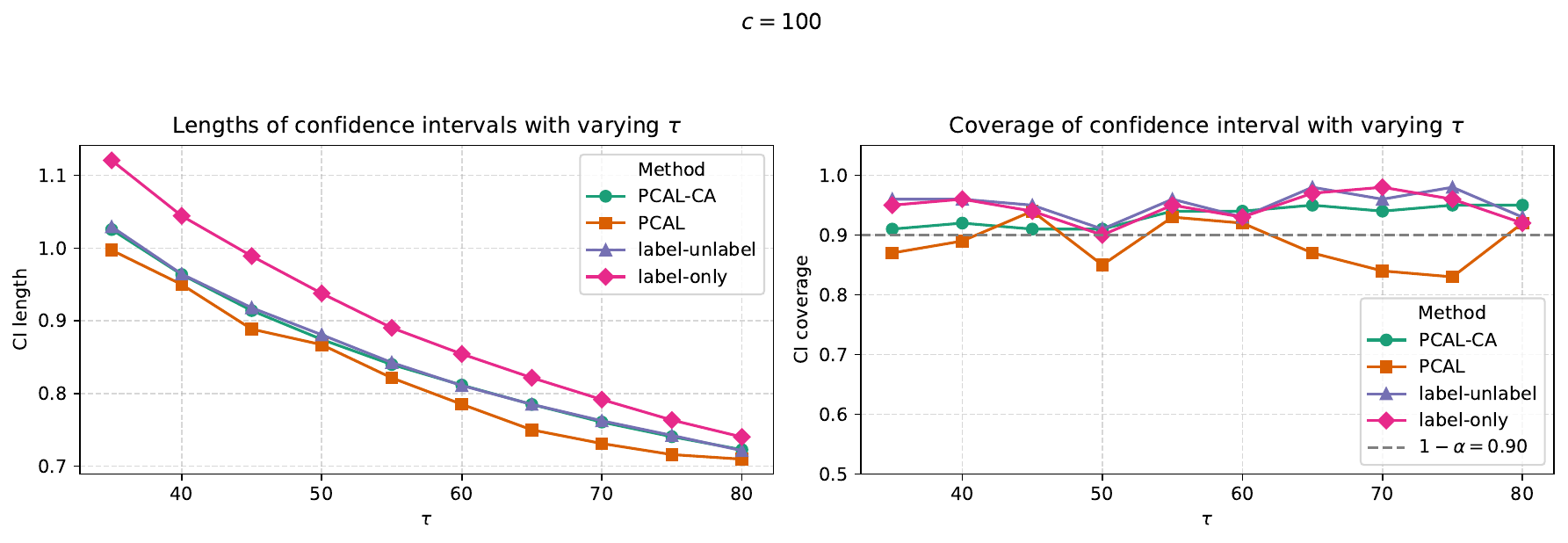}
  \caption{Confidence interval results of feature 3 with $c=100$. Results for $c \in \{20, 50, 200\}$ are provided in Appendix~\ref{sec: app additional realdata figures}.}
  \label{fig:politeness_main}
\end{figure}

Figure~\ref{fig:politeness_main} presents the results on real data for $c=100$. The left panel displays the average confidence interval length as a function of $\tau$, while the right panel reports the empirical coverage rates. The results reveal a clear efficiency ranking: \method~(covariate-aware) consistently achieves the shortest confidence intervals, followed by \method-CA~(covariate-agnostic), with both substantially outperforming the baselines label-unlabel and label-only. This demonstrates the effectiveness of our optimal, covariate-aware allocation strategy. Critically, all four methods maintain empirical coverage rates close to the nominal 90\% level, validating that the efficiency improvements do not compromise statistical validity. Results for other cost settings ($c \in \{20, 50, 200\}$) are qualitatively similar and provided in Appendix~\ref{sec: app additional realdata figures}.

\if0
\subsection{AI Evaluation}
Notation? \\
We evaluate model performance via a metric function
$\phi: \gY \times \hat{\gY} \to \R$. The performance of the model is
\begin{equation*}
\mu = \mathbb{E}\qty[\phi\qty(Y, \hat{Y})].
\end{equation*}
We aim to estimate the $\mu$. For accuracy,
\begin{equation*}
\text{accuracy}_m
= \mathbb{E}\qty[\phi_{\text{acc}}\qty(Y, \hat{Y})].
\end{equation*}
The accuracy metric function is
\begin{equation*}
\phi_{\text{acc}}(y,\hat{y}) =
\begin{cases}
1, & y = \hat{y},\\
0, & \text{otherwise}.
\end{cases}
\end{equation*}
All in all, it is a mean estimation problem. The EIF will be equal to
\begin{equation*}
    \phi\qty(Y, \hat{Y}) - \mathbb{E}\qty[\phi\qty(Y, \hat{Y})].
\end{equation*}
Now $(Y, V)$ is the missing data. Thus we will have the following three data types:
\begin{align*}
    (X, \hat{Y}, W_1, W_2, V, Y)\\
    (X, \hat{Y}, W_1, W_2, V)\\
    (X, \hat{Y}, W_1, W_2)
\end{align*}
\fi
\section{Conclusion and Discussion}
\label{sec: discussion}
{In this paper, we addressed the critical challenge of optimally allocating a fixed budget among multiple data types, ground-truth labels, pairwise preference labels, and pseudo-labels, under resource constraints. We formulated this problem within a semiparametric inference framework, modeling it as a monotone missing data problem under the Missing At Random (MAR) mechanism. Our proposed method, \method, provides a principled approach that simultaneously learns the optimal data acquisition strategy and constructs a statistically efficient estimator for the target functional. }

{Theoretically, we derived the semiparametric efficiency bound for this problem and proved that our \method~estimator achieves this bound under standard regularity conditions, establishing its asymptotic optimality. Crucially, we also provided a robustness guarantee: even when nuisance parameters are misspecified, our estimator is guaranteed to perform no worse than baseline methods that allocate budget exclusively to ground-truth labels. This robustness property is particularly valuable in practice, where machine learning models used to estimate nuisance functions may exhibit systematic biases. Our framework's flexibility stems from its variance-based optimization approach, which directly minimizes the asymptotic variance with respect to multi-dimensional propensity scores, eliminating the need for closed-form solutions and naturally extending to settings with more than three data types.}

While our framework provides strong theoretical guarantees, practical performance remains sensitive to the quality of nuisance function estimates ($\psi_j$ and $\phi_j$). Thus, despite our robustness to misspecification, refining these estimates is critical.

{Several promising directions for future research emerge from this work. First, our current two-stage procedure does not use the initial labeled batch in the final estimation, leading to potential information loss. A natural extension would employ adaptive sequential design, where data acquisition and estimation are interleaved: the allocation strategy is continuously updated as new data arrives, all collected data contributes to the final estimate, and decisions are made in real-time for streaming environments. This would require developing computationally efficient algorithms for on-the-fly propensity score updates and addressing theoretical challenges related to data-dependent stopping times. Second, incorporating heterogeneous and instance-dependent label costs would enhance practical applicability, as annotation costs often vary based on task difficulty, annotator expertise, and availability. Third, the generality of our framework allows us to consider extending the approach to handle preference labels beyond binary values, such as ``cannot tell'' or ``both are bad''.
In practice, preference feedback often comes from diverse sources—expert annotators, crowdworkers, automated quality metrics—each with different costs and noise characteristics. Optimally combining these heterogeneous sources while accounting for their reliability would broaden the framework's applicability.}

{In conclusion, this work establishes a rigorous statistical foundation for optimal budget allocation across heterogeneous data types. The theoretical guarantees, coupled with the framework's flexibility and practical applicability, make it a valuable tool for efficient data collection in resource-constrained settings. As AI systems increasingly rely on diverse forms of human feedback, principled methods for optimally combining different data types will become ever more critical.}

\bibliographystyle{apalike}
\bibliography{main_reference}

@misc{ji2025predictionssurrogatesrevisitingsurrogate,
      title={Predictions as Surrogates: Revisiting Surrogate Outcomes in the Age of AI}, 
      author={Wenlong Ji and Lihua Lei and Tijana Zrnic},
      year={2025},
      eprint={2501.09731},
      archivePrefix={arXiv},
      primaryClass={stat.ML},
      url={https://arxiv.org/abs/2501.09731}, 
}

@article{miao2023assumption,
  title={Assumption-lean and data-adaptive post-prediction inference},
  author={Miao, Jiacheng and Miao, Xinran and Wu, Yixuan and Zhao, Jiwei and Lu, Qiongshi},
  journal={arXiv preprint arXiv:2311.14220},
  year={2023}
}

@article{gan2024prediction,
  title={Prediction de-correlated inference: A safe approach for post-prediction inference},
  author={Gan, Feng and Liang, Wanfeng and Zou, Changliang},
  journal={Australian \& New Zealand Journal of Statistics},
  volume={66},
  number={4},
  pages={417--440},
  year={2024},
  publisher={Wiley Online Library}
}

@misc{angelopoulos2025costoptimalactiveaimodel,
      title={Cost-Optimal Active AI Model Evaluation}, 
      author={Anastasios N. Angelopoulos and Jacob Eisenstein and Jonathan Berant and Alekh Agarwal and Adam Fisch},
      year={2025},
      eprint={2506.07949},
      archivePrefix={arXiv},
      primaryClass={cs.LG},
      url={https://arxiv.org/abs/2506.07949}, 
}

@misc{ao2024predictionguidedactiveexperiments,
      title={Prediction-Guided Active Experiments}, 
      author={Ruicheng Ao and Hongyu Chen and David Simchi-Levi},
      year={2024},
      eprint={2411.12036},
      archivePrefix={arXiv},
      primaryClass={stat.ML},
      url={https://arxiv.org/abs/2411.12036}, 
}

@misc{wang2024maximinoptimalapproachsampling,
      title={A maximin optimal approach for sampling designs in two-phase studies}, 
      author={Ruoyu Wang and Qihua Wang and Wang Miao},
      year={2024},
      eprint={2312.10596},
      archivePrefix={arXiv},
      primaryClass={stat.ME},
      url={https://arxiv.org/abs/2312.10596}, 
}

@misc{li2024optimaladaptiveexperimentaldesign,
      title={Optimal Adaptive Experimental Design for Estimating Treatment Effect}, 
      author={Jiachun Li and David Simchi-Levi and Yunxiao Zhao},
      year={2024},
      eprint={2410.05552},
      archivePrefix={arXiv},
      primaryClass={stat.ML},
      url={https://arxiv.org/abs/2410.05552}, 
}

@misc{pinelis2020exactlowerupperbounds,
      title={Exact lower and upper bounds on the incomplete gamma function}, 
      author={Iosif Pinelis},
      year={2020},
      eprint={2005.06384},
      archivePrefix={arXiv},
      primaryClass={math.CA},
      url={https://arxiv.org/abs/2005.06384}, 
}

@article{robins1994estimation,
  title={Estimation of regression coefficients when some regressors are not always observed},
  author={Robins, James M and Rotnitzky, Andrea and Zhao, Lue Ping},
  journal={Journal of the American statistical Association},
  volume={89},
  number={427},
  pages={846--866},
  year={1994},
  publisher={Taylor \& Francis}
}

@article{xu2025unified,
  title={A unified framework for semiparametrically efficient semi-supervised learning},
  author={Xu, Zichun and Witten, Daniela and Shojaie, Ali},
  journal={arXiv preprint arXiv:2502.17741},
  year={2025}
}

@book{wainwright2019high,
  title={High-dimensional statistics: A non-asymptotic viewpoint},
  author={Wainwright, Martin J},
  volume={48},
  year={2019},
  publisher={Cambridge university press}
}

@article{vladimirova2020sub,
  title={Sub-Weibull distributions: Generalizing sub-Gaussian and sub-Exponential properties to heavier tailed distributions},
  author={Vladimirova, Mariia and Girard, St{\'e}phane and Nguyen, Hien and Arbel, Julyan},
  journal={Stat},
  volume={9},
  number={1},
  pages={e318},
  year={2020},
  publisher={Wiley Online Library}
}

@misc{ouyang2022traininglanguagemodelsfollow,
      title={Training language models to follow instructions with human feedback}, 
      author={Long Ouyang and Jeff Wu and Xu Jiang and Diogo Almeida and Carroll L. Wainwright and Pamela Mishkin and Chong Zhang and Sandhini Agarwal and Katarina Slama and Alex Ray and John Schulman and Jacob Hilton and Fraser Kelton and Luke Miller and Maddie Simens and Amanda Askell and Peter Welinder and Paul Christiano and Jan Leike and Ryan Lowe},
      year={2022},
      eprint={2203.02155},
      archivePrefix={arXiv},
      primaryClass={cs.CL},
      url={https://arxiv.org/abs/2203.02155}, 
}

@article{christiano2017deep,
  title={Deep reinforcement learning from human preferences},
  author={Christiano, Paul F and Leike, Jan and Brown, Tom and Martic, Miljan and Legg, Shane and Amodei, Dario},
  journal={Advances in neural information processing systems},
  volume={30},
  year={2017}
}

@article{rafailov2023direct,
  title={Direct preference optimization: Your language model is secretly a reward model},
  author={Rafailov, Rafael and Sharma, Archit and Mitchell, Eric and Manning, Christopher D and Ermon, Stefano and Finn, Chelsea},
  journal={Advances in neural information processing systems},
  volume={36},
  pages={53728--53741},
  year={2023}
}

@inproceedings{burges2005learning,
  title={Learning to rank using gradient descent},
  author={Burges, Chris and Shaked, Tal and Renshaw, Erin and Lazier, Ari and Deeds, Matt and Hamilton, Nicole and Hullender, Greg},
  booktitle={Proceedings of the 22nd international conference on Machine learning},
  pages={89--96},
  year={2005}
}

@inproceedings{lee2024methods,
  title={Methods, Applications, and Directions of Learning-to-Rank in NLP Research},
  author={Lee, Justin and Bernier-Colborne, Gabriel and Maharaj, Tegan and Vajjala, Sowmya},
  booktitle={Findings of the Association for Computational Linguistics: NAACL 2024},
  pages={1900--1917},
  year={2024}
}

@article{stiennon2020learning,
  title={Learning to summarize with human feedback},
  author={Stiennon, Nisan and Ouyang, Long and Wu, Jeffrey and Ziegler, Daniel and Lowe, Ryan and Voss, Chelsea and Radford, Alec and Amodei, Dario and Christiano, Paul F},
  journal={Advances in neural information processing systems},
  volume={33},
  pages={3008--3021},
  year={2020}
}

@article{verbeeck2023generalized,
  title={Generalized pairwise comparisons to assess treatment effects: JACC review topic of the week},
  author={Verbeeck, Johan and De Backer, Mickael and Verwerft, Jan and Salvaggio, Samuel and Valgimigli, Marco and Vranckx, Pascal and Buyse, Marc and Brunner, Edgar},
  journal={Journal of the American College of Cardiology},
  volume={82},
  number={13},
  pages={1360--1372},
  year={2023},
  publisher={American College of Cardiology Foundation Washington DC}
}

@article{angelopoulos2023prediction,
  title={Prediction-powered inference},
  author={Angelopoulos, Anastasios N and Bates, Stephen and Fannjiang, Clara and Jordan, Michael I and Zrnic, Tijana},
  journal={Science},
  volume={382},
  number={6671},
  pages={669--674},
  year={2023},
  publisher={American Association for the Advancement of Science}
}

@article{angelopoulos2023ppi++,
  title={Ppi++: Efficient prediction-powered inference},
  author={Angelopoulos, Anastasios N and Duchi, John C and Zrnic, Tijana},
  journal={arXiv preprint arXiv:2311.01453},
  year={2023}
}

@article{zrnic2024active,
  title={Active statistical inference},
  author={Zrnic, Tijana and Cand{\`e}s, Emmanuel J},
  journal={arXiv preprint arXiv:2403.03208},
  year={2024}
}

@misc{chernozhukov2018double,
  title={Double/debiased machine learning for treatment and structural parameters},
  author={Chernozhukov, Victor and Chetverikov, Denis and Demirer, Mert and Duflo, Esther and Hansen, Christian and Newey, Whitney and Robins, James},
  year={2018},
  publisher={Oxford University Press Oxford, UK}
}

@article{robins1995semiparametric,
  title={Semiparametric efficiency in multivariate regression models with missing data},
  author={Robins, James M and Rotnitzky, Andrea},
  journal={Journal of the American Statistical Association},
  volume={90},
  number={429},
  pages={122--129},
  year={1995},
  publisher={Taylor \& Francis}
}

@article{rubin1976inference,
  title={Inference and missing data},
  author={Rubin, Donald B},
  journal={Biometrika},
  volume={63},
  number={3},
  pages={581--592},
  year={1976},
  publisher={Oxford University Press}
}

@article{rubin1996multiple,
  title={Multiple imputation after 18+ years},
  author={Rubin, Donald B},
  journal={Journal of the American statistical Association},
  volume={91},
  number={434},
  pages={473--489},
  year={1996},
  publisher={Taylor \& Francis}
}

@article{lai1985asymptotically,
  title={Asymptotically efficient adaptive allocation rules},
  author={Lai, Tze Leung and Robbins, Herbert},
  journal={Advances in applied mathematics},
  volume={6},
  number={1},
  pages={4--22},
  year={1985},
  publisher={Academic Press}
}

@article{hahn2011adaptive,
  title={Adaptive experimental design using the propensity score},
  author={Hahn, Jinyong and Hirano, Keisuke and Karlan, Dean},
  journal={Journal of Business \& Economic Statistics},
  volume={29},
  number={1},
  pages={96--108},
  year={2011},
  publisher={Taylor \& Francis}
}

@article{ash2019deep,
  title={Deep batch active learning by diverse, uncertain gradient lower bounds},
  author={Ash, Jordan T and Zhang, Chicheng and Krishnamurthy, Akshay and Langford, John and Agarwal, Alekh},
  journal={arXiv preprint arXiv:1906.03671},
  year={2019}
}

@article{settles2009active,
  title={Active learning literature survey},
  author={Settles, Burr},
  year={2009},
  publisher={University of Wisconsin-Madison Department of Computer Sciences}
}

@inproceedings{gal2017deep,
  title={Deep bayesian active learning with image data},
  author={Gal, Yarin and Islam, Riashat and Ghahramani, Zoubin},
  booktitle={International conference on machine learning},
  pages={1183--1192},
  year={2017},
  organization={PMLR}
}

@inproceedings{bender2021dangers,
  title={On the dangers of stochastic parrots: Can language models be too big?},
  author={Bender, Emily M and Gebru, Timnit and McMillan-Major, Angelina and Shmitchell, Shmargaret},
  booktitle={Proceedings of the 2021 ACM conference on fairness, accountability, and transparency},
  pages={610--623},
  year={2021}
}

@article{gehman2020realtoxicityprompts,
  title={Realtoxicityprompts: Evaluating neural toxic degeneration in language models},
  author={Gehman, Samuel and Gururangan, Suchin and Sap, Maarten and Choi, Yejin and Smith, Noah A},
  journal={arXiv preprint arXiv:2009.11462},
  year={2020}
}

@article{leike2018scalable,
  title={Scalable agent alignment via reward modeling: a research direction},
  author={Leike, Jan and Krueger, David and Everitt, Tom and Martic, Miljan and Maini, Vishal and Legg, Shane},
  journal={arXiv preprint arXiv:1811.07871},
  year={2018}
}

@article{kenton2021alignment,
  title={Alignment of language agents},
  author={Kenton, Zachary and Everitt, Tom and Weidinger, Laura and Gabriel, Iason and Mikulik, Vladimir and Irving, Geoffrey},
  journal={arXiv preprint arXiv:2103.14659},
  year={2021}
}

@book{van2000asymptotic,
  title={Asymptotic statistics},
  author={Van der Vaart, Aad W},
  volume={3},
  year={2000},
  publisher={Cambridge university press}
}

@article{hong2024orpo,
  title={Orpo: Monolithic preference optimization without reference model},
  author={Hong, Jiwoo and Lee, Noah and Thorne, James},
  journal={arXiv preprint arXiv:2403.07691},
  year={2024}
}

@inproceedings{azar2024general,
  title={A general theoretical paradigm to understand learning from human preferences},
  author={Azar, Mohammad Gheshlaghi and Guo, Zhaohan Daniel and Piot, Bilal and Munos, Remi and Rowland, Mark and Valko, Michal and Calandriello, Daniele},
  booktitle={International Conference on Artificial Intelligence and Statistics},
  pages={4447--4455},
  year={2024},
  organization={PMLR}
}

@article{ethayarajh2024kto,
  title={Kto: Model alignment as prospect theoretic optimization},
  author={Ethayarajh, Kawin and Xu, Winnie and Muennighoff, Niklas and Jurafsky, Dan and Kiela, Douwe},
  journal={arXiv preprint arXiv:2402.01306},
  year={2024}
}

@article{meng2024simpo,
  title={Simpo: Simple preference optimization with a reference-free reward},
  author={Meng, Yu and Xia, Mengzhou and Chen, Danqi},
  journal={Advances in Neural Information Processing Systems},
  volume={37},
  pages={124198--124235},
  year={2024}
}

@article{zajac2023ground,
  title={Ground Truth Or Dare: Factors Affecting The Creation Of Medical Datasets For Training AI},
  author={Zajac, Hubert Dariusz and Avlona, Natalia Rozalia and Andersen, Tariq O and Kensing, Finn and Shklovski, Irina},
  journal={CoRR},
  year={2023}
}

@article{zhang2025efficient,
  title={Efficient Statistical Estimation for Sequential Adaptive Experiments with Implications for Adaptive Designs},
  author={Zhang, Wenxin and van der Laan, Mark},
  journal={arXiv preprint arXiv:2508.09135},
  year={2025}
}

@article{imberg2025active,
  title={Active sampling: A machine-learning-assisted framework for finite population inference with optimal subsamples},
  author={Imberg, Henrik and Yang, Xiaomi and Flannagan, Carol and B{\"a}rgman, Jonas},
  journal={Technometrics},
  volume={67},
  number={1},
  pages={46--57},
  year={2025},
  publisher={Taylor \& Francis}
}

@inproceedings{gligoric2025can,
  title={Can unconfident llm annotations be used for confident conclusions?},
  author={Gligori{\'c}, Kristina and Zrnic, Tijana and Lee, Cinoo and Candes, Emmanuel and Jurafsky, Dan},
  booktitle={Proceedings of the 2025 Conference of the Nations of the Americas Chapter of the Association for Computational Linguistics: Human Language Technologies (Volume 1: Long Papers)},
  pages={3514--3533},
  year={2025}
}

@inproceedings{danescu2013computational,
  title={A Computational Approach to Politeness with Application to Social Factors},
  author={Danescu-Niculescu-Mizil, Cristian and Sudhof, Moritz and Jurafsky, Dan and Leskovec, Jure and Potts, Christopher},
  booktitle={51st Annual Meeting of the Association for Computational Linguistics},
  pages={250--259},
  year={2013},
  organization={ACL}
}

@inproceedings{ilvento2019metric,
  title={Metric learning for individual fairness},
  author={Ilvento, Christina},
  booktitle={1st Symposium on Foundations of Responsible Computing (FORC 2020)},
  year={2020},
  organization={Schloss Dagstuhl-Leibniz-Zentrum f{\"u}r Informatik}
}

@inproceedings{narasimhan2020pairwise,
  title={Pairwise fairness for ranking and regression},
  author={Narasimhan, Harikrishna and Cotter, Andrew and Gupta, Maya R and Wang, Serena},
  booktitle={Proceedings of the AAAI Conference on Artificial Intelligence},
  volume={34},
  number={04},
  pages={5248--5255},
  year={2020}
}

@article{morato2025conbatch,
  title={ConBatch-BAL: Batch Bayesian Active Learning under Budget Constraints},
  author={Morato, Pablo G and Andriotis, Charalampos P and Khademi, Seyran},
  journal={arXiv preprint arXiv:2507.04929},
  year={2025}
}

@article{goebel2025budgeted,
  title={Budgeted Online Active Learning with Expert Advice and Episodic Priors},
  author={Goebel, Kristen and Solow, William and Pesantez-Cabrera, Paola and Keller, Markus and Fern, Alan},
  journal={arXiv preprint arXiv:2506.03307},
  year={2025}
}

@article{li2025robust,
  title={Robust Sampling for Active Statistical Inference},
  author={Li, Puheng and Zrnic, Tijana and Cand{\`e}s, Emmanuel},
  journal={arXiv preprint arXiv:2511.08991},
  year={2025}
}

@article{gorwa2020algorithmic,
  title={Algorithmic content moderation: Technical and political challenges in the automation of platform governance},
  author={Gorwa, Robert and Binns, Reuben and Katzenbach, Christian},
  journal={Big Data \& Society},
  volume={7},
  number={1},
  year={2020},
  publisher={SAGE Publications}
}

@article{arxiv2508.05527,
  title={AI vs. Human Moderators: A Comparative Evaluation of Multimodal {LLM}s in Content Moderation},
  author={Doe, John and Smith, Jane}, 
  journal={arXiv preprint arXiv:2508.05527},
  year={2025},
  note={Accepted to ICCV 2025 Workshop on Computer Vision in Advertising and Marketing}
}

@techreport{stanford2024llm,
  title={Large Language Models and the Economics of Content Moderation},
  author={{Stanford Cyber Policy Center}},
  institution={Stanford University},
  year={2024},
  type={Tech Report}
}

@inproceedings{DanescuNiculescuMizil2013,
  author    = {Danescu-Niculescu-Mizil, Cristian and Sudhof, Moritz and Jurafsky, Dan and Leskovec, Jure and Potts, Christopher},
  title     = {A computational approach to politeness with application to social factors},
  booktitle = {Proceedings of the 51st Annual Meeting of the Association for Computational Linguistics (Volume 1: Long Papers)},
  month     = {August},
  year      = {2013},
  address   = {Sofia, Bulgaria},
  publisher = {Association for Computational Linguistics},
  pages     = {250--259},
  url       = {https://aclanthology.org/P13-1025}
}
\newpage
\appendix

\clearpage
\begin{center}
  {\Large \bfseries Supplementary Appendix}
\end{center}
\vspace{1em}

\baselineskip=22pt

\begin{abstract}

\smallskip
\noindent
In this Appendix, Section \ref{sec:appendix-lemmas} collects auxiliary lemmas that facilitate our proofs. Sections \ref{sec: app proof covariate aware} and \ref{sec: app proof covariate agnostic} provide the proofs for the covariate-aware and covariate-agnostic cases, respectively. Section \ref{sec: app additional related work} presents additional related work, and Section \ref{sec: app additional figures} provides supplementary experimental results.

\end{abstract}
\section{Auxiliary Lemmas}
\label{sec:appendix-lemmas}
\begin{lem}[Proposition 2.7 in \citet{pinelis2020exactlowerupperbounds}]
\label{lem: incomplete gamma function upper bound}
Define
\begin{equation*}
\Gamma(s, x) = \int_{x}^{\infty} t^{s-1} e^{-t} \, dt.
\end{equation*}
Take any real $s \geq 1$. Then $\Gamma(s, x) \geq x^{s-1} e^{-x}$ for all real $x > 0$, and $\Gamma(s, x) \leq \dfrac{x^{s-1} e^{-x}}{1 - (s - 1)/x}$ for all real $x > s - 1$.
\end{lem}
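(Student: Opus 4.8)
The plan is to establish both inequalities directly from the integral definition $\Gamma(s,x)=\int_x^\infty t^{s-1}e^{-t}\,dt$, using only monotonicity and an elementary convexity bound, with no recourse to special-function identities. For the lower bound I would exploit the monotonicity of $t\mapsto t^{s-1}$: since $s\ge 1$ the exponent $s-1$ is nonnegative, so this map is nondecreasing on $(0,\infty)$ and hence $t^{s-1}\ge x^{s-1}$ for every $t\ge x$. Inserting this pointwise bound into the integrand gives
\[
\Gamma(s,x)=\int_x^\infty t^{s-1}e^{-t}\,dt\;\ge\; x^{s-1}\int_x^\infty e^{-t}\,dt \;=\; x^{s-1}e^{-x},
\]
which is exactly the claim, valid for all $x>0$ with no further restriction.

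For the upper bound I would first translate the variable of integration and then linearize the resulting power. Writing $t=x+u$ with $u\ge 0$ factors out the dominant exponential decay:
\[
\Gamma(s,x)=e^{-x}\int_0^\infty (x+u)^{s-1}e^{-u}\,du
=x^{s-1}e^{-x}\int_0^\infty \Big(1+\tfrac{u}{x}\Big)^{s-1}e^{-u}\,du.
\]
Now apply the elementary inequality $1+y\le e^{y}$; raising it to the nonnegative power $s-1$ (legitimate because $s-1\ge 0$ and both sides are positive for $y=u/x>0$) yields $(1+u/x)^{s-1}\le e^{(s-1)u/x}$. Substituting this bound leaves a single exponential integral,
\[
\Gamma(s,x)\;\le\; x^{s-1}e^{-x}\int_0^\infty e^{-u\left(1-(s-1)/x\right)}\,du
\;=\;\frac{x^{s-1}e^{-x}}{1-(s-1)/x},
\]
where the final equality uses $\int_0^\infty e^{-au}\,du=1/a$ with $a=1-(s-1)/x>0$, i.e.\ $x>s-1$.

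In terms of difficulty, there is no genuine obstacle here: each direction reduces to a one-line pointwise comparison followed by an elementary integral. The only point warranting care is in the upper bound, where the hypothesis $x>s-1$ must be tracked throughout, since it is simultaneously the condition guaranteeing convergence of the exponential integral and the condition making the right-hand side positive and meaningful. As a sanity check on sharpness of the constant, the boundary case $s=1$ collapses both estimates to the identity $\Gamma(1,x)=e^{-x}$, confirming that neither bound can be improved in general.
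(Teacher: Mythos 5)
Your proof is correct. Note, however, that the paper does not prove this statement at all: it is imported verbatim as Proposition~2.7 of \citet{pinelis2020exactlowerupperbounds}, so there is no internal argument to compare against. What you have supplied is a self-contained elementary derivation, and both halves check out: the lower bound follows from the pointwise monotonicity $t^{s-1}\ge x^{s-1}$ on $[x,\infty)$ (valid for all $x>0$ since $s-1\ge 0$), and the upper bound follows from the shift $t=x+u$ together with $(1+u/x)^{s-1}\le e^{(s-1)u/x}$ and the evaluation $\int_0^\infty e^{-au}\,du=1/a$ with $a=1-(s-1)/x>0$, which is exactly where the hypothesis $x>s-1$ enters. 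You track that condition correctly, and your boundary check at $s=1$ (both bounds collapsing to $\Gamma(1,x)=e^{-x}$) is a fair sharpness observation. For completeness, an alternative equally short route to the upper bound is integration by parts, $\Gamma(s,x)=x^{s-1}e^{-x}+(s-1)\Gamma(s-1,x)$, combined with $\Gamma(s-1,x)\le \Gamma(s,x)/x$, which rearranges to the same estimate; your exponential-domination argument and this one are of comparable length, and either would serve as a citation-free replacement for the lemma in the paper's appendix.
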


\begin{lem}[Theorem 4.10 in \citet{wainwright2019high}]
\label{lem: wainwright2019high uniform bound}
For any $b$-uniformly bounded class of functions $\gF$, any positive integer $n \geq 1$, 
and any scalar $\delta \geq 0$, we have
\begin{equation*}
\P\qty(\operatorname*{sup}_{f \in \gF}\abs{ \qty(\P_n - \P)[f] } \leq 2 \gR_n(\gF) + \delta) \geq 1 - \exp\qty(-\frac{n \delta^2}{2 b^2}).
\end{equation*}
A function class $\gF$ is said to be \emph{$b$-uniformly bounded} if
$\norm{f}_{\infty} \leq b, \forall f \in \gF$.
\end{lem}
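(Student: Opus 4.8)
The plan is to prove this uniform concentration bound by the standard two-step recipe: first show that $Z := \sup_{f \in \gF} \abs{\qty(\P_n - \P)[f]}$ concentrates tightly around its mean via a bounded-differences (McDiarmid) argument, and then bound the mean $\E[Z]$ by $2\gR_n(\gF)$ through symmetrization. Here $\P_n[f] = \frac{1}{n}\sum_{i=1}^n f(X_i)$ for an i.i.d.\ sample $X_1,\dots,X_n$. Taking complements of the resulting tail bound will give exactly the claimed statement.

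First I would verify the bounded-differences property of $Z$ viewed as a function of $(X_1,\dots,X_n)$. Replacing a single coordinate $X_k$ by an independent copy $X_k'$ perturbs each centered average by at most $\frac{1}{n}\abs{f(X_k)-f(X_k')}\leq \frac{2b}{n}$, using $\norm{f}_\infty\leq b$. Since $\abs{\sup_f a_f - \sup_f b_f}\leq \sup_f\abs{a_f-b_f}$, the supremum inherits this constant, so $Z$ obeys bounded differences with $c_k = 2b/n$ for each $k$. McDiarmid's inequality then yields
\begin{equation*}
\P\qty(Z \geq \E[Z] + \delta) \leq \exp\qty(-\frac{2\delta^2}{\sum_{k=1}^n c_k^2}) = \exp\qty(-\frac{2\delta^2}{n\,(2b/n)^2}) = \exp\qty(-\frac{n\delta^2}{2b^2}).
\end{equation*}

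Next I would bound $\E[Z]$ by symmetrization. Introducing an independent ghost sample $X_1',\dots,X_n'$ with the same law and applying Jensen's inequality to move the ghost expectation inside the supremum yields $\E[Z]\leq \E\qty[\sup_{f\in\gF}\abs{\frac{1}{n}\sum_{i=1}^n\qty(f(X_i)-f(X_i'))}]$. Because each increment $f(X_i)-f(X_i')$ is symmetric, multiplying it by i.i.d.\ Rademacher signs $\epsilon_i$ leaves the joint law unchanged; a triangle-inequality split of the two resulting sums, together with the definition $\gR_n(\gF)=\E\qty[\sup_{f\in\gF}\abs{\frac{1}{n}\sum_{i=1}^n\epsilon_i f(X_i)}]$, gives $\E[Z]\leq 2\gR_n(\gF)$.

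Combining the two steps, $\P\qty(Z \geq 2\gR_n(\gF)+\delta)\leq \P\qty(Z\geq \E[Z]+\delta)\leq \exp\qty(-n\delta^2/(2b^2))$, and taking the complement delivers the lemma. There is no genuine obstacle here, as this is the textbook Rademacher-complexity uniform law of large numbers; the only point needing care is aligning the normalization in the definition of $\gR_n(\gF)$ so that the symmetrization constant is precisely $2$ rather than some other scalar, and confirming that the uniform boundedness $\norm{f}_\infty \leq b$ enters only through the increment bound $2b/n$ used in McDiarmid.
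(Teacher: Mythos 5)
Your proof is correct: the paper does not prove this lemma itself (it simply cites Theorem 4.10 of Wainwright's book), and your argument—McDiarmid's bounded-differences inequality with constant $2b/n$ followed by symmetrization to bound $\E[Z]$ by $2\gR_n(\gF)$—is exactly the standard proof of that cited theorem, with the constants worked out correctly. No gaps; the only caveats (measurability of the supremum, the normalization convention in $\gR_n(\gF)$) are the same ones the textbook treatment glosses over.
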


\begin{lem}[Corollary 3.1 in \citet{vladimirova2020sub}]
\label{lem: vladimirova2020sub sum of iid subweibull}
Let $X_1, \ldots, X_n$ be identically distributed sub-Weibull($\theta$) random variables. Then, for all $x \geq n K_\theta$, we have
\begin{equation*}
\P\qty( \abs{\sum_{i=1}^n X_i} \geq x )
   \leq \exp\qty( -\qty(\frac{x}{nK_\theta})^\theta ),
\end{equation*}
for some constant $K_\theta$ dependent on $\theta$.
\end{lem}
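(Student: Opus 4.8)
The plan is to bypass any concentration-of-measure machinery and instead establish the bound through a moment comparison that requires no independence, reflecting the two telling features of the statement: the summands are assumed only identically distributed, and the conclusion is claimed solely in the far-tail regime $x \geq nK_\theta$. Both point to an argument built from Minkowski's inequality together with the equivalence between the sub-Weibull property and polynomial-in-$k$ moment growth, followed by the standard Markov-plus-optimization inversion from moments back to tails.

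First I would record the moment form of the sub-Weibull hypothesis. With $\eta := \norm{X_1}_{\psi_\theta}$, the defining inequality $\E[\psi_\theta(\abs{X_1}/\eta)] \leq 1$ combined with Markov applied to the Orlicz function gives $\P(\abs{X_1} \geq t) \leq 2\exp(-(t/\eta)^\theta)$. Feeding this into the layer-cake identity $\E\abs{X_1}^k = \int_0^\infty k\, t^{k-1}\P(\abs{X_1} > t)\,dt$ and substituting $u = (t/\eta)^\theta$ yields $\E\abs{X_1}^k \leq \tfrac{2k}{\theta}\,\eta^k\,\Gamma(k/\theta)$; controlling $\Gamma(k/\theta) \lesssim (k/\theta)^{k/\theta}$ by Stirling (any truncated variants that arise are covered by Lemma~\ref{lem: incomplete gamma function upper bound}) produces the moment-growth bound $\norm{X_1}_k \leq K_\theta\, k^{1/\theta}$, valid for all $k \geq 1$, where $K_\theta$ depends only on $\theta$ and $\eta$.

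Next, since Minkowski's inequality for $L^k$ holds with no independence assumption and the $X_i$ are identically distributed, I would bound $\norm{\sum_{i=1}^n X_i}_k \leq \sum_{i=1}^n \norm{X_i}_k = n\,\norm{X_1}_k \leq nK_\theta\, k^{1/\theta}$. This is the sole step in which $n$ enters, and it yields exactly the $n$-linear inflation of the effective scale that the target tail $\exp(-(x/(nK_\theta))^\theta)$ encodes, matching the heuristic that in the deep tail the sum is governed by a single scaled summand.

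Finally I would invert the moment bound. Markov gives, for each integer $k \geq 1$, $\P(\abs{\sum_i X_i} \geq x) \leq (nK_\theta\, k^{1/\theta}/x)^k$, and minimizing the right-hand side over $k$ at $k^\star \asymp (x/(nK_\theta))^\theta$ collapses it to $\exp(-c_\theta\,(x/(nK_\theta))^\theta)$; absorbing $c_\theta$ into a redefinition of $K_\theta$ gives the stated clean form. The main obstacle, and the source of the hypothesis $x \geq nK_\theta$, is the admissibility of this optimizer: the minimizing moment order exceeds $1$ only once $x$ surpasses a fixed multiple of $nK_\theta$, so the regime restriction is not an artifact but precisely the threshold beyond which the moment method is tight. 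The remaining bookkeeping—the factor $2$ from the first step, the Stirling constant, and the integer rounding of $k^\star$—must all be folded consistently into the single $\theta$-dependent constant $K_\theta$.
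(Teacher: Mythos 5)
Your proposal is correct: deriving the moment-growth bound $\norm{X_1}_k \leq K_\theta k^{1/\theta}$ from the Orlicz definition, passing to the sum via Minkowski's inequality in $L^k$ (which indeed needs only identical distribution, never independence), and inverting by Markov plus optimization over $k$ — with the hypothesis $x \geq nK_\theta$ serving precisely to make the optimal moment order admissible — is exactly the right argument. Note that the paper does not prove this lemma at all; it imports it as Corollary 3.1 of \citet{vladimirova2020sub}, and your route coincides with the proof given in that reference, so there is nothing to flag.
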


\begin{lem}[Theorem 25.20 (Convolution) in \citet{van2000asymptotic}]
\label{lem: van2000asymptotic convolution}
Let the function $\theta : \gP \mapsto \R^k$ be differentiable at $P$ relative to the tangent cone $\dot{\gP}_P$ with efficient influence function $\tilde{\theta}_P$. Then the asymptotic covariance matrix of every regular sequence of estimators is bounded below by $\check{\P}\qty[ \tilde{\theta}_P \tilde{\theta}_P^{\top}]$. Furthermore, if $\dot{\gP}_P$ is a convex cone, then every limit distribution $L$ of a regular sequence of estimators can be written $L = \mathcal{N}(0, \check{\P}\qty[ \tilde{\theta}_P \tilde{\theta}_P^{\top}]) * M$ for some probability distribution $M$.
\end{lem}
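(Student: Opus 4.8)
The plan is to prove this Hájek--Le Cam convolution theorem by reducing to finite-dimensional submodels, exploiting their local asymptotic normality (LAN), and then extracting the convolution structure from a characteristic-function identity produced by Le Cam's third lemma. Write $I := \check{\P}[\tilde\theta_P \tilde\theta_P^\top]$ for the covariance of the efficient influence function, whose coordinates lie in $\overline{\operatorname{lin}}\dot{\gP}_P$. Let $\hat\theta_n$ be a regular sequence, set $Z_n := \sqrt{n}(\hat\theta_n - \theta(P))$, and let $\Delta_n := n^{-1/2}\sum_{i=1}^n \tilde\theta_P(X_i)$ be the empirical efficient-score vector. The target is to show that the regular limit $L$, which equals the $P^n$-limit law of $Z_n$, decomposes as $Z = G + (Z - G)$ with $G \sim \mathcal{N}(0, I)$ and $Z - G$ independent of $G$; the two assertions then follow immediately.

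First I would set up the LAN reduction. Along any one-dimensional regular submodel with score $g \in \dot{\gP}_P$, the log-likelihood ratio of the local perturbation $P_{h/\sqrt{n}}$ against $P$ expands as $h\,\Delta_{n,g} - \tfrac12 h^2 \|g\|_P^2 + o_P(1)$ with $\Delta_{n,g} = n^{-1/2}\sum_i g(X_i)$. Taking the scores to be the coordinate functions $\tilde\theta_{P,1},\dots,\tilde\theta_{P,k}$ and using that a regular estimator is $\sqrt n$-consistent, the pair $(Z_n, \Delta_n)$ is tight under $P^n$; along a subsequence it converges to $(Z, G)$ with $G$ Gaussian of covariance $I$. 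Because regularity pins $L$ uniquely, it suffices to characterize this subsequential limit.

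The crux couples regularity with Le Cam's third lemma. Regularity gives $\sqrt{n}(\hat\theta_n - \theta(P_{h/\sqrt n})) \xrightarrow{d} L$ under $P_{h/\sqrt n}$, while differentiability of $\theta$ gives $\sqrt n(\theta(P_{h/\sqrt n}) - \theta(P)) \to \langle \tilde\theta_P, \sum_l h_l \tilde\theta_{P,l}\rangle_P = I h$, so under $P_{h/\sqrt n}$ the law of $Z_n$ converges to that of $Z$ shifted by $Ih$. On the other hand, the third lemma computes this same limit as the $e^{h^\top G - \frac12 h^\top I h}$-tilt of the $P^n$-limit $(Z,G)$. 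Equating the two yields, for all real $h$, the mixed-transform identity $\E[e^{it^\top Z}\,e^{h^\top G}] = \phi(t)\,e^{it^\top I h + \frac12 h^\top I h}$, where $\phi(t) := \E[e^{it^\top Z}]$. Setting $t=0$ already gives $\E[e^{h^\top G}]=e^{\frac12 h^\top I h}$, i.e. $G\sim\mathcal N(0,I)$. Since each side is entire in $h$, I would continue $h = is$ and compute the joint characteristic function of $(Z-G, G)$: substituting and using symmetry of $I$, the cross terms cancel and one obtains $\E[e^{i(t^\top(Z-G)+s^\top G)}] = \bigl(\phi(t) e^{\frac12 t^\top I t}\bigr)\,e^{-\frac12 s^\top I s}$, which factorizes. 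Hence $Z - G \indep G$ with $G\sim\mathcal N(0,I)$, so $L = \mathcal N(0,I) * M$ for $M := \mathrm{law}(Z-G)$, and taking variances gives $\Var(Z) = \Var(Z-G) + I \succeq I = \check{\P}[\tilde\theta_P\tilde\theta_P^\top]$.

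The main obstacle is making the finite-dimensional identity legitimately span the full information subspace and justifying the analytic continuation, and this is precisely where convexity of the cone $\dot{\gP}_P$ enters: it guarantees that the required multi-dimensional LAN submodels (with all linear combinations $\sum_l h_l \tilde\theta_{P,l}$ as admissible local directions) exist and that the limiting shift map $h \mapsto Ih$ is genuinely linear, so the characteristic-function identity holds simultaneously over a full neighborhood of directions rather than one score at a time. The lower-bound assertion needs only the covariance identity $\Cov(Z,G)=I$, which follows from the single-coordinate directions and so does not require convexity; the full convolution representation does. I would finish by letting the finite collection of scores exhaust the coordinates of $\tilde\theta_P$ (already in $\overline{\operatorname{lin}}\dot{\gP}_P$) and invoking uniqueness of the regular limit to transfer the subsequential conclusion to $L$ itself.
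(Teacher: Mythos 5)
This lemma is not proved in the paper at all: it is quoted verbatim as an auxiliary result, Theorem 25.20 of \citet{van2000asymptotic}, so there is no in-paper argument to compare against. Your sketch reconstructs the classical H\'ajek--Le Cam proof of exactly that theorem --- finite-dimensional LAN submodels, joint tightness of $(Z_n,\Delta_n)$, regularity plus differentiability identifying the local shift $Ih$, Le Cam's third lemma producing the tilted limit, the characteristic-function identity, and analytic continuation $h\mapsto is$ to factorize the joint law of $(Z-G,G)$ --- and the computation is correct (the cross terms do cancel, yielding $\E\,e^{i(t^\top(Z-G)+s^\top G)}=\phi(t)e^{\frac12 t^\top I t}\,e^{-\frac12 s^\top I s}$), so this is essentially the proof in the cited source. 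Two points deserve sharpening if you wanted this fully rigorous. First, the coordinates of $\tilde\theta_P$ lie only in $\overline{\operatorname{lin}}\dot{\gP}_P$, so there need not exist submodels with those exact scores; the identity must be proved for finite collections $g_1,\dots,g_m$ in the tangent cone and then transferred to $\tilde\theta_P$ by $L_2(P)$-approximation (with a diagonal subsequence so the joint limits $(Z,G)$ converge along a common subsequence), a step your closing sentence gestures at but does not carry out. Second, convexity of the cone does not make \emph{all} linear combinations $\sum_l h_l g_l$ admissible, only conic (nonnegative) ones; this is enough, because the identity then holds for $h$ in the positive orthant, which has nonempty interior, and real-analyticity of both sides extends it to all of $\R^k$ --- your analytic-continuation step silently does this work. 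Your observation that the covariance lower bound survives without convexity is correct and for the right reason: the relation $\Cov(Z,G_g)=\langle\tilde\theta_P,g\rangle_P$ obtained from single rays is \emph{linear} in $g$, hence extends to $\overline{\operatorname{lin}}\dot{\gP}_P$, whereas the full convolution representation genuinely needs the convex-cone structure.
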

\begin{lem}
\label{lem: converge to 0 in prob}
Let $\{X_n\}_{n\ge1}$ be real–valued random variables and
$\{Y_n\}_{n\ge1}$ be $\R^d$-valued random vectors ($d\ge1$), endowed with the $\ell_2$ norm
$\lVert\cdot\rVert_2$.
Assume
\begin{equation*}
X_n \xrightarrow{P} 0,
\qquad
\text{$\{Y_n\}$ is \emph{tight}, i.e. }
\forall \eta>0\;\exists M<\infty:
      \sup_{n\ge1}\P\bigl(\lVert Y_n\rVert_2>M\bigr)<\eta .
\end{equation*}
Then
\begin{equation*}
X_nY_n \xrightarrow{P} 0 \in \R^d ,
\end{equation*}
that is,
\(
\P\bigl(\lVert X_nY_n\rVert_2>\epsilon\bigr)\to0
\)
for every $\epsilon>0$.
\end{lem}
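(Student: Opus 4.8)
The plan is to bound the target probability by splitting on whether the norm of $Y_n$ is large, using tightness to control the contribution of large $Y_n$ uniformly in $n$, and then invoking the convergence $X_n \xrightarrow{P} 0$ to handle the remaining event. Fix $\epsilon > 0$ and an arbitrary $\eta > 0$; the goal is to show that $\P(\lVert X_n Y_n\rVert_2 > \epsilon) < \eta$ for all sufficiently large $n$, which suffices since $\eta$ is arbitrary.

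First I would invoke tightness to choose $M < \infty$ such that $\sup_{n \ge 1} \P(\lVert Y_n\rVert_2 > M) < \eta/2$. On the complementary event $\{\lVert Y_n\rVert_2 \le M\}$, absolute homogeneity of the norm (recall $X_n$ is a scalar) gives $\lVert X_n Y_n\rVert_2 = \abs{X_n}\,\lVert Y_n\rVert_2 \le M\abs{X_n}$, so the event $\{\lVert X_n Y_n\rVert_2 > \epsilon\}$ can intersect $\{\lVert Y_n\rVert_2 \le M\}$ only when $\abs{X_n} > \epsilon/M$. A union bound therefore yields
\begin{equation*}
\P\qty(\lVert X_n Y_n\rVert_2 > \epsilon) \le \P\qty(\lVert Y_n\rVert_2 > M) + \P\qty(\abs{X_n} > \frac{\epsilon}{M}).
\end{equation*}

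The first term on the right is at most $\eta/2$ uniformly in $n$ by the choice of $M$. For the second term, since $X_n \xrightarrow{P} 0$ and $\epsilon/M$ is a fixed positive threshold, there exists $N$ such that $\P(\abs{X_n} > \epsilon/M) < \eta/2$ for all $n \ge N$. Combining the two bounds gives $\P(\lVert X_n Y_n\rVert_2 > \epsilon) < \eta$ for all $n \ge N$, and letting $\eta \to 0$ establishes $\P(\lVert X_n Y_n\rVert_2 > \epsilon) \to 0$, i.e. $X_n Y_n \xrightarrow{P} 0$.

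Since the argument is elementary, I do not anticipate a substantive obstacle. The only point requiring care is the order of quantifiers: $M$ must be fixed first (via tightness, for the prescribed $\eta$), so that the threshold $\epsilon/M$ feeding into the convergence-in-probability step does not depend on $n$; the index $N$ is then chosen afterward. This ordering is what allows both error contributions to be driven below $\eta/2$ simultaneously.
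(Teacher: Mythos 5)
Your proof is correct and follows essentially the same route as the paper's: fix $M$ via tightness so the large-$\lVert Y_n\rVert_2$ event has small probability, then bound the remaining contribution by $\P(\abs{X_n} > \epsilon/M)$ and invoke $X_n \xrightarrow{P} 0$. Your write-up is if anything slightly more explicit than the paper's, since you spell out the homogeneity step $\lVert X_n Y_n\rVert_2 = \abs{X_n}\,\lVert Y_n\rVert_2$ justifying the union bound.
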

\begin{proof}
Fix $\epsilon>0$ and pick any $\delta\in(0,1)$.

\paragraph{Tightness of $Y_n$.}
By tightness, there exists $M=M(\delta)$ such that
\begin{equation*}
\sup_{n\ge1}\P\bigl(\lVert Y_n\rVert_2>M\bigr)\le\frac{\delta}{2}.
\end{equation*}

\paragraph{Smallness of $X_n$.}
Since $X_n\xrightarrow{P}0$, there is $N=N(\epsilon,\delta)$ satisfying
\begin{equation*}
\P\bigl(|X_n|>\epsilon/M\bigr)<\frac{\delta}{2}
\quad\text{for all }n\ge N .
\end{equation*}

\paragraph{Combine.}
For every $n\ge N$,
\begin{equation*}
\begin{aligned}
\P\bigl(\lVert X_nY_n\rVert_2>\epsilon\bigr)
&\le
  \P\bigl(\lVert Y_n\rVert_2>M\bigr)
  +\P\bigl(|X_n|>\epsilon/M\bigr)\\
&\le
  \frac{\delta}{2}+\frac{\delta}{2}= \delta .
\end{aligned}
\end{equation*}
Because $\delta>0$ is arbitrary, we have
\(
\P\bigl(\lVert X_nY_n\rVert_2>\epsilon\bigr)\to0
\)
as $n\to\infty$ for every $\epsilon>0$, hence
$X_nY_n\xrightarrow{P}0$ in $\R^d$.
\end{proof}

\begin{lem}
\label{lem: ABI-op-relaxed}
Let $A\in\R^{d_1\times d_2}$ and $B\in\R^{d_2\times d_2}$ be symmetric positive definite.
Let $\hat{A},\hat{B}$ be matrices of the same sizes and define $\Delta_A:=\hat{A}-A$ and $\Delta_B:=\hat{B}-B$.
Then
\begin{multline*}
\norm{\hat{A}\,\hat{B}^{-1}\hat{A}-A B^{-1}A}_{\mathrm{sp}}
\leq
\frac{\norm{A}_{\mathrm{sp}}+\norm{\Delta_A}_{\mathrm{sp}}}{\sigma_{\min}\qty(\hat{B})}\,\norm{\Delta_A}_{\mathrm{sp}}
+\\ \frac{\norm{A}_{\mathrm{sp}}\qty(\norm{A}_{\mathrm{sp}}+\norm{\Delta_A}_{\mathrm{sp}})}{\sigma_{\min}\qty(B)\,\sigma_{\min}\qty(\hat{B})}\,\norm{\Delta_B}_{\mathrm{sp}}
+ \frac{\norm{A}_{\mathrm{sp}}}{\sigma_{\min}\qty(B)}\,\norm{\Delta_A}_{\mathrm{sp}}.
\end{multline*}
\end{lem}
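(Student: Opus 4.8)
The plan is to prove this by an elementary perturbation (telescoping) argument, peeling off one mismatched factor at a time and invoking the second resolvent identity $\hat B^{-1}-B^{-1}=-\hat B^{-1}\Delta_B B^{-1}$ to convert the inverse-difference into a product that is easy to bound. The crux is to arrange the telescoping so that exactly three residual terms appear, each matching one summand on the right-hand side, and with the correct assignment of $\sigma_{\min}(\hat B)$ versus $\sigma_{\min}(B)$ to the respective terms.

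Concretely, I would first split on the leading factor, writing $\hat A\hat B^{-1}\hat A-AB^{-1}A=\Delta_A\hat B^{-1}\hat A+A(\hat B^{-1}\hat A-B^{-1}A)$; the first summand already yields the first term of the bound. For the bracketed piece I would split on the inverse by adding and subtracting $B^{-1}\hat A$, namely $\hat B^{-1}\hat A-B^{-1}A=(\hat B^{-1}-B^{-1})\hat A+B^{-1}\Delta_A$, and then apply the resolvent identity to the first factor. This produces the exact identity
\[
\hat A\hat B^{-1}\hat A-AB^{-1}A=\Delta_A\hat B^{-1}\hat A-A\hat B^{-1}\Delta_B B^{-1}\hat A+AB^{-1}\Delta_A,
\]
whose three terms correspond one-to-one to the three summands in the claim.

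From here the argument is just the triangle inequality together with submultiplicativity of the spectral norm and the standard facts $\norm{\hat B^{-1}}_{\mathrm{sp}}=1/\sigma_{\min}(\hat B)$, $\norm{B^{-1}}_{\mathrm{sp}}=1/\sigma_{\min}(B)$, and $\norm{\hat A}_{\mathrm{sp}}\le\norm{A}_{\mathrm{sp}}+\norm{\Delta_A}_{\mathrm{sp}}$. Bounding each of the three terms of the displayed identity then reproduces the right-hand side exactly. I expect no genuine analytic obstacle; the only point requiring care is the bookkeeping in the middle step — splitting the inverse-difference against $B^{-1}\Delta_A$ (rather than $\hat B^{-1}\Delta_A$) is precisely what forces $\sigma_{\min}(B)$ into the last term while leaving $\sigma_{\min}(\hat B)$ in the first, thereby producing the asymmetric form of the stated bound. (In the applications of this lemma the trailing factor is $A^\top$ rather than $A$; the identical decomposition and norm estimates carry over verbatim, since submultiplicativity treats $\hat A$ and $\hat A^\top$ on the same footing.)
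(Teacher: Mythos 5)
Your proposal is correct and follows essentially the same route as the paper: the identical three-term telescoping decomposition $\Delta_A\hat{B}^{-1}\hat{A} + A(\hat{B}^{-1}-B^{-1})\hat{A} + AB^{-1}\Delta_A$, bounded via submultiplicativity, $\norm{\hat{B}^{-1}}_{\mathrm{sp}}=1/\sigma_{\min}(\hat{B})$, $\norm{B^{-1}}_{\mathrm{sp}}=1/\sigma_{\min}(B)$, and $\norm{\hat{A}}_{\mathrm{sp}}\le\norm{A}_{\mathrm{sp}}+\norm{\Delta_A}_{\mathrm{sp}}$. The only cosmetic difference is which ordering of the resolvent identity you use ($-\hat{B}^{-1}\Delta_B B^{-1}$ versus the paper's $-B^{-1}\Delta_B\hat{B}^{-1}$), which yields the same norm bound.
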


\begin{proof}
Use the standard three-term decomposition
\begin{equation*}
\hat{A}\,\hat{B}^{-1}\hat{A}-A B^{-1}A
= \underbrace{\Delta_A\,\hat{B}^{-1}\hat{A}}_{T_1}
+ \underbrace{A\,\qty(\hat{B}^{-1}-B^{-1})\,\hat{A}}_{T_2}
+ \underbrace{A B^{-1}\Delta_A}_{T_3}.
\end{equation*}
By submultiplicativity of the spectral norm,
\begin{align*}
\norm{T_1}_{\mathrm{sp}} &\le \norm{\Delta_A}_{\mathrm{sp}}\,\norm{\hat{B}^{-1}}_{\mathrm{sp}}\,\norm{\hat{A}}_{\mathrm{sp}},\\
\norm{T_2}_{\mathrm{sp}} &\le \norm{A}_{\mathrm{sp}}\,\norm{\hat{B}^{-1}-B^{-1}}_{\mathrm{sp}}\,\norm{\hat{A}}_{\mathrm{sp}},\\
\norm{T_3}_{\mathrm{sp}} &\le \norm{A}_{\mathrm{sp}}\,\norm{B^{-1}}_{\mathrm{sp}}\,\norm{\Delta_A}_{\mathrm{sp}}.
\end{align*}

\textbf{Inverse bounds.}
\begin{equation*}
\norm{\hat{B}^{-1}}_{\mathrm{sp}}=\frac{1}{\sigma_{\min}\qty(\hat{B})},
\qquad
\norm{B^{-1}}_{\mathrm{sp}}=\frac{1}{\sigma_{\min}\qty(B)}.
\end{equation*}

\textbf{Resolvent difference.}
Using $\hat{B}^{-1}-B^{-1}=-\,B^{-1}\Delta_B\hat{B}^{-1}$, we obtain
\begin{equation*}
\norm{\hat{B}^{-1}-B^{-1}}_{\mathrm{sp}}
\le \norm{B^{-1}}_{\mathrm{sp}}\,\norm{\Delta_B}_{\mathrm{sp}}\,\norm{\hat{B}^{-1}}_{\mathrm{sp}}
= \frac{1}{\sigma_{\min}\qty(B)\,\sigma_{\min}\qty(\hat{B})}\,\norm{\Delta_B}_{\mathrm{sp}}.
\end{equation*}

\textbf{Assembling the pieces.}
Since $\norm{\hat{A}}_{\mathrm{sp}} \le \norm{A}_{\mathrm{sp}}+\norm{\Delta_A}_{\mathrm{sp}}$, the three terms satisfy
\begin{align*}
\norm{T_1}_{\mathrm{sp}}
&\le \frac{1}{\sigma_{\min}\qty(\hat{B})}\,\norm{\Delta_A}_{\mathrm{sp}}\qty(\norm{A}_{\mathrm{sp}}+\norm{\Delta_A}_{\mathrm{sp}}),\\
\norm{T_2}_{\mathrm{sp}}
&\le \frac{1}{\sigma_{\min}\qty(B)\,\sigma_{\min}\qty(\hat{B})}\,\norm{A}_{\mathrm{sp}}\,\norm{\Delta_B}_{\mathrm{sp}}\qty(\norm{A}_{\mathrm{sp}}+\norm{\Delta_A}_{\mathrm{sp}}),\\
\norm{T_3}_{\mathrm{sp}}
&= \frac{1}{\sigma_{\min}\qty(B)}\,\norm{A}_{\mathrm{sp}}\,\norm{\Delta_A}_{\mathrm{sp}}.
\end{align*}
Summing these bounds yields the displayed inequality.
\end{proof}

\section{Proofs for Covariate-Aware Case}
\label{sec: app proof covariate aware}
\subsection{Proof of Proposition~\ref{prop: EIF MCAR monotone}}
\paragraph{Model Space}

The data: $((X, Y, W_1, W_2, V)^R, R)$.
The distribution:
\begin{multline*}
    dP_{(X, Y, W_1, W_2, V)^R, R}((x, y, w_1, w_2, v)^r, r) = \\\sum_{j = 1}^3 \mathbbm{1}(r = r_j^*) \alpha_j(x, w_1, w_2) dP_{(X, Y, W_1, W_2, V)^{r_j^*}}((x, y, w_1, w_2, v)^{r_j^*}),
\end{multline*}
where $dP_{X, Y, W_1, W_2, V}$ will be $dP_{V|X, Y, W_1, W_2}dP_{X, Y, W_1, W_2}$.

\paragraph{Tangent Space} The space formed by score functions.
Notice that $V$ is deterministic, the score function of the complete data will be
\[
S_{X, Y, W_1, W_2, V}(X, Y, W_1, W_2, V) 
= S_{X, Y, W_1, W_2}(X, Y, W_1, W_2).
\]
Thus the score of the original distribution will take the form:
\begin{multline}
\label{eq: score form}
S_{(X, Y, W_1, W_2, V)^R, R}((X, Y, W_1, W_2, V)^R, R) \\= \sum_{j =1}^3\mathbbm{1}\{R = r_j^*\}
\E\qty[S_{X, Y, W_1, W_2}(X, Y, W_1, W_2) \mid (X, Y, W_1, W_2, V)^{r_j^*}].
\end{multline}

\paragraph{EIF}
After determining the tangent space, we can verify that the EIF is given by \eqref{eq: EIF MCAR monotone}. EIF is projection of IF in the tanget space. Therefore, the proof consists of two steps.

\textbf{Step 1: It is an influence function.}
Let $P_{X, Y}^*$ denote the true distribution.
Given $ \psi_{P_{X, Y}}(X, Y)$ is the EIF of $\theta(P_{X, Y})$ in $\gP_{X, Y}$ at $P_{X, Y}^*$, we have
\begin{equation*}
    \frac{d \theta(P_{X,Y}^t)}{dt} \bigg|_{t=0} = \E \qty[ \psi_{P_{X,Y}}(X, Y) S_{X,Y}(X, Y)].
\end{equation*}
Then
\begin{equation*}
\begin{aligned}
\frac{d \theta(P_{X,Y}^t)}{dt} \bigg|_{t=0} &=\E \qty[  \frac{\mathbbm{1}\{R = r_1^*\}}{\alpha_1(X, W_1, W_2)} \psi_{P_{X, Y}}(X, Y) S_{X, Y, W_1, W_2}(X, Y, W_1, W_2)]\\
&= \E \qty[  \frac{\mathbbm{1}\{R = r_1^*\}}{\alpha_1(X, W_1, W_2)} \psi_{P_{X, Y}}(X, Y)S_{(X, Y, W_1, W_2, V)^R, R}((X, Y, W_1, W_2, V)^R, R)]
\end{aligned}
\end{equation*}
where the second equality holds from the score equation form \eqref{eq: score form}.
Now we only need to show
\begin{equation*}
\E \qty[\qty(\sum_{j =1}^3 \mathbbm{1}\{R = r_j^*\} \phi_i) S_{(X, Y, W_1, W_2, V)^R, R}((X, Y, W_1, W_2, V)^R, R)] = 0.
\end{equation*}
We have:

\begin{align}
&\E \qty[\qty(\sum_{j =1}^3 \mathbbm{1}\{R = r_j^*\} \phi_i) S_{(X, Y, W_1, W_2, V)^R, R}((X, Y, W_1, W_2, V)^R, R)] \\
= &\sum_{j = 1}^3 \E\qty[\alpha_j(X, W_1, W_2)\phi_j \E\qty[S_{X, Y, W_1, W_2}(X, Y, W_1, W_2) \mid (X, Y, W_1, W_2, V)^{r_j^*}]]\\
= &\E\qty[\sum_{j = 1}^3\alpha_j(X, W_1, W_2)\phi_jS_{X, Y, W_1, W_2}(X, Y, W_1, W_2)]\label{eq: phi times score}
\end{align}
Notice that
\begin{equation*}
    \sum_{j = 1}^3\alpha_j(X, W_1, W_2)\phi_j= 0.
\end{equation*}
Plug in to Equation \eqref{eq: phi times score}, it will be equal to $0$.

\textbf{Step 2: It is in the tangent space.}
We directly construct the score as
\begin{equation*}
     S_{\psi}(X, Y, W_1, W_2) = \frac{1}{\alpha_1} \psi_1 
- \frac{\alpha_2}{(\alpha_1 + \alpha_2)\alpha_1} 
\psi_2  
- \frac{\alpha_3}{\alpha_1 + \alpha_2} 
\psi_3
\end{equation*}
It is sufficient to check that $\psi_1$ belongs to the tangent space because we do not impose any assumption on $P_{X,Y}$.
To verify that $\psi_1$ belongs to the tangent space of the complete data, recall that $\psi_1(X, Y, W_1, W_2, V) = \E[\psi_{P_{X,Y}}(X, Y) \mid (X, Y, W_1, W_2, V)^{r_1^*}]$ by definition. Since $(X, Y, W_1, W_2, V)^{r_1^*}$ represents the complete data pattern where all variables are observed, and $V$ is deterministic, $\psi_1$ is a function of $(X, Y, W_1, W_2)$ only. Therefore, $\psi_1$ can be written as $S_{X, Y, W_1, W_2}(X, Y, W_1, W_2)$ for an appropriate score function, which confirms that it belongs to the tangent space of the complete data as defined in \eqref{eq: score form}.

Next, we verify that
\begin{equation*}
    \begin{aligned}
        \E\qty[S_{\psi}(X, Y, W_1, W_2) \mid (X, Y, W_1, W_2, V)^{r_2^*}] = \phi_2\\
        \E\qty[S_{\psi}(X, Y, W_1, W_2) \mid (X, Y, W_1, W_2, V)^{r_3^*}] = \phi_3
    \end{aligned}
\end{equation*}
It is indeed in the tangent space. We complete our proof.
\subsection{Proof of Theorem~\ref{thm: asymptotic normality}}
\begin{proof}
\begin{equation*}
\sqrt{n_1} (\hat{\theta}^{\text{\method}} - \theta) 
= \sum_{k=1}^3 \frac{|\gD_k|}{\sqrt{n_1}} \hat{\theta}^\k -\sqrt{n_1} \theta 
= \sum_{k=1}^3 \frac{|\gD_k|}{\sqrt{n_1}} (\hat{\theta}^\k - \theta).
\end{equation*}
We consider $k = 1$ for the analysis; the remaining terms can be handled analogously.
\begin{multline}
\label{eq: normality first batch}
\frac{|\gD_1|}{\sqrt{n_1}} \qty( \hat{\theta}^{(1)} - \theta ) = \\
\underbrace{
\frac{|\gD_1|}{\sqrt{n_1}} \qty( \tilde{\theta}^{(1)} - \theta )
}_{I_1}
+
\underbrace{
\frac{1}{\sqrt{n_1}} \widehat{M}^{(1)} 
\qty[
\sum_{i \in \gD_1} \sum_{j=1}^{3} \mathbbm{1}\{ r_i = r_j^* \}
\hat{\phi}_j^{(1)}\qty( (x_i, y_i, w_{1i}, w_{2i}, v_i)^{r_j^*} )
]
}_{I_2}.
\end{multline}
For the first term ($I_1$), since we use a weighted estimator and $\tilde{\theta}^\1$ is an efficient weighted estimator, we have
\begin{equation*}
\tilde{\theta}^\1 - \theta 
= \frac{1}
       {|\gD_1 \cap \gD_{r_1^*}|} \sum\limits_{i \in \gD_1 \cap \gD_{r_1^*}} \frac{\P(R = r_1^*)}{\alpha_1(x_i, w_{1i}, w_{2i})} \psi_1(x_i, y_i, w_{1i}, w_{2i}, v_i) 
+  o_p\qty( \frac{1}{\sqrt{|\gD_1 \cap \gD_{r_1^*}|}} ).
\end{equation*}
Then
\begin{multline*}
\frac{|\gD_1|}{\sqrt{n_1}} \qty( \tilde{\theta}^\1 - \theta )
= \\\sum_{i \in \gD_1} \frac{ \mathbbm{1}\{ r_i = r_1^* \} \psi_1(x_i, y_i, w_{1i}, w_{2i}, v_i) }{ \alpha_1(x_i, w_{1i}, w_{2i}) \sqrt{n_1} }
\frac{|\gD_1| \P(R = r_1^*)}{|\gD_1 \cap \gD_{r_1^*}|}
+ o_p\qty( \frac{|\gD_1|}{\sqrt{n_1 |\gD_1 \cap \gD_{r_1^*}|}} ).
\end{multline*}
Notice that
\begin{multline*}
 \sup_{n_1\geq 1 } \frac{1}{n_1} \E \qty[ 
  \left\lVert 
    \sum_{i \in \gD_1} 
    \frac{ \mathbbm{1}\{ R_i = r_1^* \} \, \psi_1(X_i, Y_i, W_{1i}, W_{2i}, V) }{ \alpha_1(X_i, W_{1i}, W_{2i}) } 
  \right\rVert_2^2 
] \leq \\ \E\qty[ 
  \left\lVert
    \frac{ \mathbbm{1}\{ R= r_1^* \} \, \psi_1(X, Y, W_1, W_2, V) }{ \alpha_1(X, W_1, W_2) } 
  \right\rVert_2^2 
]   \leq C,
\end{multline*}
where $C$ is some constant. Hence the sequence
\[
  Y_n
  :=
  \frac{1}{\sqrt{n_1}}
  \sum_{i\in\gD_1}
  \frac{\mathbbm 1\{R_i=r_1^*\}\,
        \psi_1(X_i,Y_i,W_{1i},W_{2i},V)}
       {\alpha_1(X_i,W_{1i},W_{2i})}
\]
is tight.
Moreover,
\begin{equation*}
  X_n
  :=
  \frac{|\gD_1|\,\P(R=r_1^*)}{|\gD_1\cap\gD_{r_1^*}|}-1
  \;\xrightarrow{P}\;0 .
\end{equation*}
We directly apply Lemma~\ref{lem: converge to 0 in prob} and get
\begin{equation}
\label{eq: normality 1 part}
    \frac{|\gD_1|}{\sqrt{n_1}} \qty( \tilde{\theta}^\1 - \theta )
= \frac{1}{\sqrt{n_1}}\sum_{i \in \gD_1} \frac{ \mathbbm{1}\{ r_i = r_1^* \} \psi_1 (x_i, y_i, w_{1i}, w_{2i}, v_i)}{ \alpha_1(x_i, w_{1i}, w_{2i})} + o_p(1).
\end{equation}
For the second term ($I_2$), we only need to show
\begin{multline}
\label{eq: normality 2 part}
    \Bigg\|\frac{1}{\sqrt{n_1}} \hat{M}^\1 \sum_{i \in \gD_1} \sum_{j = 1}^3 \mathbbm{1}\{ R_i = r_j^* \} \hat{\phi}_j^\1\qty((X_i,Y_i,W_{1i},W_{2i},V)^{r_j^*})
-\\ \frac{1}{\sqrt{n_1}}M \sum_{i \in \gD_1} \sum_{j = 1}^3 \mathbbm{1}\{ R_i = r_j^* \} \tilde{\phi}_j\qty((X_i,Y_i,W_{1i},W_{2i},V)^{r_j^*}) \Bigg\|_2^2
\xrightarrow{P} 0
\end{multline}
We split it into two terms to analyze,
\begin{multline*}
 \frac{1}{\sqrt{n_1}} \hat{M}^\1 \sum_{i \in \gD_1} \sum_{j=1}^3 \mathbbm{1}\{ R_i = r_j^* \} \hat{\phi}_{ij}^\1 
- \frac{1}{\sqrt{n_1}} M \sum_{i \in \gD_1} \sum_{j=1}^3 \mathbbm{1}\{ R_i = r_j^* \} \tilde{\phi}_{ij} =\\
\frac{1}{\sqrt{n_1}}(\hat{M}^\1 - M) \sum_{i \in \gD_1} \sum_{j=1}^3 \mathbbm{1}\{ R_i = r_j^* \} \hat{\phi}_{ij}^\1
+ \frac{1}{\sqrt{n_1}}M \sum_{i \in \gD_1} \sum_{j=1}^3 \mathbbm{1}\{ R_i = r_j^* \} \qty( \hat{\phi}_{ij}^\1 - \tilde{\phi}_{ij} ),
\end{multline*}
where we abbreviate $\hat{\phi}_j^\1\qty((X_i,Y_i,W_{1i},W_{2i},V)^{r_j^*}), \tilde{\phi}_j\qty((X_i,Y_i,W_{1i},W_{2i},V)^{r_j^*})$ by $\hat{\phi}_{ij}^\1$, $\tilde{\phi}_{ij}$.
Notice that
\begin{multline*}
   \sup_{n_1\geq 1} \frac{1}{n_1} \E \qty[ \norm{ \sum_{i \in \gD_1} \sum_{j=1}^3 \mathbbm{1}\{R_i = r_j^*\} \qty( \hat{\phi}_{ij}^\1 - \tilde{\phi}_{ij} + \tilde{\phi}_{ij}) }_2^2 ]
\leq \\2 \E \qty[ \sum_{j=1}^3 \norm{ \tilde{\phi}_j \qty((X, Y, W_1, W_2, V)^{r_j^*}) }_2^2 + \norm{ \qty( \hat{\phi}_j^\1 - \tilde{\phi}_j )\qty((X, Y, W_1, W_2, V)^{r_j^*}) }_2^2].
\end{multline*}
By Assumptions~\ref{asm: stability of psi} and~\ref{asm: consistency of M}, and applying Lemma~\ref{lem: converge to 0 in prob}, we have
\begin{equation}
\label{eq: normality 2.1 part}
    \norm{\frac{1}{\sqrt{n_1}}(\hat{M}^\1 - M) \sum_{i \in \gD_1} \sum_{j=1}^3 \mathbbm{1}\{ R_i = r_j^* \} \hat{\phi}_{ij}^\1}_2^2 \xrightarrow{P} 0.
\end{equation}
Moreover, by Assumption~\ref{asm: stability of psi} and using the Markov inequality, it follows that
\begin{equation*}
    \frac{1}{n_1} \norm{ \sum_{i \in \gD} \sum_{j=1}^3 \mathbbm{1}\{ R_i = r_j^* \} \qty( \hat{\phi}_{ij}^\1 - \tilde{\phi}_{ij} ) }_2^2 \xrightarrow{P} 0.
\end{equation*}
Then, applying Lemma~\ref{lem: converge to 0 in prob} once again, we conclude that
\begin{equation}
\label{eq: normality 2.2 part}
    \norm{ \frac{1}{\sqrt{n_1}} M \sum_{i \in \gD_1} \sum_{j=1}^3 \mathbbm{1}\{ R_i = r_j^* \} \qty( \hat{\phi}_{ij}^\1 - \tilde{\phi}_{ij} ) }_2^2 \xrightarrow{P} 0.
\end{equation}
Combining~\eqref{eq: normality 2.1 part} and~\eqref{eq: normality 2.2 part}, we obtain Equation~\eqref{eq: normality 2 part}. Finally, substituting Equations~\eqref{eq: normality 1 part} and~\eqref{eq: normality 2 part} into Equation~\eqref{eq: normality first batch}, we have
\begin{equation*}
     \frac{|\gD_1|}{\sqrt{n_1}} (\hat{\theta}^\1 - \theta) =  \frac{|\gD_1|}{\sqrt{n_1}}\check{\P}_{\gD_1} \qty[
\mathbbm{1}\{R = r_1^*\} \frac{\psi_1}{\alpha_1}
+ M \sum_{j=1}^3 \mathbbm{1}\{R = r_j^*\} \tilde{\phi}_j] + o_p(1).
\end{equation*}
Finally, we have
\begin{equation*}
    \sqrt{n_1} (\hat{\theta}^{\text{\method}} - \theta) = \sqrt{n_1} \check{\P}_{\gD} \qty[
\mathbbm{1}\{R = r_1^*\} \frac{\psi_1}{\alpha_1}
+ M \sum_{j=1}^3 \mathbbm{1}\{R = r_j^*\} \tilde{\phi}_j] + o_p(1).
\end{equation*}
\end{proof}

\subsection{Proof of Theorem \ref{thm: excess risk upper bound}}
\label{sec: app proof excess risk}
\begin{proof}
We restate the optimization problem defined in Algorithm \ref{alg: obtain alpha1}.
Define the feasible set
\begin{align}
\label{eq: feasible set F1 empirical}
\hat{\gF}_1
= \Bigg\{ \alpha(x, w_1, w_2) \;\Big|\;
   \sum_{j=1}^3 \alpha_j(x, w_1, w_2) = 1,\ 
   &\alpha_1(x, w_1, w_2) \geq \underline{\alpha},\\ 
   &\check{\P}_{\gD_{r_1^*, 2}^\0}\qty[\rho \alpha_1 + \alpha_2] \leq \tau + \delta\Bigg\}.    
\end{align}
The empirical optimal design is defined as
\begin{equation}
\label{eq: optimization problem for alpha empirical}
\hat{\alpha}^{\text{\method}}(x, w_1, w_2) 
= \argmin_{\alpha(x, w_1, w_2) \in \hat{\gF}_1} 
\hat{\gL}(\alpha(x, w_1, w_2)).
\end{equation}
Here the empirical loss is
\begin{multline}
\label{eq: empirical loss functional}
\hat{\gL}(\alpha(x, w_1, w_2)) 
:= \Tr\left(
\check{\P}_{\gD_{r_1^*, 2}^\0}\qty[\frac{\hat{\psi}_1 \hat{\psi}_1^\top}{\alpha_1}] \right. \\
\left.
- \check{\P}_{\gD_{r_1^*, 2}^\0}\qty[\hat{\psi}_1 \hat{\phi}_1^\top]
  \qty(\check{\P}_{\gD_{r_1^*, 2}^\0}\qty[\sum_{j=1}^3 \alpha_j\, \hat{\phi}_j \hat{\phi}_j^\top])^{-1}
  \check{\P}_{\gD_{r_1^*, 2}^\0}\qty[\hat{\phi}_1 \hat{\psi}_1^\top]
\right).
\end{multline}
We split the excess risk as
\begin{multline*}
\gL(\hat{\alpha}^{\text{\method}}(x, w_1, w_2)) - \gL(\alpha^*(x, w_1, w_2))
= \gL(\hat{\alpha}^{\text{\method}}(x, w_1, w_2)) - \hat{\gL}(\hat{\alpha}^{\text{\method}}(x, w_1, w_2))\\
+ \hat{\gL}(\hat{\alpha}^{\text{\method}}(x, w_1, w_2)) - \hat{\gL}(\alpha^*(x, w_1, w_2))
+ \hat{\gL}(\alpha^*(x, w_1, w_2)) - \gL(\alpha^*(x, w_1, w_2)).
\end{multline*}
Define
\begin{equation*}
\Omega_{1} 
= \qty{
   \abs{ \qty( \check{\P}_{\gD_{r_1^*, 2}^\0} - \P ) \qty[\rho \alpha_1^* + \alpha_2^*] } 
   \leq \delta }.
\end{equation*}
Since it is a bounded random variable, Hoeffding's inequality implies
\begin{equation*}
\P\qty(\Omega_1) 
\geq 1 - 2 \exp\qty( - \frac{n_0 \delta^2}{2(\rho+1)^2} ).
\end{equation*}
Now suppose $\Omega_1$ happens, then $\alpha^*(x, w_1, w_2) \in \hat{\gF}_1$.  
So
\begin{equation*}
\hat{\gL}\qty(\hat{\alpha}^{\text{\method}}(x, w_1, w_2)) 
- \hat{\gL}\qty(\alpha^*(x, w_1, w_2)) \leq 0.
\end{equation*}
Recall
\begin{equation*}
\gF_2 = \qty{ \alpha(x, w_1, w_2) \middle| \,
   \sum_{j=1}^3 \alpha_j(x, w_1, w_2) = 1,
   \alpha_1(x, w_1, w_2) \geq \underline{\alpha} }.
\end{equation*}
Since $\gF_1, \hat{\gF}_1 \subseteq \gF_2$, we have
\begin{equation*}
\gL\qty(\hat{\alpha}^{\text{\method}}(x, w_1, w_2)) - \gL\qty(\alpha^*(x, w_1, w_2))
\leq 2 \operatorname*{sup}_{\alpha(x, w_1, w_2) \in \gF_2}
\abs{ \hat{\gL}\qty(\alpha(x, w_1, w_2)) - \gL\qty(\alpha(x, w_1, w_2)) }.
\end{equation*}
Now we only need to control $\operatorname*{sup}_{\alpha(x, w_1, w_2) \in \gF_2}
\abs{ \hat{\gL}\qty(\alpha(x, w_1, w_2)) - \gL\qty(\alpha(x, w_1, w_2)) }$.
Next, we use the triangular inequality to decompose it:
\begin{multline*}
\operatorname*{sup}_{\alpha(x, w_1, w_2) \in \gF_2}
\abs{ \hat{\gL}\qty(\alpha(x, w_1, w_2)) - \gL\qty(\alpha(x, w_1, w_2)) } \leq
\operatorname*{sup}_{\alpha(x, w_1, w_2) \in \gF_2}\\
\abs{ \tilde{\gL}\qty(\alpha(x, w_1, w_2)) - \gL\qty(\alpha(x, w_1, w_2)) }
+ \operatorname*{sup}_{\alpha(x, w_1, w_2) \in \gF_2} 
\abs{ \hat{\gL}\qty(\alpha(x, w_1, w_2)) - \tilde{\gL}\qty(\alpha(x, w_1, w_2)) },
\end{multline*}
where
\begin{multline*}
\tilde{\gL}(\alpha(x, w_1, w_2)) 
:= \Tr\Bigg(
\check{\P}_{\gD_{r_1^*, 2}^\0}\qty[\frac{\psi_1 \psi_1^\top}{\alpha_1}] \\
- \check{\P}_{\gD_{r_1^*, 2}^\0}\qty[\psi_1 \tilde{\phi}_1^\top]
  \qty(\check{\P}_{\gD_{r_1^*, 2}^\0}\qty[\sum_{j=1}^3 \alpha_j\, \tilde{\phi}_j \tilde{\phi}_j^\top])^{-1}
  \check{\P}_{\gD_{r_1^*, 2}^\0}\qty[\tilde{\phi}_1 \psi_1^\top]
\Bigg).
\end{multline*}

\textbf{Firstly we control $\operatorname*{sup}_{\alpha(x, w_1, w_2) \in \gF_2}
\abs{ \tilde{\gL}\qty(\alpha(x, w_1, w_2)) - \gL\qty(\alpha(x, w_1, w_2)) }$.}
\begin{multline}
\label{eq: defn of i1}
I_1 = \operatorname*{sup}_{\alpha(x, w_1, w_2) \in \gF_2}\abs{\Tr\qty(
  \qty(\check{\P}_{\gD_{r_1^*,2}^\0} - \check{\P})\qty[ \frac{\psi_1 \psi_1^\top}{\alpha_1} ]
) }
\leq\\ d \operatorname*{sup}_{\alpha(x, w_1, w_2) \in \gF_2} \norm{ \qty(\check{\P}_{\gD_{r_1^*,2}^\0} - \check{\P})\qty[ \frac{\psi_1 \psi_1^\top}{\alpha_1} ]}_{\infty, \infty}.   
\end{multline}
\begin{equation}
\begin{aligned}
\label{eq: defn of i2}
I_2 = &\operatorname*{sup}_{\alpha(x, w_1, w_2) \in \gF_2} \left|
\Tr\Bigg(
  \check{\P}_{\gD_{r_1^*,2}^\0}\left[\psi_1 \tilde{\phi}_1^\top\right]
  \left(\check{\P}_{\gD_{r_1^*,2}^\0}\left[\sum_{j=1}^3 
      \alpha_j \tilde{\phi}_j \tilde{\phi}_j^\top\right]\right)^{-1}
  \check{\P}_{\gD_{r_1^*,2}^\0}\left[\tilde{\phi}_1 \psi_1^\top\right]
  \right.
\\
&\hspace{7em}\left.
  -\P\left[\psi_1 \tilde{\phi}_1^\top\right]
  \left(\P\left[\sum_{j=1}^3 
      \alpha_j \tilde{\phi}_j \tilde{\phi}_j^\top\right]\right)^{-1}
  \P\left[\tilde{\phi}_1 \psi_1^\top\right]
\Bigg)
\right|
\\
&\leq d \operatorname*{sup}_{\alpha(x, w_1, w_2) \in \gF_2}
\left\|
  \check{\P}_{\gD_{r_1^*,2}^\0}\left[\psi_1 \tilde{\phi}_1^\top\right]
  \left(\check{\P}_{\gD_{r_1^*,2}^\0}\left[\sum_{j=1}^3 
      \alpha_j \tilde{\phi}_j \tilde{\phi}_j^\top\right]\right)^{-1}
  \check{\P}_{\gD_{r_1^*,2}^\0}\left[\tilde{\phi}_1 \psi_1^\top\right]
\right.
\\
&\hspace{7em}\left.
  -\P\left[\psi_1 \tilde{\phi}_1^\top\right]
  \left(\P\left[\sum_{j=1}^3 
      \alpha_j \tilde{\phi}_j \tilde{\phi}_j^\top\right]\right)^{-1}
  \P\left[\tilde{\phi}_1 \psi_1^\top\right]
\right\|_{\mathrm{sp}}.
\end{aligned}
\end{equation}
\textbf{For $I_1$,}
\begin{equation*}
\operatorname*{sup}_{\alpha(x, w_1, w_2) \in \gF_2}
\norm{ \qty(\check{\P}_{\gD_{r_1^*,2}^\0} - \check{\P})\qty[\frac{\psi_1 \psi_1^\top}{\alpha_1}] }_{\infty,\infty}
= \operatorname*{sup}_{1 \leq k,l \leq d} \operatorname*{sup}_{\alpha(x, w_1, w_2) \in \gF_2}
\abs{
  \qty(\check{\P}_{\gD_{r_1^*,2}^\0} - \check{\P})\qty[\frac{\psi_{1k}\psi_{1l}}{\alpha_1}]
}.
\end{equation*}
We now apply the truncation technique:
\begin{equation*}
F_{\psi\psi} = \psi_1 \psi_1^\top \cdot \mathbf{1}\qty{\norm{ \psi_1 \psi_1^\top}_{\infty,\infty} \leq M_{\psi\psi} }, 
\qquad 
G_{\psi\psi} = \psi_1 \psi_1^\top - F_{\psi\psi},  
\end{equation*}
for some $M_{\psi\psi}$ we will determine later.

We analyze it entry-wise:
\begin{multline*}
\operatorname*{sup}_{\alpha(x, w_1, w_2) \in \gF_2}
\abs{ 
  \qty(\check{\P}_{\gD_{r_1^*,2}^\0} - \check{\P})\qty[\frac{\psi_{1k}\psi_{1l}}{\alpha_1}]
} \leq \\
 \operatorname*{sup}_{\alpha(x, w_1, w_2) \in \gF_2}
  \abs{ \qty(\check{\P}_{\gD_{r_1^*,2}^\0} - \check{\P})\qty[\frac{\qty(F_{\psi\psi})_{kl}}{\alpha_1}] }
  +\operatorname*{sup}_{\alpha(x, w_1, w_2) \in \gF_2}
  \abs{ \qty(\check{\P}_{\gD_{r_1^*,2}^\0} - \check{\P})\qty[\frac{\qty(G_{\psi\psi})_{kl}}{\alpha_1}] }.
\end{multline*}
Notice that $\abs{\qty(F_{\psi\psi})_{kl}} \leq M_{\psi\psi}$ and $\alpha_1(x,w_1,w_2) \geq \underline{\alpha}$.
Therefore, we have
\begin{equation*}
\frac{\abs{\qty(F_{\psi\psi})_{kl}}}{\alpha_1(x,w_1,w_2)}
\leq \frac{M_{\psi\psi}}{\underline{\alpha}}.
\end{equation*}
Apply Lemma \ref{lem: wainwright2019high uniform bound},
\begin{equation*}
\P\qty(\operatorname*{sup}_{\alpha(x, w_1, w_2) \in \gF_2}\abs{ \qty(\check{\P}_{\gD_{r_1^*,2}^\0} - \check{\P})\qty[\frac{\qty(F_{\psi\psi})_{kl}}{\alpha_1}] } \leq 2 \gR_{\frac{n_0}{2}}(\gF_2) + \delta) \geq 1 - \exp\qty(-\frac{n_0 \underline{\alpha}^2\delta^2}{4 M_{\psi\psi}^2}).
\end{equation*}
Define
\begin{equation*}
    \Omega_{21} = \qty{\operatorname*{sup}_{\alpha(x, w_1, w_2) \in \gF_2}\norm{ \qty(\check{\P}_{\gD_{r_1^*,2}^\0} - \check{\P})\qty[\frac{F_{\psi\psi}}{\alpha_1}] }_{\infty,\infty} \leq 2 \gR_{\frac{n_0}{2}}(\gF_2) + \delta}.
\end{equation*}
We have
\begin{equation*}
\P\qty(\Omega_{21}) \geq 1 - d^2\exp\qty(-\frac{n_0 \underline{\alpha}^2\delta^2}{4 M_{\psi\psi}^2}).
\end{equation*}
Moreover,
\begin{multline*}
    \operatorname*{sup}_{\alpha(x, w_1, w_2) \in \gF_2}
  \abs{ \qty(\check{\P}_{\gD_{r_1^*,2}^\0} - \check{\P})\qty[\frac{\qty(G_{\psi\psi})_{kl}}{\alpha_1}] } \leq  \operatorname*{sup}_{\alpha(x, w_1, w_2) \in \gF_2}
  \abs{ \check{\P}_{\gD_{r_1^*,2}^\0}\qty[\frac{\qty(G_{\psi\psi})_{kl}}{\alpha_1}] }  +\\ \operatorname*{sup}_{\alpha(x, w_1, w_2) \in \gF_2}
  \abs{ \check{\P}\qty[\frac{\qty(G_{\psi\psi})_{kl}}{\alpha_1}] } \leq
  \frac{1}{\underline{\alpha}}\qty(\check{\P}_{\gD_{r_1^*,2}^\0}\qty[\abs{\qty(G_{\psi\psi})_{kl}}] + \check{\P}\qty[\abs{\qty(G_{\psi\psi})_{kl}}])
\end{multline*}
By Assumption \ref{asm: subweibull for phi and psi}, each entry of $\psi_1$ is sub-Weibull($\beta$) with Orlicz norm $K_4$. Hence, the product $\abs{\psi_{1k}\psi_{1l}}$ is sub-Weibull($\frac{\beta}{2}$) with Orlicz norm bounded by $C_{\beta, 1}K_4^2$. Applying Lemma \ref{lem: vladimirova2020sub sum of iid subweibull}, we obtain that $\check{\P}_{\gD_{r_1^*,2}^\0}\qty[\abs{\psi_{1k} \psi_{1l}}]$ is sub-Weibull($\frac{\beta}{2}$) with
\begin{equation*}
\P\qty( \check{\P}_{\gD_{r_1^*,2}^\0}\qty[\abs{\psi_{1k} \psi_{1l}}] \geq M_{\psi\psi} )
\leq 2 \exp\qty( -\qty(\frac{M_{\psi\psi}}{2C_{\beta, 2}K_4^2})^{\frac{\beta}{2}} )\,,   
\end{equation*}
where $C_{\beta, 1}, C_{\beta, 2}$ are some constants depending only on $\beta$. Consequently,
\begin{equation*}
\P\qty( \frac{1}{\underline{\alpha}} \check{\P}_{\gD_{r_1^*,2}^\0}\qty[\abs{\qty(G_{\psi\psi})_{kl}}] > 0 )
\leq 2 \exp\qty( -\qty(\frac{M_{\psi\psi}}{2C_{\beta, 2}K_4^2})^{\frac{\beta}{2}} ). 
\end{equation*}
Moreover, for $\frac{1}{\underline{\alpha}} \check{\P}\qty[\abs{\qty(G_{\psi\psi})_{kl}}]$, we have
\begin{equation*}
\begin{aligned}
\frac{1}{\underline{\alpha}}\check{\P}\qty[\abs{\qty(G_{\psi\psi})_{kl}}]
   & \leq \frac{1}{\underline{\alpha}}\qty(M_{\psi\psi}  \P\qty( \abs{\psi_{1k}\psi_{1l}} > M_{\psi\psi} ) + \int_{M_{\psi\psi}}^{\infty} \P\qty( \abs{\psi_{1k}\psi_{1l}} > t ) dt )\\
   & \leq \frac{1}{\underline{\alpha}}\qty(M_{\psi\psi}  \exp\qty(-\qty(\frac{M_{\psi\psi}}{C_{\beta, 1}K_4^2})^{\frac{\beta}{2}}) + 2\int_{M_{\psi\psi}}^{\infty} \exp\qty( -\qty(\frac{t}{C_{\beta, 1}K_4^2})^{\frac{\beta}{2}} ) dt).
\end{aligned}
\end{equation*}
Let $u = \qty(\frac{t}{C_{\beta, 1}K_4^2})^{\frac{\beta}{2}}$ and $t = C_{\beta, 1}K_4^2 u^{\frac{2}{\beta}}$, then $dt = \frac{2}{\beta}C_{\beta, 1}K_4^2 u^{\frac{2}{\beta}-1} du$, and
\begin{equation*}
\int_{M_{\psi\psi}}^{\infty} \exp\qty( -\qty(\frac{t}{C_{\beta, 1}K_4^2})^{\frac{\beta}{2}} ) dt = \frac{2}{\beta}C_{\beta, 1}K_4^2 \int_{u_0}^{\infty} e^{-u}\,u^{\frac{2}{\beta}-1} du, \quad \text{where} \,u_0 = \qty(\frac{M_{\psi\psi}}{C_{\beta, 1}K_4^2})^{\frac{\beta}{2}}.
\end{equation*}
By Lemma \ref{lem: incomplete gamma function upper bound}, with $\qty(\frac{M_{\psi\psi}}{C_{\beta, 1}K_4^2})^{\frac{\beta}{2}} > 2\qty(\frac{2}{\beta} - 1)$,
\begin{equation*}
    \int_{M_{\psi\psi}}^{\infty} \exp\qty( -\qty(\frac{t}{C_{\beta, 1}K_4^2})^{\frac{\beta}{2}} ) dt \leq \frac{4}{\beta} C_{\beta, 1}K_4^2 \qty(\frac{M_{\psi\psi}}{C_{\beta, 1}K_4^2})^{1 - \frac{\beta}{2}} 
   \exp\qty(-\qty(\frac{M_{\psi\psi}}{C_{\beta, 1}K_4^2})^{\frac{\beta}{2}})
\end{equation*}
Define
\begin{multline*}
     \Omega_{22} = \Bigg\{\operatorname*{sup}_{\alpha(x, w_1, w_2) \in \gF_2}
  \norm{ \qty(\check{\P}_{\gD_{r_1^*,2}^\0} - \check{\P})\qty[\frac{G_{\psi\psi}}{\alpha_1}] }_{\infty,\infty} \leq \\\frac{1}{\underline{\alpha}}\qty(M_{\psi\psi} + \frac{8}{\beta} C_{\beta, 1}K_4^2 \qty(\frac{M_{\psi\psi}}{C_{\beta, 1}K_4^2})^{1 - \frac{\beta}{2}} )
   \exp\qty(-\qty(\frac{M_{\psi\psi}}{C_{\beta, 1}K_4^2})^{\frac{\beta}{2}})\Bigg\}
\end{multline*}
Then
\begin{equation*}
    \P\qty(\Omega_{22}) \geq 1 - 2 d^2 \exp\qty( -\qty(\frac{M_{\psi\psi}}{2C_{\beta, 2}K_4^2})^{\frac{\beta}{2}} )
\end{equation*}
Suppose $\Omega_{21}\cap \Omega_{22}$, denoting $\delta_{M_{\psi\psi}} = \frac{1}{\underline{\alpha}}\qty(M_{\psi\psi} + \frac{8}{\beta} C_{\beta, 1}K_4^2 \qty(\frac{M_{\psi\psi}}{C_{\beta, 1}K_4^2})^{1 - \frac{\beta}{2}} )
   \exp\qty(-\qty(\frac{M_{\psi\psi}}{C_{\beta, 1}K_4^2})^{\frac{\beta}{2}})$, we have
\begin{equation}
\label{eq: i1 upper bound}
I_1 \leq d \qty(2 \gR_{\frac{n_0}{2}}(\gF_2) + \delta + \delta_{M_{\psi\psi}}).
\end{equation}
\textbf{For $I_2$,} we seek to use Lemma \ref{lem: ABI-op-relaxed}, which means we need to control
\begin{equation*}
    \operatorname*{sup}_{\alpha(x, w_1, w_2) \in \gF_2} \norm{\qty(\check{\P}_{\gD_{r_1^*,2}^\0} - \check{\P})\qty[\sum_{j=1}^3 \alpha_j\, \tilde{\phi}_j \tilde{\phi}_j^\top]}_{\mathrm{sp}}, \quad \norm{\qty(\check{\P}_{\gD_{r_1^*,2}^\0} - \check{\P})\qty[\psi_1 \tilde{\phi}_1^\top]}_{\mathrm{sp}}.
\end{equation*}
Since for any matrix $A$ we have 
$\norm{A}_{\mathrm{sp}} \leq \norm{A}_{\infty,\infty}$, 
it suffices to control the $\infty,\infty$-norm, i.e.,
\begin{equation*}
    \operatorname*{sup}_{\alpha(x, w_1, w_2) \in \gF_2} \norm{\qty(\check{\P}_{\gD_{r_1^*,2}^\0} - \check{\P})\qty[\sum_{j=1}^3 \alpha_j\, \tilde{\phi}_j \tilde{\phi}_j^\top]}_{\infty,\infty}, \quad \norm{\qty(\check{\P}_{\gD_{r_1^*,2}^\0} - \check{\P})\qty[\psi_1 \tilde{\phi}_1^\top]}_{\infty,\infty}.
\end{equation*}
For $\sum_{j=1}^3 \alpha_j\, \tilde{\phi}_j \tilde{\phi}_j^\top$, using the same truncation technique:
\begin{equation*}
F_{\tilde{\phi}_j \tilde{\phi}_j} = \tilde{\phi}_j \tilde{\phi}_j^\top \cdot \mathbf{1}\qty{\sum_{j=1}^3 \norm{\tilde{\phi}_j \tilde{\phi}_j^\top}_{\infty,\infty} \leq M_{\tilde{\phi} \tilde{\phi}} }, 
\qquad 
G_{\tilde{\phi}_j \tilde{\phi}_j} = \tilde{\phi}_j \tilde{\phi}_j^\top - F_{\tilde{\phi}_j \tilde{\phi}_j}.  
\end{equation*}
We continue the analysis entry-wise:
\begin{multline*}
    \operatorname*{sup}_{\alpha(x, w_1, w_2) \in \gF_2} \abs{\qty(\check{\P}_{\gD_{r_1^*,2}^\0} - \check{\P})\qty[\sum_{j=1}^3 \alpha_j\, \tilde{\phi}_{jk} \tilde{\phi}_{jl}^\top]} \leq \operatorname*{sup}_{\alpha(x, w_1, w_2) \in \gF_2} \\
     \abs{\qty(\check{\P}_{\gD_{r_1^*,2}^\0} - \check{\P})\qty[\sum_{j=1}^3 \alpha_j\, \qty(F_{\tilde{\phi}_j \tilde{\phi}_j})_{kl}]} + \operatorname*{sup}_{\alpha(x, w_1, w_2) \in \gF_2} \abs{\qty(\check{\P}_{\gD_{r_1^*,2}^\0} - \check{\P})\qty[\sum_{j=1}^3 \alpha_j\, \qty(G_{\tilde{\phi}_j \tilde{\phi}_j})_{kl}]}
\end{multline*}
Notice that
\begin{equation*}
\abs{\sum_{j=1}^3 \alpha_j(x, w_1, w_2)\, \tilde{\phi}_{jk} \tilde{\phi}_{jl}^\top}
\leq M_{\phi\phi},
\end{equation*}
and hence by using Lemma \ref{lem: wainwright2019high uniform bound}, similar to the analysis of $F_{\psi\psi}$, we have
\begin{multline*}
\P\qty(\operatorname*{sup}_{\alpha(x, w_1, w_2) \in \gF_2}\abs{\qty(\check{\P}_{\gD_{r_1^*,2}^\0} - \check{\P})\qty[\sum_{j=1}^3 \alpha_j\, \qty(F_{\tilde{\phi}_j \tilde{\phi}_j})_{kl}]} \leq 2 \gR_{\frac{n_0}{2}}(\gF_2) + \delta) \geq \\1 - \exp\qty(-\frac{n_0 \delta^2}{4 M_{\tilde{\phi} \tilde{\phi}}^2}).    
\end{multline*}
Now we define
\begin{equation*}
    \Omega_{23} = \qty{\operatorname*{sup}_{\alpha(x, w_1, w_2) \in \gF_2}\norm{ \qty(\check{\P}_{\gD_{r_1^*,2}^\0} - \check{\P})\qty[\sum_{j=1}^3 \alpha_j F_{\tilde{\phi}_j \tilde{\phi}_j}] }_{\infty,\infty} \leq 2 \gR_{\frac{n_0}{2}}(\gF_2) + \delta}.    
\end{equation*}
And we obtain that
\begin{equation*}
\P\qty(\Omega_{23}) \geq 1 - d^2\exp\qty(-\frac{n_0\delta^2}{4 M_{\tilde{\phi} \tilde{\phi}}^2}).
\end{equation*}
For $G_{\tilde{\phi}_j \tilde{\phi}_j}$ term, we have
\begin{equation*}
\begin{aligned}
&\operatorname*{sup}_{\alpha(x, w_1, w_2) \in \gF_2} \abs{\qty(\check{\P}_{\gD_{r_1^*,2}^\0} - \check{\P})\qty[\sum_{j=1}^3 \alpha_j\, \qty(G_{\tilde{\phi}_j \tilde{\phi}_j})_{kl}]}\\
&\leq \operatorname*{sup}_{\alpha(x, w_1, w_2) \in \gF_2} \abs{\check{\P}_{\gD_{r_1^*,2}^\0}\qty[\sum_{j=1}^3 \alpha_j\, \qty(G_{\tilde{\phi}_j \tilde{\phi}_j})_{kl}]}  + \operatorname*{sup}_{\alpha(x, w_1, w_2) \in \gF_2} \abs{\check{\P}\qty[\sum_{j=1}^3 \alpha_j\, \qty(G_{\tilde{\phi}_j \tilde{\phi}_j})_{kl}]}\\
&\leq \check{\P}_{\gD_{r_1^*,2}^\0}\qty[\sum_{j=1}^3\abs{\qty(G_{\tilde{\phi}_j \tilde{\phi}_j})_{kl}}]  + \check{\P}\qty[\sum_{j=1}^3 \abs{\qty(G_{\tilde{\phi}_j \tilde{\phi}_j})_{kl}}].
\end{aligned}
\end{equation*}
By Assumption \ref{asm: subweibull for phi and psi}, $\tilde{\phi}_j, j \in [3]$ are entrywise sub-Weibull($\beta$) with Orlicz norm $K_4$. Hence $\sum_{j=1}^3\abs{(\tilde{\phi}_j \tilde{\phi}_j^\top)_{kl}}$ is sub-Weibull($\frac{\beta}{2}$) with norm bounded by $3C_{\beta, 3}K_4^2$. Applying Lemma \ref{lem: vladimirova2020sub sum of iid subweibull}, it follows that $\check{\P}_{\gD_{r_1^*,2}^\0}\qty[\sum_{j=1}^3\abs{(\tilde{\phi}_j \tilde{\phi}_j^\top)_{kl}}]$ is sub-Weibull($\frac{\beta}{2}$) with
\begin{equation*}
\P\qty( \check{\P}_{\gD_{r_1^*,2}^\0}\qty[\sum_{j=1}^3\abs{(\tilde{\phi}_j \tilde{\phi}_j^\top)_{kl}}] \geq M_{\tilde{\phi}\tilde{\phi}} )
\leq 2 \exp\qty( -\qty(\frac{M_{\tilde{\phi}\tilde{\phi}}}{6C_{\beta,4}K_4^2})^{\frac{\beta}{2}} ),  
\end{equation*}
where $C_{\beta, 3}, C_{\beta,4}$ are some constants depending only on $\beta$. Consequently,
\begin{equation*}
\P\qty( \check{\P}_{\gD_{r_1^*,2}^\0}\qty[\sum_{j=1}^3\abs{(G_{\tilde{\phi}_j \tilde{\phi}_j})_{kl}}] > 0 )
\leq 2 \exp\qty( -\qty(\frac{M_{\tilde{\phi}\tilde{\phi}}}{6C_{\beta,4}K_4^2})^{\frac{\beta}{2}} ). 
\end{equation*}
Moreover,
\begin{multline*}
 \check{\P}\qty[\sum_{j=1}^3\abs{(G_{\tilde{\phi}_j \tilde{\phi}_j})_{kl}}]
\le M_{\tilde{\phi}\tilde{\phi}} \exp\qty( -\qty(\frac{M_{\tilde{\phi}\tilde{\phi}}}{3C_{\beta, 3}K_4^2})^{\frac{\beta}{2}} ) + 2 \int_{M_{\tilde{\phi}\tilde{\phi}}}^{\infty}
\exp\qty( -\qty(\frac{t}{3C_{\beta, 3}K_4^2})^{\frac{\beta}{2}} ) dt\\
= M_{\tilde{\phi}\tilde{\phi}} \exp\qty( -\qty(\frac{M_{\tilde{\phi}\tilde{\phi}}}{3C_{\beta, 3}K_4^2})^{\frac{\beta}{2}} ) + \frac{12}{\beta}C_{\beta, 3}K_4^2 \int_{u_0}^{\infty} e^{-u} u^{\frac{2}{\beta}-1} du,
~\text{where}
\, u_0 := \qty(\frac{M_{\tilde{\phi}\tilde{\phi}}}{3C_{\beta, 3}K_4^2})^{\frac{\beta}{2}}.
\end{multline*}
By Lemma \ref{lem: incomplete gamma function upper bound}, for $\qty(\frac{M_{\tilde{\phi}\tilde{\phi}}}{3C_{\beta, 3}K_4^2})^{\frac{\beta}{2}} > 2\qty(\frac{2}{\beta} - 1)$,
\begin{equation*}
\int_{u_0}^{\infty} e^{-u} u^{\frac{2}{\beta}-1} du
\le 2 \qty(\frac{M_{\tilde{\phi}\tilde{\phi}}}{3C_{\beta, 3}K_4^2})^{1-\frac{\beta}{2}}
\exp\qty( -\qty(\frac{M_{\tilde{\phi}\tilde{\phi}}}{3C_{\beta, 3}K_4^2})^{\frac{\beta}{2}} ).
\end{equation*}
Thus,
\begin{equation*}
\check{\P}\qty[\sum_{j=1}^3\abs{(G_{\tilde{\phi}_j \tilde{\phi}_j})_{kl}}] \leq \qty(M_{\tilde{\phi}\tilde{\phi}} + \frac{24}{\beta}C_{\beta, 3}K_4^2 \qty(\frac{M_{\tilde{\phi}\tilde{\phi}}}{3C_{\beta, 3}K_4^2})^{1-\frac{\beta}{2}})
\exp\qty( -\qty(\frac{M_{\tilde{\phi}\tilde{\phi}}}{3C_{\beta, 3}K_4^2})^{\frac{\beta}{2}} ).
\end{equation*}
Define
\begin{multline*}
     \Omega_{24} = \Bigg\{\operatorname*{sup}_{\alpha(x, w_1, w_2) \in \gF_2}\norm{ \qty(\check{\P}_{\gD_{r_1^*,2}^\0} - \check{\P})\qty[\sum_{j=1}^3 \alpha_j G_{\tilde{\phi}_j \tilde{\phi}_j}] }_{\infty,\infty} \leq \\\qty(M_{\tilde{\phi}\tilde{\phi}} + \frac{24}{\beta}C_{\beta, 3}K_4^2 \qty(\frac{M_{\tilde{\phi}\tilde{\phi}}}{3C_{\beta, 3}K_4^2})^{1-\frac{\beta}{2}})
\exp\qty( -\qty(\frac{M_{\tilde{\phi}\tilde{\phi}}}{3C_{\beta, 3}K_4^2})^{\frac{\beta}{2}} )\Bigg\}\,.
\end{multline*}
Then
\begin{equation*}
    \P\qty(\Omega_{24}) \geq 1 - 2 d^2 \exp\qty( -\qty(\frac{M_{\tilde{\phi}\tilde{\phi}}}{6C_{\beta,4}K_4^2})^{\frac{\beta}{2}} )\,.
\end{equation*}
Now we control $\norm{\qty(\check{\P}_{\gD_{r_1^*,2}^\0} - \check{\P})\qty[\psi_1 \tilde{\phi}_1^\top]}_{\infty,\infty}$. We still analyze it entry-wise.
By Assumption \ref{asm: subweibull for phi and psi}, each entry of $\psi_1, \tilde{\phi}_1$ is sub-Weibull($\beta$) with Orlicz norm $K_4$. Hence, the product $\psi_{1k}\tilde{\phi}_{1l}$ is sub-Weibull($\frac{\beta}{2}$) with Orlicz norm bounded by $C_{\beta, 1}K_4^2$. Applying Lemma \ref{lem: vladimirova2020sub sum of iid subweibull}, we obtain that $\qty(\check{\P}_{\gD_{r_1^*,2}^\0} - \check{\P})\qty[\psi_{1k} \tilde{\phi}_{1l}]$ is sub-Weibull($\frac{\beta}{2}$) with
\begin{equation*}
\P\qty( \abs{\qty(\check{\P}_{\gD_{r_1^*,2}^\0} - \check{\P})\qty[\psi_{1k} \tilde{\phi}_{1l}]} \geq \delta )
\leq 2 \exp\qty( -\qty(\frac{n_0\delta}{2C_{\beta, 2}K_4^2})^{\frac{\beta}{2}} ).   
\end{equation*}
Define
\begin{equation}
\label{eq: omega set 25 psiphi}
    \Omega_{25} = \qty{\norm{\qty(\check{\P}_{\gD_{r_1^*,2}^\0} - \check{\P})\qty[\psi_1 \tilde{\phi}_1^\top]}_{\infty,\infty} \leq\delta}.
\end{equation}
We have
\begin{equation*}
\P\qty(\Omega_{25}) \geq 1 - 2 d^2 \exp\qty( -\qty(\frac{n_0\delta}{2C_{\beta, 2}K_4^2})^{\frac{\beta}{2}} ).
\end{equation*}
Similarly,
Define
\begin{equation}
\label{eq: omega set 26 phiphi}
    \Omega_{26} = \qty{\norm{\qty(\check{\P}_{\gD_{r_1^*,2}^\0} - \check{\P})\qty[\tilde{\phi}_1 \tilde{\phi}_1^\top]}_{\infty,\infty} \leq\delta}.
\end{equation}
We have
\begin{equation*}
\P\qty(\Omega_{26}) \geq 1 - 2 d^2 \exp\qty( -\qty(\frac{n_0\delta}{2C_{\beta, 2}K_4^2})^{\frac{\beta}{2}} ).
\end{equation*}
Suppose $\Omega_{23} \cap \Omega_{24}\cap \Omega_{25} \cap \Omega_{26}$ happens,
\begin{equation*}
    \Delta_{\psi\phi,1} =  \norm{\qty(\check{\P}_{\gD_{r_1^*,2}^\0} - \check{\P})\qty[\psi_1 \tilde{\phi}_1^\top]}_{\mathrm{sp}} \leq  \norm{\qty(\check{\P}_{\gD_{r_1^*,2}^\0} - \check{\P})\qty[\psi_1 \tilde{\phi}_1^\top]}_{\infty,\infty} \leq \delta,
\end{equation*}
Denote $\delta_{M_{\tilde{\phi}\tilde{\phi}}} = \qty(M_{\tilde{\phi}\tilde{\phi}} + \frac{24}{\beta}C_{\beta, 3}K_4^2 \qty(\frac{M_{\tilde{\phi}\tilde{\phi}}}{3C_{\beta, 3}K_4^2})^{1-\frac{\beta}{2}})
\exp\qty( -\qty(\frac{M_{\tilde{\phi}\tilde{\phi}}}{3C_{\beta, 3}K_4^2})^{\frac{\beta}{2}} )$.
\begin{multline*}
    \Delta_{\phi\phi,1} = \operatorname*{sup}_{\alpha(x, w_1, w_2) \in \gF_2} \norm{\qty(\check{\P}_{\gD_{r_1^*,2}^\0} - \check{\P})\qty[\sum_{j=1}^3 \alpha_j\, \tilde{\phi}_j \tilde{\phi}_j^\top]}_{\mathrm{sp}} \leq \\   \operatorname*{sup}_{\alpha(x, w_1, w_2) \in \gF_2} \norm{\qty(\check{\P}_{\gD_{r_1^*,2}^\0} - \check{\P})\qty[\sum_{j=1}^3 \alpha_j \tilde{\phi}_j \tilde{\phi}_j^\top]}_{\infty,\infty} \leq 2 \gR_{\frac{n_0}{2}}(\gF_2) + \delta + \delta_{M_{\tilde{\phi}\tilde{\phi}}}.
\end{multline*}
Moreover,
\begin{equation*}
    \sigma_{\min} \qty(\check{\P}\qty[ \sum_{j=1}^3 \alpha_j \tilde{\phi}_j\tilde{\phi}_j^\top]) \geq \underline{\alpha} \sigma_{\min} \qty(\check{\P}\qty[\tilde{\phi}_1\tilde{\phi}_1^\top]) \geq \underline{\alpha} K_2. 
\end{equation*}
\begin{multline}
\label{eq: sigam min pd phiphi}
    \sigma_{\min} \qty(\check{\P}_{\gD_{r_1^*,2}^\0}\qty[ \sum_{j=1}^3 \alpha_j \tilde{\phi}_j\tilde{\phi}_j^\top]) \geq \underline{\alpha} \sigma_{\min} \qty(\check{\P}_{\gD_{r_1^*,2}^\0}\qty[\tilde{\phi}_1\tilde{\phi}_1^\top]) \\
    \geq \underline{\alpha}\qty(\sigma_{\min} \qty(\check{\P}\qty[\tilde{\phi}_1\tilde{\phi}_1^\top]) - \norm{\qty(\check{\P}_{\gD_{r_1^*,2}^\0} - \check{\P})\qty[ \tilde{\phi}_1 \tilde{\phi}_1^\top]}_{\mathrm{sp}}) \geq \underline{\alpha}\qty( K_2 - \delta),
\end{multline}
where the last inequality follows from Equation \eqref{eq: omega set 26 phiphi}.
Now we summary the bounds as following:
\begin{multline*}
\norm{\check{\P}\qty[\psi_1\tilde{\phi}_1^\top]}_{\mathrm{sp}} \leq K_3,
\quad
\Delta_{\psi\phi,1}\le \delta,
\quad
\Delta_{\phi\phi,1}\le 2 \gR_{\frac{n_0}{2}}(\gF_2) + \delta + \delta_{M_{\tilde{\phi}\tilde{\phi}}},
\\
\sigma_{\min} \qty(\check{\P}\qty[ \sum_{j=1}^3 \alpha_j \tilde{\phi}_j\tilde{\phi}_j^\top])\ge \underline{\alpha} K_2,
\quad
\sigma_{\min} \qty(\check{\P}_{\gD_{r_1^*,2}^\0}\qty[ \sum_{j=1}^3 \alpha_j \tilde{\phi}_j\tilde{\phi}_j^\top])\ge \underline{\alpha}(K_2-\delta).
\end{multline*}
Then apply Lemma \ref{lem: ABI-op-relaxed},  when $\delta < \min \{\frac{K_2}{4}, K_3\}$,
\begin{multline}
\label{eq: i2 upper bound}
    I_2  \leq d\qty(\frac{(K_3 + \delta)\delta}{\underline{\alpha}\qty( K_2 - \delta)}
+\frac{K_3(K_3 + \delta)}{\underline{\alpha}K_2 \underline{\alpha}\qty( K_2 - \delta)}\qty(2 \gR_{\frac{n_0}{2}}(\gF_2) + \delta + \delta_{M_{\tilde{\phi}\tilde{\phi}}})
+\frac{K_3 \delta}{\underline{\alpha}K_2}) \leq \\
\qty(4\frac{K_3}{K_2} +  3\frac{K_3^2}{K_2^2}) \frac{d\delta}{\underline{\alpha}} + \frac{3dK_3^2}{\underline{\alpha}K_2^2}\qty(2 \gR_{\frac{n_0}{2}}(\gF_2) + \delta_{M_{\tilde{\phi}\tilde{\phi}}}).
\end{multline}
Define $\Omega_2 = \bigcap_{j=1}^6 \Omega_{2j}$. Suppose the event $\bigcap_{j=1}^2 \Omega_j$ occurs. By combining Equation \eqref{eq: i1 upper bound} and Equation \eqref{eq: i2 upper bound}, we obtain:
\begin{multline}
\label{eq: term 1 upper bound}
\operatorname*{sup}_{\alpha(x, w_1, w_2) \in \gF_2}
\abs{ \tilde{\gL}\qty(\alpha(x, w_1, w_2)) - \gL\qty(\alpha(x, w_1, w_2)) } \leq I_1 + I_2 \leq \\
\qty(1 + 4\frac{K_3}{K_2} +  3\frac{K_3^2}{K_2^2}) \frac{d\delta}{\underline{\alpha}} + \qty(2 + \frac{6K_3^2}{\underline{\alpha}K_2^2})d\gR_{\frac{n_0}{2}}(\gF_2) + \frac{3dK_3^2}{\underline{\alpha}K_2^2}\delta_{M_{\tilde{\phi}\tilde{\phi}}} + d \delta_{M_{\psi \psi}}.
\end{multline}

\textbf{Secondly we control $\operatorname*{sup}_{\alpha(x, w_1, w_2) \in \gF_2} 
\abs{ \hat{\gL}\qty(\alpha(x, w_1, w_2)) - \tilde{\gL}\qty(\alpha(x, w_1, w_2)) }$.} It can be split to the following two terms:
\begin{multline}
\label{eq: defn of i3}
I_3 = \operatorname*{sup}_{\alpha(x, w_1, w_2) \in \gF_2}\abs{\Tr\qty(
  \check{\P}_{\gD_{r_1^*,2}^\0}\qty[ \frac{\hat{\psi}_1 \hat{\psi}_1^\top - \psi_1 \psi_1^\top}{\alpha_1} ]
) }
\leq \\d \operatorname*{sup}_{\alpha(x, w_1, w_2) \in \gF_2}
  \norm{ \check{\P}_{\gD_{r_1^*,2}^\0}\qty[ \frac{\hat{\psi}_1 \hat{\psi}_1^\top - \psi_1 \psi_1^\top}{\alpha_1} ] }_{\mathrm{sp}} 
\leq \frac{d}{\underline{\alpha}}
  \check{\P}_{\gD_{r_1^*,2}^\0}\qty[ \norm{ \hat{\psi}_1 \hat{\psi}_1^\top - \psi_1 \psi_1^\top }_{\mathrm{sp}} ].
\end{multline}
\begin{equation}
\begin{aligned}
\label{eq: defn of i4}
I_4 = &\operatorname*{sup}_{\alpha(x, w_1, w_2) \in \gF_2} \left|
\Tr\Bigg(
  \check{\P}_{\gD_{r_1^*,2}^\0}\left[\hat{\psi}_1\hat{\phi}_1^\top\right]
  \left(\check{\P}_{\gD_{r_1^*,2}^\0}\left[\sum_{j=1}^3 
      \alpha_j \hat{\phi}_j\hat{\phi}_j^\top\right]\right)^{-1}
  \check{\P}_{\gD_{r_1^*,2}^\0}\left[\hat{\phi}_1\hat{\psi}_1^\top\right]
  \right.
\\
&\hspace{7em}\left.
  -\check{\P}_{\gD_{r_1^*,2}^\0}\left[\psi_1\tilde{\phi}_1^\top\right]
  \left(\check{\P}_{\gD_{r_1^*,2}^\0}\left[\sum_{j=1}^3 
      \alpha_j \tilde{\phi}_j\tilde{\phi}_j^\top\right]\right)^{-1}
  \check{\P}_{\gD_{r_1^*,2}^\0}\left[\tilde{\phi}_1\psi_1^\top\right]
\Bigg)
\right|
\\
&\leq d \operatorname*{sup}_{\alpha(x, w_1, w_2) \in \gF_2}
\left\|
  \check{\P}_{\gD_{r_1^*,2}^\0}\left[\hat{\psi}_1\hat{\phi}_1^\top\right]
  \left(\check{\P}_{\gD_{r_1^*,2}^\0}\left[\sum_{j=1}^3 
      \alpha_j \hat{\phi}_j\hat{\phi}_j^\top\right]\right)^{-1}
  \check{\P}_{\gD_{r_1^*,2}^\0}\left[\hat{\phi}_1\hat{\psi}_1^\top\right]
\right.
\\
&\hspace{7em}\left.
  -\check{\P}_{\gD_{r_1^*,2}^\0}\left[\psi_1\tilde{\phi}_1^\top\right]
  \left(\check{\P}_{\gD_{r_1^*,2}^\0}\left[\sum_{j=1}^3 
      \alpha_j \tilde{\phi}_j\tilde{\phi}_j^\top\right]\right)^{-1}
  \check{\P}_{\gD_{r_1^*,2}^\0}\left[\tilde{\phi}_1\psi_1^\top\right]
\right\|_{\mathrm{sp}}.
\end{aligned}
\end{equation}
To control the above two terms, we introduce a sequence of events.
\begin{equation*}
\Omega_{31} = \qty{ 
   \abs{ \check{\P}_{\gD_{r_1^*,2}^\0}\qty[ \norm{ \hat{\psi}_1 \hat{\psi}_1^\top - \psi_1 \psi_1^\top }_{\mathrm{sp}} ]
   - \check{\P}\qty[ \norm{ \hat{\psi}_1 \hat{\psi}_1^\top - \psi_1 \psi_1^\top }_{\mathrm{sp}} ] }
   \leq \delta }.
\end{equation*}
By Assumption \ref{asm: stronger stability},
\begin{equation*}
    \P\qty(\Omega_{31}) 
\geq 1 - 2 \exp\qty( - \frac{n_0 \delta^2}{2 K_1^2} ).
\end{equation*}
Define
\begin{equation*}
\Omega_{32} = \qty{ 
   \check{\P}\qty[ \norm{ \hat{\psi}_1 \hat{\psi}_1^\top - \psi_1 \psi_1^\top }_{\mathrm{sp}} ] \leq \delta}.
\end{equation*}
And apply Assumption \ref{asm: stronger stability} again, using Markov's inequality, we have
\begin{equation*}
\P\qty( 
   \Omega_{32}) \geq 1 - \frac{\epsilon_{n_0}}{\delta}.
\end{equation*}
Similarly, for each $j \in \{1,2,3\}$ we define
\begin{equation*}
\Omega_{33}^{(j)} = \qty{
   \abs{ \check{\P}_{\gD_{r_j^*,2}^\0}\qty[ \norm{ \hat{\phi}_j \hat{\phi}_j^\top - \tilde{\phi}_j \tilde{\phi}_j^\top }_{\mathrm{sp}} ]
   - \check{\P}\qty[ \norm{ \hat{\phi}_j \hat{\phi}_j^\top - \tilde{\phi}_j \tilde{\phi}_j^\top }_{\mathrm{sp}} ] }
   \leq \delta }.
\end{equation*}
By Assumption~\ref{asm: stronger stability},
\begin{equation*}
\P\qty(\Omega_{33}^{(j)}) \geq 1 - 2 \exp\qty( - \frac{n_0 \delta^2}{2 K_1^2} ).
\end{equation*}

\begin{equation*}
\Omega_{34}^{(j)} = \qty{
   \check{\P}\qty[ \norm{ \hat{\phi}_j \hat{\phi}_j^\top - \tilde{\phi}_j \tilde{\phi}_j^\top }_{\mathrm{sp}} ] \leq \delta }.
\end{equation*}
By Assumption~\ref{asm: stronger stability} and Markov's inequality,
\begin{equation*}
\P\qty(\Omega_{34}^{(j)}) \geq 1 - \frac{\epsilon_{n_0}}{\delta}.
\end{equation*}
For the cross term we define
\begin{equation*}
\Omega_{35} = \qty{
   \abs{ \check{\P}_{\gD_{r_1^*,2}^\0}\qty[ \norm{ \hat{\psi}_1 \hat{\phi}_1^\top - \psi_1 \tilde{\phi}_1^\top }_{\mathrm{sp}} ]
   - \check{\P}\qty[ \norm{ \hat{\psi}_1 \hat{\phi}_1^\top - \psi_1 \tilde{\phi}_1^\top }_{\mathrm{sp}} ] }
   \leq \delta }.
\end{equation*}
By Assumption~\ref{asm: stronger stability},
\begin{equation*}
\P\qty(\Omega_{35}) \geq 1 - 2 \exp\qty( - \frac{n_0 \delta^2}{2 K_1^2} ).
\end{equation*}

\begin{equation*}
\Omega_{36} = \qty{
   \check{\P}\qty[ \norm{ \hat{\psi}_1 \hat{\phi}_1^\top - \psi_1 \tilde{\phi}_1^\top }_{\mathrm{sp}} ] \leq \delta }.
\end{equation*}
By Assumption~\ref{asm: stronger stability} and Markov's inequality,
\begin{equation*}
\P\qty(\Omega_{36}) \geq 1 - \frac{\epsilon_{n_0}}{\delta}.
\end{equation*}
Define the master event as the intersection of all the above:
\begin{equation*}
\Omega_3
= \Omega_{31}\cap \Omega_{32}
\cap \bigcap_{j=1}^3 \qty(\Omega_{33}^{(j)}\cap \Omega_{34}^{(j)})
\cap \Omega_{35}\cap \Omega_{36}.
\end{equation*}
By the union bound, we have
\begin{multline*}
\P\qty(\Omega_3)
\ge 1
- \Big(2 + 3\times 2 + 2\Big)\exp\qty(-\frac{n_0\delta^2}{2 K_1^2}) \\
- \Big(1 + 3 + 1\Big)\,\frac{\epsilon_{n_0}}{\delta}
= 1 - 10 \exp\qty(-\frac{n_0\delta^2}{2 K_1^2}) - 5 \frac{\epsilon_{n_0}}{\delta}.
\end{multline*}
Now suppose $\bigcap_{j=1}^3 \Omega_j$ happens. Then
\begin{equation*}
     \check{\P}_{\gD_{r_1^*,2}^\0}\qty[ \norm{ \hat{\psi}_1 \hat{\psi}_1^\top - \psi_1 \psi_1^\top }_{\mathrm{sp}} ] \leq 2 \delta.
\end{equation*}
Hence according to Equation \eqref{eq: defn of i3}, we have
\begin{equation*}
    I_3 \leq \frac{2d \delta}{\underline{\alpha}}.
\end{equation*}
For $I_4$, still assuming $\bigcap_{j=1}^3 \Omega_j$ holds, we seek to use Lemma \ref{lem: ABI-op-relaxed}. Introduce
\begin{equation*}
    \Delta_{\psi\phi,2} =  \norm{ \check{\P}_{\gD_{r_1^*,2}^\0}\qty[ \hat{\psi}_1 \hat{\phi}_1^\top - \psi_1 \tilde{\phi}_1^\top]}_{\mathrm{sp}} \leq  \check{\P}_{\gD_{r_1^*,2}^\0}\qty[ \norm{ \hat{\psi}_1 \hat{\phi}_1^\top - \psi_1 \tilde{\phi}_1^\top }_{\mathrm{sp}} ]\leq 2\delta ,
\end{equation*}
\begin{equation*}
    \Delta_{\phi\phi,2} = \norm{ \check{\P}_{\gD_{r_1^*,2}^\0}\qty[ \sum_{j=1}^3 \alpha_j \hat{\phi}_j\hat{\phi}_j^\top - \sum_{j=1}^3 \alpha_j \tilde{\phi}_j\tilde{\phi}_j^\top]}_{\mathrm{sp}} \leq  \sum_{j=1}^3\check{\P}_{\gD_{r_1^*,2}^\0}\qty[ \norm{ \hat{\phi}_j \hat{\phi}_j^\top - \tilde{\phi}_j\tilde{\phi}_j^\top }_{\mathrm{sp}} ] \leq 6\delta.
\end{equation*}
Moreover, using Equation \eqref{eq: sigam min pd phiphi},
\begin{equation*}
    \sigma_{\min} \qty(\check{\P}_{\gD_{r_1^*,2}^\0}\qty[ \sum_{j=1}^3 \alpha_j \tilde{\phi}_j\tilde{\phi}_j^\top])\geq \underline{\alpha}\qty( K_2 - \delta).   
\end{equation*}
Moreover,
\begin{multline*}
    \sigma_{\min} \qty(\check{\P}_{\gD_{r_1^*,2}^\0}\qty[ \sum_{j=1}^3 \alpha_j \hat{\phi}_j\hat{\phi}_j^\top]) \geq \underline{\alpha} \sigma_{\min} \qty(\check{\P}_{\gD_{r_1^*,2}^\0}\qty[\hat{\phi}_1\hat{\phi}_1^\top]) \\
    \geq \underline{\alpha}\qty(\sigma_{\min} \qty(\check{\P}_{\gD_{r_1^*,2}^\0}\qty[\tilde{\phi}_1\tilde{\phi}_1^\top]) - \norm{\check{\P}_{\gD_{r_1^*,2}^\0}\qty[\hat{\phi}_1\hat{\phi}_1^\top - \tilde{\phi}_1 \tilde{\phi}_1^\top]}_{\mathrm{sp}}) \geq \underline{\alpha}\qty( K_2 - 2\delta).
\end{multline*}
For $\check{\P}_{\gD_{r_1^*,2}^\0}\qty[\psi_1\tilde{\phi}_1^\top]$, combining with Equation \eqref{eq: omega set 25 psiphi},
\begin{equation*}
\norm{\check{\P}_{\gD_{r_1^*,2}^\0}\qty[\psi_1\tilde{\phi}_1^\top]}_{\mathrm{sp}} \leq \norm{\check{\P}\qty[\psi_1\tilde{\phi}_1^\top]}_{\mathrm{sp}} + \norm{\qty(\check{\P}_{\gD_{r_1^*,2}^\0} - \check{\P})\qty[\psi_1\tilde{\phi}_1^\top]}_{\mathrm{sp}} \leq K_3 + \delta.
\end{equation*}
Summarizing the bounds established above, we have
\begin{multline*}
\norm{\check{\P}_{\gD_{r_1^*,2}^\0}\qty[\psi_1\tilde{\phi}_1^\top]}_{\mathrm{sp}} \leq K_3 + \delta,
\quad
\Delta_{\psi\phi,2}\le 2\delta,
\quad
\Delta_{\phi\phi,2}\le 6\delta,
\\
\sigma_{\min}\qty(\check{\P}_{\gD_{r_1^*,2}^\0}\qty[ \sum_{j=1}^3 \alpha_j \tilde{\phi}_j\tilde{\phi}_j^\top])\ge \underline{\alpha}(K_2-\delta),
\quad
\sigma_{\min}\qty(\check{\P}_{\gD_{r_1^*,2}^\0}\qty[ \sum_{j=1}^3 \alpha_j \hat{\phi}_j\hat{\phi}_j^\top])\ge \underline{\alpha}(K_2-2\delta).
\end{multline*}
Therefore, when $\delta < \min \{\frac{K_2}{4}, K_3\}$, apply Lemma \ref{lem: ABI-op-relaxed} to Equation \eqref{eq: defn of i4}, plug in the bound we obtained above, 
\begin{multline*}
    I_4  \leq d\qty(\frac{(K_3 + \delta + \delta)\delta}{\underline{\alpha}\qty( K_2 - 2\delta)}
+6\frac{(K_3 + \delta)(K_3 + \delta + \delta)}{\underline{\alpha}\qty( K_2 - 2\delta)\underline{\alpha}\qty( K_2 - \delta)}\delta
+\frac{(K_3 + \delta)\delta}{\underline{\alpha}\qty( K_2 - \delta)}) \leq \\
\qty(9\frac{K_3}{K_2} +  96\frac{K_3^2}{K_2^2}) \frac{d\delta}{\underline{\alpha}}.
\end{multline*}
Then we have
\begin{equation}
\label{eq: term 2 upper bound}
    \operatorname*{sup}_{\alpha(x, w_1, w_2) \in \gF_2} 
\abs{ \hat{\gL}\qty(\alpha(x, w_1, w_2)) - \tilde{\gL}\qty(\alpha(x, w_1, w_2)) } \leq I_3 + I_4 \leq \qty(2 + 9\frac{K_3}{K_2} +  96\frac{K_3^2}{K_2^2}) \frac{d\delta}{\underline{\alpha}}
\end{equation}
With Equation \eqref{eq: term 1 upper bound} and Equation \eqref{eq: term 2 upper bound}, when $\bigcap_{j=1}^3 \Omega_j$ happens, we have
\begin{multline*}
\gL\qty(\hat{\alpha}^{\text{\method}}(x, w_1, w_2)) - \gL\qty(\alpha^*(x, w_1, w_2)) \leq  \\
\qty(6 + 26\frac{K_3}{K_2} +  198\frac{K_3^2}{K_2^2}) \frac{d\delta}{\underline{\alpha}} + \qty(4 + \frac{12K_3^2}{\underline{\alpha}K_2^2})d\gR_{\frac{n_0}{2}}(\gF_2) + \frac{6dK_3^2}{\underline{\alpha}K_2^2}\delta_{M_{\tilde{\phi}\tilde{\phi}}} + 2d \delta_{M_{\psi \psi}},
\end{multline*}
where
\begin{align*}
&\delta_{M_{\psi\psi}} =\frac{1}{\underline{\alpha}}\qty(M_{\psi\psi} + \frac{8}{\beta} C_{\beta, 1}K_4^2 \qty(\frac{M_{\psi\psi}}{C_{\beta, 1}K_4^2})^{1 - \frac{\beta}{2}} )
   \exp\qty(-\qty(\frac{M_{\psi\psi}}{C_{\beta, 1}K_4^2})^{\frac{\beta}{2}}),\\
&\delta_{M_{\tilde{\phi}\tilde{\phi}}} = \qty(M_{\tilde{\phi}\tilde{\phi}} + \frac{24}{\beta}C_{\beta, 3}K_4^2 \qty(\frac{M_{\tilde{\phi}\tilde{\phi}}}{3C_{\beta, 3}K_4^2})^{1-\frac{\beta}{2}})
\exp\qty( -\qty(\frac{M_{\tilde{\phi}\tilde{\phi}}}{3C_{\beta, 3}K_4^2})^{\frac{\beta}{2}} ).
\end{align*}
Based on the previous analysis,
\begin{align*}
    &\P\qty(\bigcap_{j=1}^3 \Omega_j) \geq 1 - 2 \exp\qty( - \frac{n_0 \delta^2}{2(\rho+1)^2} ) - d^2\exp\qty(-\frac{n_0 \underline{\alpha}^2\delta^2}{4 M_{\psi\psi}^2}) -\\& 2 d^2 \exp\qty( -\qty(\frac{M_{\psi\psi}}{2C_{\beta, 2}K_4^2})^{\frac{\beta}{2}} ) - d^2\exp\qty(-\frac{n_0\delta^2}{4 M_{\tilde{\phi} \tilde{\phi}}^2}) - 2 d^2 \exp\qty( -\qty(\frac{M_{\tilde{\phi}\tilde{\phi}}}{6C_{\beta,4}K_4^2})^{\frac{\beta}{2}} ) - \\&4 d^2 \exp\qty( -\qty(\frac{n_0\delta}{2C_{\beta, 2}K_4^2})^{\frac{\beta}{2}} ) - 10 \exp\qty(-\frac{n_0\delta^2}{2 K_1^2}) - 5 \frac{\epsilon_{n_0}}{\delta}.   
\end{align*}
To choose the truncation parameters $M_{\psi\psi}$ and $M_{\tilde{\phi}\tilde{\phi}}$, recall that we require $\qty(\frac{M_{\psi\psi}}{C_{\beta, 1}K_4^2})^{\frac{\beta}{2}} > 2\qty(\frac{2}{\beta} - 1)$ and $\qty(\frac{M_{\tilde{\phi}\tilde{\phi}}}{3C_{\beta, 3}K_4^2})^{\frac{\beta}{2}} > 2\qty(\frac{2}{\beta} - 1)$ (from the application of Lemma \ref{lem: incomplete gamma function upper bound} above). For sufficiently large $n_0$, we take $M_{\psi\psi} = M_{\tilde{\phi} \tilde{\phi}} = C_{\beta,5} n_0^{\frac{1}{4}}$, where $C_{\beta,5} = \max\{C_{\beta, 1}, 3C_{\beta, 3}\} K_4^2 \cdot [2(\frac{2}{\beta} - 1)]^{\frac{2}{\beta}} + 1$ is chosen to satisfy both constraints. This yields
\begin{align*}
    &\P\qty(\bigcap_{j=1}^3 \Omega_j) \geq 1 - 2 \exp\qty( - \frac{n_0 \delta^2}{2(\rho+1)^2} ) - d^2\exp\qty(-\frac{\sqrt{n_0} \underline{\alpha}^2\delta^2}{4 C_{\beta,5}^2}) -\\& 2 d^2 \exp\qty( -\qty(\frac{C_{\beta,5} n_0^{\frac{1}{4}}}{2C_{\beta, 2}K_4^2})^{\frac{\beta}{2}} ) - d^2\exp\qty(-\frac{\sqrt{n_0}\delta^2}{4 C_{\beta,5}^2}) - \\&2 d^2 \exp\qty( -\qty(\frac{C_{\beta,5} n_0^{\frac{1}{4}}}{6C_{\beta,4}K_4^2})^{\frac{\beta}{2}} ) - 4 d^2 \exp\qty( -\qty(\frac{n_0\delta}{2C_{\beta, 2}K_4^2})^{\frac{\beta}{2}} )\\& - 10 \exp\qty(-\frac{n_0\delta^2}{2 K_1^2}) - 5 \frac{\epsilon_{n_0}}{\delta}.
\end{align*}
The dominant (slowest decay) rate is determined by $\min\{n_0^{\beta/8}, \sqrt{n_0}\delta^2\}$. Counting the coefficients: we have $12$ exponential terms, which can all be bounded by the weakest decay rate. Thus, ignoring lower-order terms and consolidating all constants, with probability at least $1 - 12 d^2 \exp\qty( - C\qty(\rho, \underline{\alpha}, \beta, K_1, K_4) \min\{n_0^{\frac{\beta}{8}}, \sqrt{n_0} \delta^2\}) - 5 \frac{\epsilon_{n_0}}{\delta}$, we have:
\begin{multline*}
    \gL\qty(\hat{\alpha}^{\text{\method}}(x, w_1, w_2)) - \gL\qty(\alpha^*(x, w_1, w_2)) \leq\\
    C\qty( \underline{\alpha}, \beta, K_2, K_3) d \qty( \delta + \gR_{\frac{n_0}{2}}(\gF_2) + n_0^{\frac{1}{4}} \exp\qty(- C\qty(K_4, \beta) n_0^{\frac{\beta}{8}})).
\end{multline*}
\end{proof}

\subsection{Proof of Corollary \ref{cor: consistency get efficiency}}
\label{sec: proof consistency get efficiency}
\begin{proof}
The proof is divided into two steps.

\textbf{Step 1.}
We first show that given any propensity score $\alpha(x, w_1, w_2)$,
our \method~estimator's asymptotic variance achieves the semiparametric efficiency lower bound.
Equivalently, we will show that the asymptotic variance of our estimator is no larger than that of any regular estimator $\hat{\theta}^{\text{any}, \alpha}$.
Here, we clarify the notational convention: the superscript $\alpha$ in $\hat{\theta}^{\text{any}, \alpha}$ and $\Sigma^{\text{any}, \alpha}$ denotes quantities constructed for the given, fixed propensity score $\alpha$.
To this end, note that for any regular estimator, we have
\begin{equation*}
\sqrt{n}\qty(\hat{\theta}^{\text{any}, \alpha} - \theta^*) \xrightarrow{d} \mathcal{N}\qty(0, \Sigma^{\text{any}, \alpha}),
\end{equation*}
and by Lemma \ref{lem: van2000asymptotic convolution} combined with Proposition \ref{prop: EIF MCAR monotone}, since $\psi\qty(\qty(X, Y, W_1, W_2, V)^R, R)$ is the EIF, we have:
\begin{equation*}
\Sigma^{\text{any}, \alpha} \geq \Var\qty(\psi\qty(\qty(X, Y, W_1, W_2, V)^R, R)).
\end{equation*}
On the other hand, for the \method~estimator, let $\hat{\theta}^{\text{\method}, \alpha}$ and $\Sigma^{\text{\method}, \alpha}$ denote the estimator and its asymptotic variance when using the given propensity score $\alpha(x, w_1, w_2)$ (following the same notational convention). By Theorem \ref{thm: asymptotic normality}, we have
\begin{equation*}
\sqrt{n}\qty(\hat{\theta}^{\text{\method}, \alpha} - \theta^*) \xrightarrow{d} \mathcal{N}\qty(0, \Sigma^{\text{\method}, \alpha}).
\end{equation*}
When $\tilde{\phi}_j = \phi_j$, we denote
\begin{multline*}
    \psi^{\text{\method}}\qty(\qty(X, Y, W_1, W_2, V)^R, R) = \\\mathbbm{1}\{R = r_1^*\} \frac{\psi_1(X, Y, W_1, W_2, V)}{\alpha_1(X, W_1, W_2)}
+ M \sum_{j=1}^3 \mathbbm{1}\{R = r_j^*\} \phi_j\qty(\qty(X, Y, W_1, W_2, V)^{r_j^*}).
\end{multline*}
It follows that
\begin{equation*}
\Sigma^{\text{\method}, \alpha} =\Cov\qty(
\psi^{\text{\method}}\qty(\qty(X, Y, W_1, W_2, V)^R, R)
).
\end{equation*}
To establish efficiency, it suffices to prove that $\Sigma^{\text{\method}, \alpha} = \Var\qty(\psi\qty(\qty(X, Y, W_1, W_2, V)^R, R))$.
By the form of the EIF in \eqref{eq: EIF MAR monotone}, we see that verifying $M = I$ would imply
\begin{equation*}
\psi^{\text{\method}}\qty(\qty(X, Y, W_1, W_2, V)^R, R) = \psi\qty(\qty(X, Y, W_1, W_2, V)^R, R),
\end{equation*} which completes the proof. Thus, we now focus on proving that $M = I$.

Recall that
\begin{equation*}
    M = -\check{\P}\qty[\mathbbm{1}\{R = r_1^*\}\frac{\psi_1\tilde{\phi}_1^\top}{\alpha_1}] \qty( \check{\P}\qty[\sum_{j = 1}^3 \mathbbm{1}\{R = r_j^*\} \tilde{\phi}_j\tilde{\phi}_j^\top])^{-1}.
\end{equation*}
We first observe that
\begin{equation*}
\check{\P}\qty[\mathbbm{1}\{R = r^*\} \frac{\psi_1 \phi_1^\top}{\alpha_1}]
= \E\qty[ \E\qty[ \mathbbm{1}\{R = r^*\} \mid X, W_1, W_2, V, Y ] 
\frac{\psi_1 \phi_1^\top}{\alpha_1(X, W_1, W_2)} ]
= \check{\P}\qty[\psi_1 \phi_1^\top].
\end{equation*}
Note that the following relations hold:
\begin{equation*}
\check{\P}\qty[\psi_1 \psi_2^\top] = \check{\P}\qty[\psi_2 \psi_2^\top], \qquad 
\check{\P}\qty[\psi_1 \psi_3^\top] = \check{\P}\qty[\psi_3 \psi_3^\top].
\end{equation*}
Substituting $\phi_j$ from Equation \eqref{eq: phi defn}, we obtain
\begin{equation}
\label{eq: psi1phi1}
\check{\P}\qty[\psi_1 \phi_1^\top] = \check{\P}\qty[
- \frac{\alpha_2}{(\alpha_1 + \alpha_2)\alpha_1} \psi_2 \psi_2^\top 
- \frac{\alpha_3}{\alpha_1 + \alpha_2} \psi_3 \psi_3^\top
].
\end{equation}
By similar calculations, we can derive the following:
\begin{equation*}
\check{\P}\qty[ \alpha_1 \phi_1 \phi_1^\top ] =
\check{\P}\qty[
\frac{\alpha_2^2}{(\alpha_1 + \alpha_2)^2 \alpha_1} \psi_2 \psi_2^\top
+ \frac{\alpha_3^2\alpha_1}{(\alpha_1 + \alpha_2)^2} \psi_3 \psi_3^\top
+ \frac{2\alpha_2 \alpha_3}{(\alpha_1 + \alpha_2)^2} \psi_2 \psi_2^\top
],
\end{equation*}
and
\begin{equation*}
\check{\P}\qty[ \alpha_2 \phi_2 \phi_2^\top ] =
\check{\P}\qty[
\frac{\alpha_2}{(\alpha_1 + \alpha_2)^2} \psi_2 \psi_2^\top
+ \frac{\alpha_2 \alpha_3^2}{(\alpha_1 + \alpha_2)^2} \psi_3 \psi_3^\top
- \frac{2 \alpha_2 \alpha_3}{(\alpha_1 + \alpha_2)^2} \psi_3 \psi_3^\top
],
\end{equation*}
and
\begin{equation*}
\check{\P}\qty[ \alpha_3 \phi_3 \phi_3^\top ] = \check{\P}\qty[ \alpha_3 \psi_3 \psi_3^\top ].
\end{equation*}
Combining these expressions, we arrive at
\begin{equation}
\label{eq: sum phijphij}
\begin{aligned}
\check{\P}\qty[ \sum_{j=1}^3 \mathbbm{1}\{R = r_j^*\} \phi_j \phi_j^\top ] &=
\check{\P}\qty[
\qty( \frac{\alpha_2^2 + \alpha_1 \alpha_2}{(\alpha_1 + \alpha_2)^2 \alpha_1} ) \psi_2 \psi_2^\top
+ \qty( \frac{\alpha_3^2 (\alpha_1 + \alpha_2)}{(\alpha_1 + \alpha_2)^2} + \alpha_3 ) \psi_3 \psi_3^\top
]\\
&= - \check{\P}\qty[ \mathbbm{1}\{R = r^*\} \frac{\psi_1 \phi_1^\top}{\alpha_1} ].
\end{aligned}
\end{equation}
This equation immediately implies that $M = I$. Consequently, we have
\begin{equation*}
\Sigma^{\text{\method}, \alpha} = \Cov\qty( \psi(X, Y, W_1, W_2, V)^R, R),
\end{equation*}
which yields
\begin{equation*}
\Tr\qty(\Sigma^{\text{\method}, \alpha}) \le \Tr\qty(\Sigma^{\text{any},\alpha}).
\end{equation*}
This completes Step 1.

\textbf{Step 2.}
We now leverage the result from Theorem \ref{thm: excess risk upper bound}, which provides the following bound with probability at least $1 - 12 d^2 \exp\qty( - C\qty(\rho, \underline{\alpha}, \beta, K_1, K_4)\min \{ n_0^{\frac{\beta}{8}}, \sqrt{n_0} \delta^2\}) - 5 \frac{\epsilon_{n_0}}{\delta}$:
\begin{multline*}
   \Tr\qty(\Sigma^{\text{\method}, \hat{\alpha}^{\text{\method}}}) - \Tr\qty(\Sigma^{\text{\method}, \alpha^*}) \leq\\
    C\qty( \underline{\alpha}, \beta, K_2, K_3) d \qty( \delta + \gR_{\frac{n_0}{2}}(\gF_2) + n_0^{\frac{1}{4}} \exp\qty(- C\qty(K_4, \beta) n_0^{\frac{\beta}{8}})).
\end{multline*}
Combining this with Step 1, we observe that for any regular estimator $\hat{\theta}_{\text{any}, \alpha}$ based on the propensity score $\alpha(x, w_1, w_2)$, the following chain of inequalities holds:
\begin{equation*}
    \Tr\qty(\Sigma^{\text{\method}, \alpha^*}) \leq \Tr\qty(\Sigma^{\text{\method}, \alpha}) \leq \Tr\qty(\Sigma^{\text{any}, \alpha})
\end{equation*}
where the first inequality follows from the optimality of $\alpha^*$ (which minimizes the asymptotic variance among all propensity scores), and the second inequality follows from Step 1 (our estimator achieves the semiparametric efficiency bound).
Therefore, with probability at least $1 - 12 d^2 \exp\qty( - C\qty(\rho, \underline{\alpha}, \beta, K_1, K_4)\min \{ n_0^{\frac{\beta}{8}}, \sqrt{n_0} \delta^2\}) - 5 \frac{\epsilon_{n_0}}{\delta}$, we obtain:
\begin{multline*}
   \Tr\qty(\Sigma^{\text{\method}, \hat{\alpha}^{\text{\method}}}) - \Tr\qty(\Sigma^{\text{any}, \alpha}) \leq\\
    C\qty( \underline{\alpha}, \beta, K_2, K_3) d \qty( \delta + \gR_{\frac{n_0}{2}}(\gF_2) + n_0^{\frac{1}{4}} \exp\qty(- C\qty(K_4, \beta) n_0^{\frac{\beta}{8}})).
\end{multline*}
This completes the proof.
\end{proof}

\subsection{Proof of Corollary \ref{cor: keep safe}}
\label{sec: proof keep safe}
\begin{proof}
\textbf{First part.} Consider the estimator $\hat{\theta}^{\text{label-unlabel}}$ corresponding to the policy
\begin{equation*}
\alpha^{\text{label}}(x, w_1, w_2) = \qty( \tau/\rho,\ 0, 1 - \tau/\rho ) \in \gF_1,   
\end{equation*}
which allocates budget only to labeled data and pseudo-labels. Since $\alpha^{\text{label}} \in \gF_1$, by optimality of $\alpha^*(x, w_1, w_2)$, we have
\begin{equation*}
\gL(\alpha^{\text{label}}(x, w_1, w_2)) \geq \gL(\alpha^*(x, w_1, w_2))
\end{equation*}
By applying Theorem \ref{thm: excess risk upper bound}, with probability at least $1 - 12 d^2 \exp\qty( - C\qty(\rho, \underline{\alpha}, \beta, K_1, K_4)\min \{ n_0^{\frac{\beta}{8}}, \sqrt{n_0} \delta^2\}) - 5 \frac{\epsilon_{n_0}}{\delta}$, we obtain
\begin{multline*}
    \gL\qty(\hat{\alpha}^{\text{\method}}(x, w_1, w_2)) - \gL\qty(\alpha^{\text{label}}(x, w_1, w_2)) \leq \gL\qty(\hat{\alpha}^{\text{\method}}(x, w_1, w_2)) - \gL\qty(\alpha^*(x, w_1, w_2))\\
    \leq C\qty( \underline{\alpha}, \beta, K_2, K_3) d \qty( \delta + \gR_{\frac{n_0}{2}}(\gF_2) + n_0^{\frac{1}{4}} \exp\qty(- C\qty(K_4, \beta) n_0^{\frac{\beta}{8}})).
\end{multline*}
Converting this excess risk bound to the asymptotic variance, we obtain the following equivalent statement:
\begin{multline*}
    \lim_{n_1 \to \infty} \Tr(\Cov(\hat{\theta}^{\text{\method}})) - \lim_{n_1 \to \infty} \Tr(\Cov(\hat{\theta}^{\text{label-unlabel}})) \leq \\
    C\qty( \underline{\alpha}, \beta, K_2, K_3) d \qty( \delta + \gR_{\frac{n_0}{2}}(\gF_2) + n_0^{\frac{1}{4}} \exp\qty(- C\qty(K_4, \beta) n_0^{\frac{\beta}{8}})).
\end{multline*}

\textbf{Second part.} We now compare the asymptotic variance of $\hat{\theta}^{\text{label-unlabel}}$ with the label-only estimator $\hat{\theta}^{\text{label-only}}$. Under the policy $\alpha^{\text{label}}(x, w_1, w_2)$, the asymptotic covariances are given by
\begin{equation*}
\lim_{n \to \infty} \Cov\qty( \hat{\theta}^{\text{label-unlabel}} ) 
= \Sigma_{\text{\method}, \alpha} 
= \Cov\qty( \psi\qty((X, Y, W_1, W_2, V)^R, R))
\end{equation*}
\begin{equation*}
\lim_{n_1 \to \infty} \Cov\qty( \hat{\theta}^{\text{label-only}} ) 
= \Cov\qty(\psi_1(X, Y, W_1, W_2, V) ).
\end{equation*}
To show that incorporating pseudo-labels reduces variance, we examine the trace of the covariance. By the structure of the influence function $\psi\qty((X, Y, W_1, W_2, V)^R, R)$, it follows that
\begin{equation*}
\begin{aligned}
&\Tr\qty(\Cov\qty( \psi\qty((X, Y, W_1, W_2, V)^R, R))) \\
=  &\Tr\qty(\Cov\qty(
\mathbbm{1}\{R = r_1^*\} \frac{\psi_1(X, Y, W_1, W_2, V)}{\alpha_1(X, W_1, W_2)})) \\
&-\Tr\qty(\Cov\qty( M \sum_{j=1}^3 \mathbbm{1}\{R = r_j^*\} \tilde{\phi}_j\qty(\qty(X, Y, W_1, W_2, V)^{r_j^*})))\\
=  &\Tr\qty(\Cov\qty(\psi_1(X, Y, W_1, W_2, V) )) \\
&- \Tr\qty(\Cov\qty( M \sum_{j=1}^3 \mathbbm{1}\{R = r_j^*\} \tilde{\phi}_j\qty(\qty(X, Y, W_1, W_2, V)^{r_j^*})))\\
\leq &\Tr\qty(\Cov\qty(\psi_1(X, Y, W_1, W_2, V) )).
\end{aligned}
\end{equation*}
Combining the results from both parts, we conclude that
\begin{equation*}
    \lim_{n_1 \to \infty} \Tr(\Cov(\hat{\theta}^{\text{label-unlabel}}))  \leq  \lim_{n_1 \to \infty} \Tr(\Cov(\hat{\theta}^{\text{label-only}})).
\end{equation*}
\end{proof}

\subsection{Proof of Corollary \ref{cor: need label data}}
\begin{proof}
We show that the asymptotic variance diverges as the budget allocation to labeled data vanishes. 
When we can estimate the nuisance well (i.e., $\tilde{\phi}_j = \phi_j$), for any policy $\alpha(x, w_1, w_2)$, based on Equation \eqref{eq: variance population}, Equation \eqref{eq: psi1phi1} and Equation \eqref{eq: sum phijphij}, the asymptotic loss can be decomposed as
\begin{equation}
\label{eq: loss based on phij}
\begin{aligned}
\gL\qty(\alpha(x, w_1, w_2)) &= \Tr\qty(\check{\P}\qty[\frac{1}{\alpha_1} \psi_1 \psi_1^\top]
+ \check{\P}\qty[
- \frac{\alpha_2}{(\alpha_1 + \alpha_2)\alpha_1} \psi_2 \psi_2^\top
- \frac{1}{\alpha_1 + \alpha_2} \psi_3 \psi_3^\top
+ \psi_3 \psi_3^\top])\\
&= \Tr\qty(\check{\P}\qty[
\frac{1}{\alpha_1} \psi_1 \psi_1^\top
- \frac{\alpha_2}{\alpha_1 + \alpha_2} \psi_2 \psi_2^\top
- \frac{\alpha_1}{\alpha_1 + \alpha_2} \psi_3 \psi_3^\top
]) + \Tr\qty(\check{\P}\qty[\psi_3 \psi_3^\top])
\end{aligned}
\end{equation}
Since the term $\Tr\qty(\check{\P}[\psi_3 \psi_3^\top])$ is finite and does not depend on $\alpha(x, w_1, w_2)$, we focus on the remaining terms. Taking the trace, we obtain
\begin{equation*}
\begin{aligned}
&\Tr\qty(
\check{\P}\qty[
\frac{1}{\alpha_1} \psi_1 \psi_1^\top
- \frac{\alpha_2}{\alpha_1 + \alpha_2} \psi_2 \psi_2^\top
- \frac{\alpha_1}{\alpha_1 + \alpha_2} \psi_3 \psi_3^\top
])\\
=& \check{\P}\qty[\frac{1}{\alpha_1}\Tr\qty(
\psi_1 \psi_1^\top
- \frac{\alpha_2}{\alpha_1 + \alpha_2} \psi_2 \psi_2^\top
- \frac{\alpha_1}{\alpha_1 + \alpha_2} \psi_3 \psi_3^\top
)]\\
\geq& \check{\P}\qty[\frac{1}{\alpha_1}\Tr\qty(
\psi_1 \psi_1^\top - \psi_2 \psi_2^\top
)],
\end{aligned}
\end{equation*}
where the last inequality uses the inequality $ \check{\P}[\norm{\psi_2}^2] \geq \check{\P}[\norm{\psi_3}^2]$. 
Now, define the gap between the norms of $\psi_1$ and $\psi_2$ as
\begin{equation*}
\check{\P}\qty[\Tr\qty(\psi_1 \psi_1^\top - \psi_2 \psi_2^\top)]
= \check{\P}\qty[\norm{\psi_1}^2] - \check{\P}\qty[\norm{\psi_2}^2]
\triangleq \delta_{\psi_{12}} > 0
\end{equation*}
where the strict inequality holds because $\psi_1 \neq \psi_2$.

To show that $\gL\qty(\alpha(x, w_1, w_2)) \rightarrow \infty$ as $\alpha_1(x, w_1, w_2) \rightarrow 0$, let $C > 0$ be an arbitrary constant. Choose $\alpha_1(x, w_1, w_2)$ small enough such that 
$\alpha_1(x, w_1, w_2) \leq \frac{\delta_{\psi_{12}}}{C}$.
Then, we have
\begin{equation*}
\mathcal{L}(\alpha(x, w_1, w_2)) 
\geq \frac{C}{\delta_{\psi_{12}}} \delta_{\psi_{12}} 
= C.
\end{equation*}
Since $C$ is arbitrary, this shows that $\gL\qty(\alpha(x, w_1, w_2)) \rightarrow \infty$ as $\alpha_1(x, w_1, w_2) \rightarrow 0$, completing the proof.
\end{proof}

\section{Proofs for Covariate-Agnostic Case }
\label{sec: app proof covariate agnostic}
\subsection{Proof of Theorem \ref{thm: mcar excess risk upper bound}}
\begin{proof}
The proof follows a similar strategy to that of Theorem \ref{thm: excess risk upper bound}, but adapted to the constrained setting by working with the propensity score class $\gF_3$.
Recall that
\begin{equation*}
\gF_3 = \qty{ \alpha \middle| \,
   \alpha_j = \alpha_j, j \in [3],
   \alpha_1 + \alpha_2 + \alpha_3 = 1,  \rho\alpha_1 + \alpha_2 = \tau, \alpha_1 \geq \underline{\alpha} }.
\end{equation*}
For this constrained class, the empirical optimal design is obtained by minimizing the empirical loss:
\begin{equation}
\label{eq: optimization problem for alpha scalar empirical}
\hat{\alpha}^{\text{\method-CA}}
= \argmin_{\alpha \in \gF_3} 
\hat{\gL}(\alpha),
\end{equation}
where the empirical loss function is given by
\begin{multline}
\label{eq: empirical loss function}
\hat{\gL}(\alpha) 
:= \Tr\left(
\check{\P}_{\gD_{r_1^*, 2}^\0}\qty[\frac{\hat{\psi}_1 \hat{\psi}_1^\top}{\alpha_1}] \right. \\
\left.
- \check{\P}_{\gD_{r_1^*, 2}^\0}\qty[\hat{\psi}_1 \hat{\phi}_1^\top]
  \qty(\check{\P}_{\gD_{r_1^*, 2}^\0}\qty[\sum_{j=1}^3 \alpha_j\, \hat{\phi}_j \hat{\phi}_j^\top])^{-1}
  \check{\P}_{\gD_{r_1^*, 2}^\0}\qty[\hat{\phi}_1 \hat{\psi}_1^\top]
\right).
\end{multline}
We split the excess risk as
\begin{multline*}
\gL(\hat{\alpha}^{\text{\method-CA}}) - \gL(\alpha^*)
= \gL(\hat{\alpha}^{\text{\method-CA}}) - \hat{\gL}(\hat{\alpha}^{\text{\method-CA}})\\
+ \hat{\gL}(\hat{\alpha}^{\text{\method-CA}}) - \hat{\gL}(\alpha^*)
+ \hat{\gL}(\alpha^*) - \gL(\alpha^*).
\end{multline*}
By the optimality of $\hat{\alpha}^{\text{\method-CA}}$ over $\gF_3$, the middle term is non-positive, yielding
\begin{equation*}
\gL\qty(\hat{\alpha}^{\text{\method-CA}}) - \gL\qty(\alpha^*)
\leq 2 \operatorname*{sup}_{\alpha \in \gF_3}
\abs{ \hat{\gL}\qty(\alpha) - \gL\qty(\alpha) }.
\end{equation*}
Thus, it suffices to bound the uniform deviation $\operatorname*{sup}_{\alpha \in \gF_3}
\abs{ \hat{\gL}\qty(\alpha) - \gL\qty(\alpha) }$.

To analyze this uniform deviation, we introduce an intermediate loss function and apply the triangle inequality:
\begin{equation*}
\operatorname*{sup}_{\alpha \in \gF_3}
\abs{ \hat{\gL}\qty(\alpha) - \gL\qty(\alpha) } \leq
\operatorname*{sup}_{\alpha \in \gF_3}   \abs{ \tilde{\gL}\qty(\alpha) - \gL\qty(\alpha) }
+ \operatorname*{sup}_{\alpha \in \gF_2} 
\abs{ \hat{\gL}\qty(\alpha) - \tilde{\gL}\qty(\alpha) },
\end{equation*}
where
\begin{multline*}
\tilde{\gL}(\alpha) 
:= \Tr\Bigg(
\check{\P}_{\gD_{r_1^*, 2}^\0}\qty[\frac{\psi_1 \psi_1^\top}{\alpha_1}] \\
- \check{\P}_{\gD_{r_1^*, 2}^\0}\qty[\psi_1 \tilde{\phi}_1^\top]
  \qty(\check{\P}_{\gD_{r_1^*, 2}^\0}\qty[\sum_{j=1}^3 \alpha_j\, \tilde{\phi}_j \tilde{\phi}_j^\top])^{-1}
  \check{\P}_{\gD_{r_1^*, 2}^\0}\qty[\tilde{\phi}_1 \psi_1^\top]
\Bigg).
\end{multline*}
The intermediate loss $\tilde{\gL}(\alpha)$ uses the true influence function $\psi_1$ but the estimated control variates $\tilde{\phi}_j$.

Applying the same concentration and uniform convergence techniques as in Section \ref{sec: app proof excess risk}, we obtain that with probability at least
\begin{align*}
    &1 - d^2\exp\qty(-\frac{n_0 \underline{\alpha}^2\delta^2}{4 M_{\psi\psi}^2}) - 2 d^2 \exp\qty( -\qty(\frac{M_{\psi\psi}}{2C_{\beta, 2}K_4^2})^{\frac{\beta}{2}} ) - d^2\exp\qty(-\frac{n_0\delta^2}{4 M_{\tilde{\phi} \tilde{\phi}}^2}) -\\& 2 d^2 \exp\qty( -\qty(\frac{M_{\tilde{\phi}\tilde{\phi}}}{6C_{\beta,4}K_4^2})^{\frac{\beta}{2}} ) -4 d^2 \exp\qty( -\qty(\frac{n_0\delta}{2C_{\beta, 2}K_4^2})^{\frac{\beta}{2}} ) - 10 \exp\qty(-\frac{n_0\delta^2}{2 K_1^2}) - 5 \frac{\epsilon_{n_0}}{\delta}, 
\end{align*}
the following bound holds:
\begin{multline*}
\gL\qty(\hat{\alpha}^{\text{\method-CA}}) - \gL\qty(\alpha^*) \leq  \\
\qty(6 + 26\frac{K_3}{K_2} +  198\frac{K_3^2}{K_2^2}) \frac{d\delta}{\underline{\alpha}} + \qty(4 + \frac{12K_3^2}{\underline{\alpha}K_2^2})d\gR_{\frac{n_0}{2}}(\gF_3) + \frac{6dK_3^2}{\underline{\alpha}K_2^2}\delta_{M_{\tilde{\phi}\tilde{\phi}}} + 2d \delta_{M_{\psi \psi}},
\end{multline*}
where
\begin{align*}
&\delta_{M_{\psi\psi}} = \frac{1}{\underline{\alpha}}\qty(M_{\psi\psi} + 4 \qty(\frac{M_{\psi\psi}}{C_{\beta, 1}K_4^2})^{1 - \frac{\beta}{2}} )\exp\qty(-\qty(\frac{M_{\psi\psi}}{C_{\beta, 1}K_4^2})^{\frac{\beta}{2}}),\\
&\delta_{M_{\tilde{\phi}\tilde{\phi}}} = \qty(M_{\tilde{\phi}\tilde{\phi}} + 4 \qty(\frac{M_{\tilde{\phi}\tilde{\phi}}}{3C_{\beta, 3}K_4^2})^{1-\frac{\beta}{2}})\exp\qty( -\qty(\frac{M_{\tilde{\phi}\tilde{\phi}}}{3C_{\beta, 3}K_4^2})^{\frac{\beta}{2}} ).
\end{align*}

Finally, similar to the proof of Theorem \ref{thm: excess risk upper bound} in Section \ref{sec: app proof excess risk}, we conclude that with probability at least $1 - 12 d^2 \exp\qty( - C\qty(\rho, \underline{\alpha}, \beta, K_1, K_4)\min \{ n_0^{\frac{\beta}{8}}, \sqrt{n_0} \delta^2\}) - 5 \frac{\epsilon_{n_0}}{\delta}$, the excess risk satisfies
\begin{multline*}
    \gL\qty(\hat{\alpha}^{\text{\method-CA}}) - \gL\qty(\alpha^*) \leq\\
    C\qty( \underline{\alpha}, \beta, K_2, K_3) d \qty( \delta + \gR_{\frac{n_0}{2}}(\gF_3) + n_0^{\frac{1}{4}} \exp\qty(- C\qty(K_4, \beta) n_0^{\frac{\beta}{8}})).
\end{multline*}
\end{proof}

\subsection{Proof of Corollary \ref{cor: mcar keep safe}}
\begin{proof}
The proof follows a similar approach to Section \ref{sec: proof consistency get efficiency}. 

First, we establish the semi-parametric efficiency. Given any propensity score $\alpha$, for any regular estimator, we have
\begin{equation*}
\sqrt{n}\qty(\hat{\theta}^{\text{any}, \alpha} - \theta^*) \xrightarrow{d} \mathcal{N}\qty(0, \Sigma^{\text{any}, \alpha})
\end{equation*}
and for our \method-CA~estimator with the given propensity score $\alpha$,
\begin{equation*}
\sqrt{n}\qty(\hat{\theta}^{\text{\method-CA}, \alpha} - \theta^*) \xrightarrow{d} \mathcal{N}\qty(0, \Sigma^{\text{\method-CA}, \alpha}).
\end{equation*}
By the semi-parametric efficiency theory (as used in Section \ref{sec: proof consistency get efficiency}), our estimator achieves the efficiency bound, so
\begin{equation}
\label{eq: mcar paci trace lower bound}
\Tr\qty(\Sigma^{\text{\method-CA}, \alpha}) \le \Tr\qty(\Sigma^{\text{any},\alpha}).
\end{equation}
Next, we leverage theexcess risk bound from Theorem \ref{thm: mcar excess risk upper bound}. With probability at least $1 - 12 d^2 \exp\qty( - C\qty(\rho, \underline{\alpha}, \beta, K_1, K_4)\min \{ n_0^{\frac{\beta}{8}}, \sqrt{n_0} \delta^2\}) - 5 \frac{\epsilon_{n_0}}{\delta}$, we have
\begin{multline*}
   \Tr\qty(\Sigma^{\text{\method-CA}, \hat{\alpha}^{\text{\method-CA}}}) - \Tr\qty(\Sigma^{\text{\method-CA}, \alpha^*}) \leq\\
    C\qty( \underline{\alpha}, \beta, K_2, K_3) d \qty( \delta + \gR_{\frac{n_0}{2}}(\gF_3) + n_0^{\frac{1}{4}} \exp\qty(- C\qty(K_4, \beta) n_0^{\frac{\beta}{8}})).
\end{multline*}
To complete the proof, we combine this finite-sample bound with the efficiency lower bound \eqref{eq: mcar paci trace lower bound}. Note that for any regular estimator $\hat{\theta}_{\text{any}, \alpha}$ based on the propensity score $\alpha$, we have the following ordering:
\begin{equation*}
    \Tr\qty(\Sigma^{\text{\method-CA}, \alpha^*}) \leq \Tr\qty(\Sigma^{\text{\method-CA}, \alpha}) \leq \Tr\qty(\Sigma^{\text{any}, \alpha})
\end{equation*}
where the first inequality follows from the optimality of $\alpha^*$ and the second from \eqref{eq: mcar paci trace lower bound}.
Combining these inequalities with the excess risk bound, we conclude that with probability at least $1 - 12 d^2 \exp\qty( - C\qty(\rho, \underline{\alpha}, \beta, K_1, K_4)\min \{ n_0^{\frac{\beta}{8}}, \sqrt{n_0} \delta^2\}) - 5 \frac{\epsilon_{n_0}}{\delta}$,
\begin{multline*}
   \Tr\qty(\Sigma^{\text{\method-CA}, \hat{\alpha}^{\text{\method-CA}}}) - \Tr\qty(\Sigma^{\text{any}, \alpha}) \leq\\
    C\qty( \underline{\alpha}, \beta, K_2, K_3) d \qty( \delta + \gR_{\frac{n_0}{2}}(\gF_3) + n_0^{\frac{1}{4}} \exp\qty(- C\qty(K_4, \beta) n_0^{\frac{\beta}{8}})).
\end{multline*}
This completes the proof.
\end{proof}

\subsection{Proof of Corollary \ref{cor: mcar keep safe}}
\begin{proof}
The proof follows a similar approach to Section \ref{sec: proof consistency get efficiency}.
Notice that
\begin{equation*}
\alpha^{\text{label}} = \qty( \tau/\rho,\ 0, 1 - \tau/\rho ) \in \gF_3.
\end{equation*}
By optimality of $\alpha^*$, we have
\begin{equation*}
\gL(\alpha^{\text{label}}) \geq \gL(\alpha^*).
\end{equation*}
By applying Theorem \ref{thm: mcar excess risk upper bound}, with probability at least $1 - 12 d^2 \exp\qty( - C\qty(\rho, \underline{\alpha}, \beta, K_1, K_4)\min \{ n_0^{\frac{\beta}{8}}, \sqrt{n_0} \delta^2\}) - 5 \frac{\epsilon_{n_0}}{\delta}$, we obtain
\begin{multline*}
    \gL\qty(\hat{\alpha}^{\text{\method-CA}}) - \gL\qty(\alpha^{\text{label}}) \leq \gL\qty(\hat{\alpha}^{\text{\method-CA}}) - \gL\qty(\alpha^*)\\
    \leq C\qty( \underline{\alpha}, \beta, K_2, K_3) d \qty( \delta + \gR_{\frac{n_0}{2}}(\gF_3) + n_0^{\frac{1}{4}} \exp\qty(- C\qty(K_4, \beta) n_0^{\frac{\beta}{8}})).
\end{multline*}
Converting this excess risk bound to the asymptotic variance, we obtain the following equivalent statement:
\begin{multline*}
    \lim_{n_1 \to \infty} \Tr(\Cov(\hat{\theta}^{\text{\method-CA}})) - \lim_{n_1 \to \infty} \Tr(\Cov(\hat{\theta}^{\text{label-unlabel}})) \leq \\
    C\qty( \underline{\alpha}, \beta, K_2, K_3) d \qty( \delta + \gR_{\frac{n_0}{2}}(\gF_3) + n_0^{\frac{1}{4}} \exp\qty(- C\qty(K_4, \beta) n_0^{\frac{\beta}{8}})).
\end{multline*}
\end{proof}

\subsection{Proof of Corollary \ref{cor: need preference data}}
\begin{proof}
We show that under the condition $\rho > \frac{e_1 - e_3}{e_2 - e_3}$, allocating zero budget to preference data is suboptimal.

Starting from Equation \eqref{eq: loss based on phij}, the asymptotic loss can be expressed as
\begin{equation*}
\gL(\alpha) = 
\Tr\qty(
\check{\P} \qty[
\frac{1}{\alpha_1} \psi_1 \psi_1^\top
- \frac{\alpha_2}{(\alpha_1 + \alpha_2)\alpha_1} \psi_2 \psi_2^\top
+ \qty(1 - \frac{1}{\alpha_1 + \alpha_2}) \psi_3 \psi_3^\top
]).
\end{equation*}
To analyze this loss as a function of $\alpha_1$, we substitute $\alpha_2, \alpha_3$ using the budget constraint \eqref{eq: budget constraint simplified}. Let $e_j := \Tr\qty( \check{\P}[\psi_j \psi_j^\top] )$ denote the trace of the covariance of each influence function. This yields
\begin{equation*}
\gL(\alpha) 
= \frac{1}{\alpha_1} e_1 
- \frac{\alpha_2}{(\alpha_1 + \alpha_2)\alpha_1} e_2 
+ \qty(1 - \frac{1}{\alpha_1 + \alpha_2}) e_3,
\end{equation*}
\begin{equation*}
\gL_1(\alpha_1) 
= \frac{e_1}{\alpha_1} 
- \frac{\tau - \rho \alpha_1}{\alpha_1(\tau + (1 - \rho)\alpha_1)} e_2
+ \qty(1 - \frac{1}{\tau + (1 - \rho)\alpha_1}) e_3,
\end{equation*}
where $\gL_1(\alpha_1) := \gL(\alpha)$ denotes the loss as a univariate function of $\alpha_1$ after substituting the budget constraints.

To determine whether preference data should be used, we evaluate the derivative at the boundary point $\alpha_1 = \frac{\tau}{\rho}$, where $\alpha_2 = 0$ (no budget allocated to preference data). Computing this derivative gives
\begin{equation*}
\gL_1'\qty(\frac{\tau}{\rho})
= \frac{\rho^2}{\tau^2} \qty( \rho(e_2 - e_3) - (e_1 - e_3) )
\end{equation*}
Observe that when the cost parameter satisfies
\begin{equation*}
\rho > \frac{e_1 - e_3}{e_2 - e_3},
\end{equation*}
we have $\gL_1'\qty(\frac{\tau}{\rho}) > 0$, meaning the loss is increasing at the boundary.

Since the derivative is positive, decreasing $\alpha_1$ from $\frac{\tau}{\rho}$ (and thus increasing $\alpha_2$ from 0) reduces the loss. Therefore,
\begin{equation*}
\gL_1(\frac{\tau}{\rho}) > \min_{\alpha \in \gF_3} \gL(\alpha),
\end{equation*}
which implies that the optimal policy must allocate positive budget to preference data, i.e., $\alpha_2^* \neq 0$.
\end{proof}

\section{Additional Related Work}
\label{sec: app additional related work}
\paragraph{RLHF \& Pairwise-comparison.}
Data from pairwise comparisons are widely used in numerous fields, including for ranking search results \citep{burges2005learning, lee2024methods} and evaluating text generation quality \citep{stiennon2020learning}. This data structure has also become central to modern AI for capturing human preferences. 

This pairwise comparison data is critical, as LLMs can exhibit a range of unintended biased behaviors, such as fabricating facts ("hallucinations"), generating biased text, or failing to follow user instructions \citep{kenton2021alignment, bender2021dangers, gehman2020realtoxicityprompts}. Then, pairwise comparison data becomes the cornerstone of powerful techniques like Reinforcement Learning from Human Feedback (RLHF), where preferences between two model outputs are used to align large language models with human values \citep{ouyang2022traininglanguagemodelsfollow, christiano2017deep, stiennon2020learning, rafailov2023direct}.
\paragraph{Semi-parametric Inference \& Missing Data.} Our work builds upon the rich literature of semi-parametric inference, which provides a formal framework for improving statistical efficiency by leveraging auxiliary information, particularly in settings with missing data \citep{robins1994estimation, chernozhukov2018double, robins1995semiparametric, rubin1976inference, rubin1996multiple}. This classical approach has seen a modern resurgence in the context of Prediction-Powered Inference (PPI) \citep{angelopoulos2023prediction, angelopoulos2023ppi++}, where machine learning predictions are used as auxiliary information to enhance inference when true outcomes are scarce. Recent advancements in this area include the development of decorrelation matrices to construct robust estimators \citep{miao2023assumption, gan2024prediction}, alongside theoretical contributions that establish the connection between semiparametric frameworks and prediction-powered inference (PPI), demonstrating its statistical efficiency under well-defined conditions \citep{ji2025predictionssurrogatesrevisitingsurrogate, xu2025unified}.
\paragraph{Experiment Design \& Active Learning.} 
The fields of experiment design and active learning both address the challenge of optimal data acquisition, however, from different perspectives. Experiment design traditionally involves dynamically adjusting the allocation of treatments across populations to optimize the accuracy of estimated treatment effects \citep{lai1985asymptotically, hahn2011adaptive, li2024optimaladaptiveexperimentaldesign}. In contrast, active learning, closer to machine learning, seeks to improve the model performance by selectively querying labels for the most informative unlabeled data \citep{ash2019deep, settles2009active, gal2017deep}. While recent studies explore data selection under budget constraints \citep{zrnic2024active, angelopoulos2025costoptimalactiveaimodel, li2025robust}, their methods lack formal guarantees on the statistical efficiency of the final estimators. On the other hand, \citet{ao2024predictionguidedactiveexperiments} provides asymptotic efficiency under certain conditions. A critical limitation of this latter method, however, is its restricted focus on only two data types and its reliance on deriving a closed-form expression for the optimal strategy. \citet{zhang2025efficient} also considers only two data types; \citet{imberg2025active} lacks efficiency guarantees and relies on closed-form solutions. Other recent works \citep{goebel2025budgeted, morato2025conbatch} also consider active learning under budget constraints, but these methods focus on single data type acquisition (e.g., labels only) and lack formal guarantees on statistical efficiency or asymptotic optimality for estimating statistical functionals.
Our work addresses these limitations by pursuing a broader objective. Rather than simply identifying the optimal labeling strategy in a constrained setting, we develop a method to achieve minimum estimator variance within a more general monotone missing data framework \citep{wang2024maximinoptimalapproachsampling}. Our approach is designed to handle multiple data types and directly derives the optimal acquisition strategy by minimizing the asymptotic variance of the proposed estimator, removing the need for a closed-form solution.

\section{Additional Experiment Results}
\label{sec: app additional figures}

\subsection{Simulation Study}
\label{sec: app additional simulation figures}

This section provides additional simulation results for the linear regression experiment described in Section~\ref{sec: experiment}. Figures~\ref{fig:ci_cov_c5_appendix} and \ref{fig:ci_cov_c20_appendix} show the confidence interval lengths and coverage rates for cost settings $c=5$ and $c=20$, respectively. The patterns are consistent with those observed for $c=10$ in the main text: both \method~and \method-CA~produce substantially shorter confidence intervals than the baselines while maintaining proper coverage near the nominal 90\% level.

\begin{figure}[!ht]
  \centering
  \includegraphics[width=\linewidth]{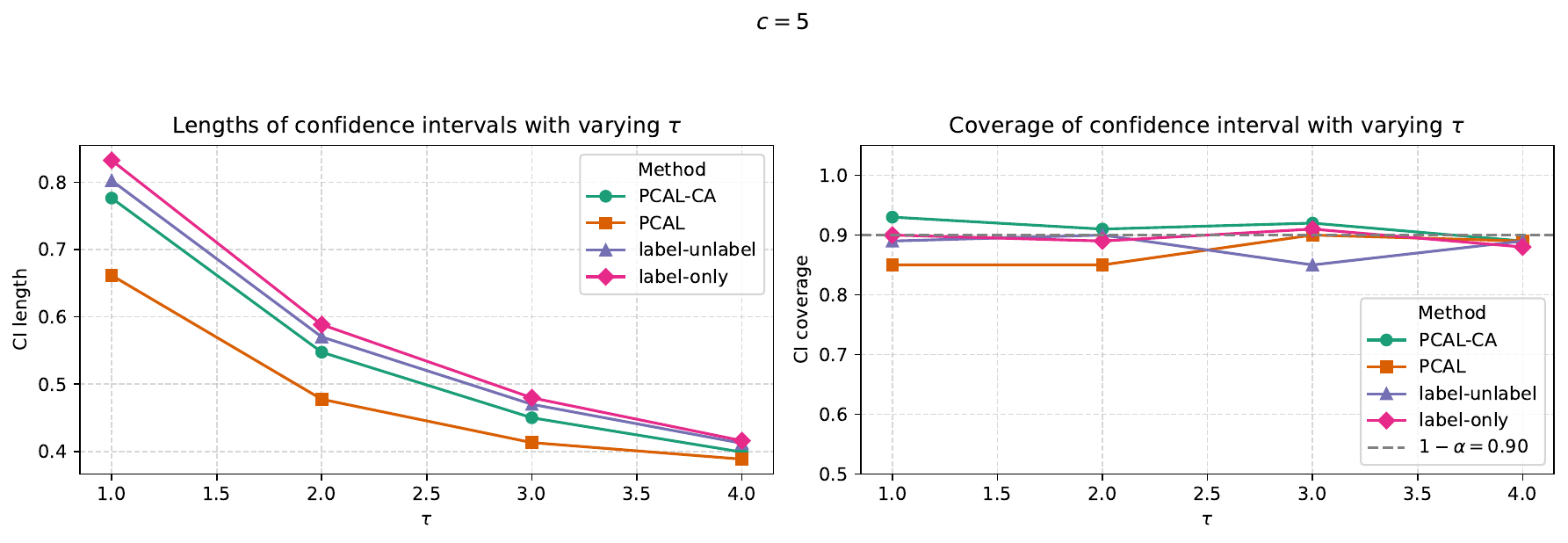}
  \caption{Confidence interval results of $\theta_1^*$ with $c=5$.}
  \label{fig:ci_cov_c5_appendix}
\end{figure}

\begin{figure}[!ht]
  \centering
  \includegraphics[width=\linewidth]{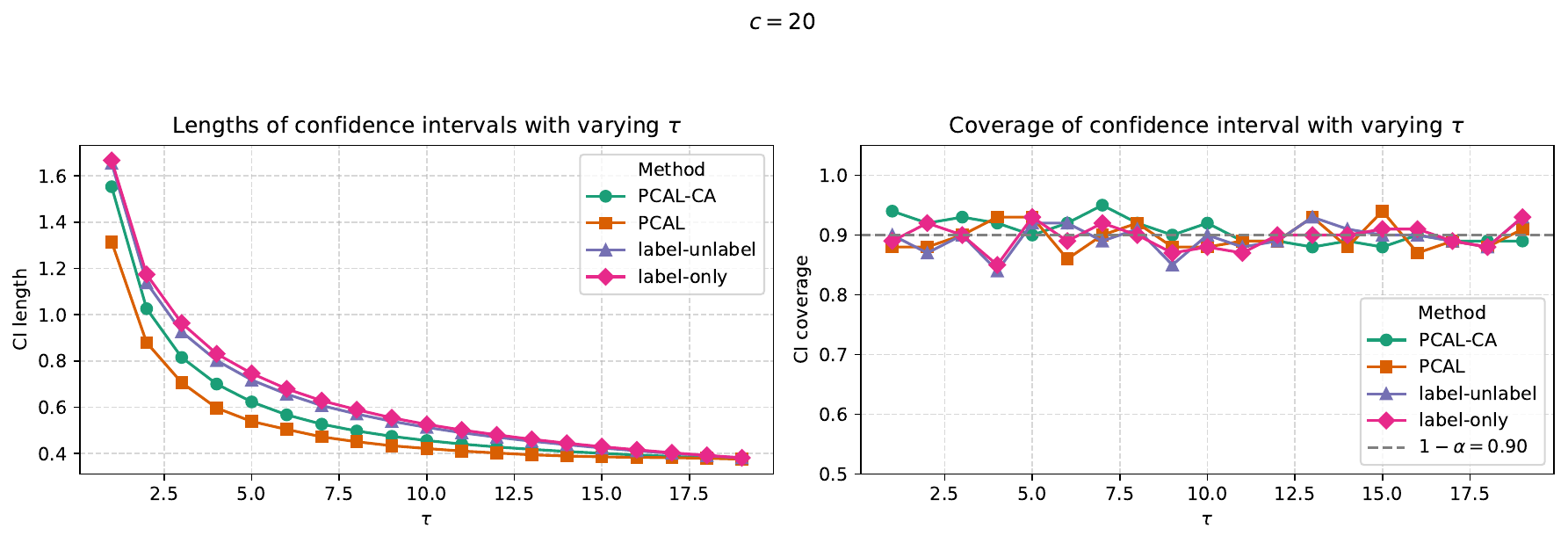}
  \caption{Confidence interval results of $\theta_1^*$ with $c=20$.}
  \label{fig:ci_cov_c20_appendix}
\end{figure}

\subsection{Real Data Analysis}
\label{sec: app additional realdata figures}

This section provides additional results for the politeness dataset analysis described in Section~\ref{sec: experiment}. Figures~\ref{fig:politeness_c20_appendix}--\ref{fig:politeness_c200_appendix} show results for cost settings $c \in \{20, 50, 200\}$. Across all settings, \method~consistently achieves the shortest confidence intervals while maintaining coverage close to the nominal 90\% level, confirming the robustness of our method across different cost structures.

\begin{figure}[!ht]
  \centering
  \includegraphics[width=\linewidth]{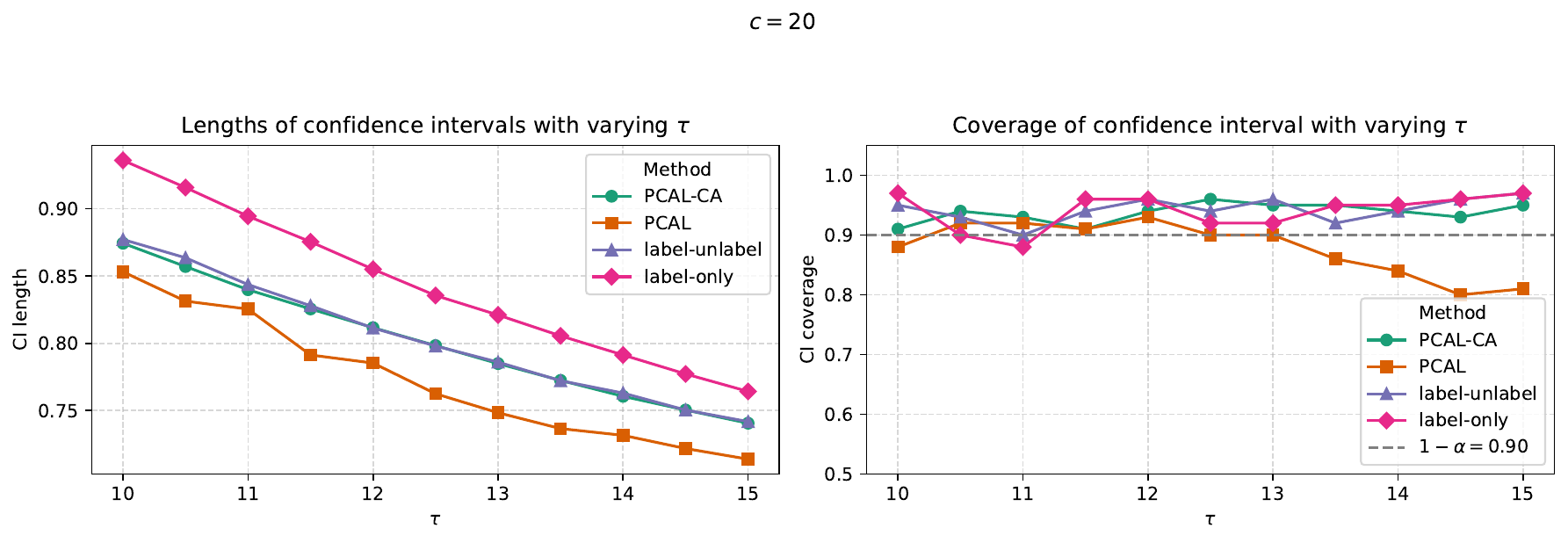}
  \caption{Confidence interval results of feature 3 with $c=20$.}
  \label{fig:politeness_c20_appendix}
\end{figure}

\begin{figure}[!ht]
  \centering
  \includegraphics[width=\linewidth]{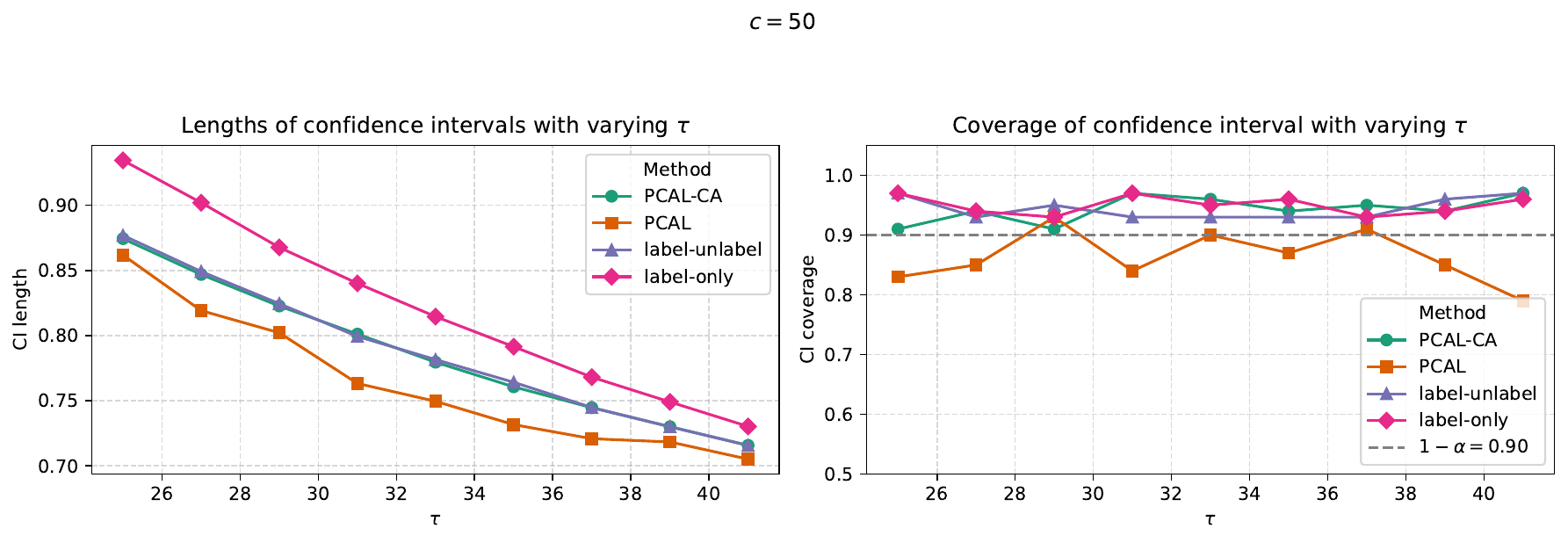}
  \caption{Confidence interval results of feature 3 with $c=50$.}
  \label{fig:politeness_c50_appendix}
\end{figure}

\begin{figure}[!ht]
  \centering
  \includegraphics[width=\linewidth]{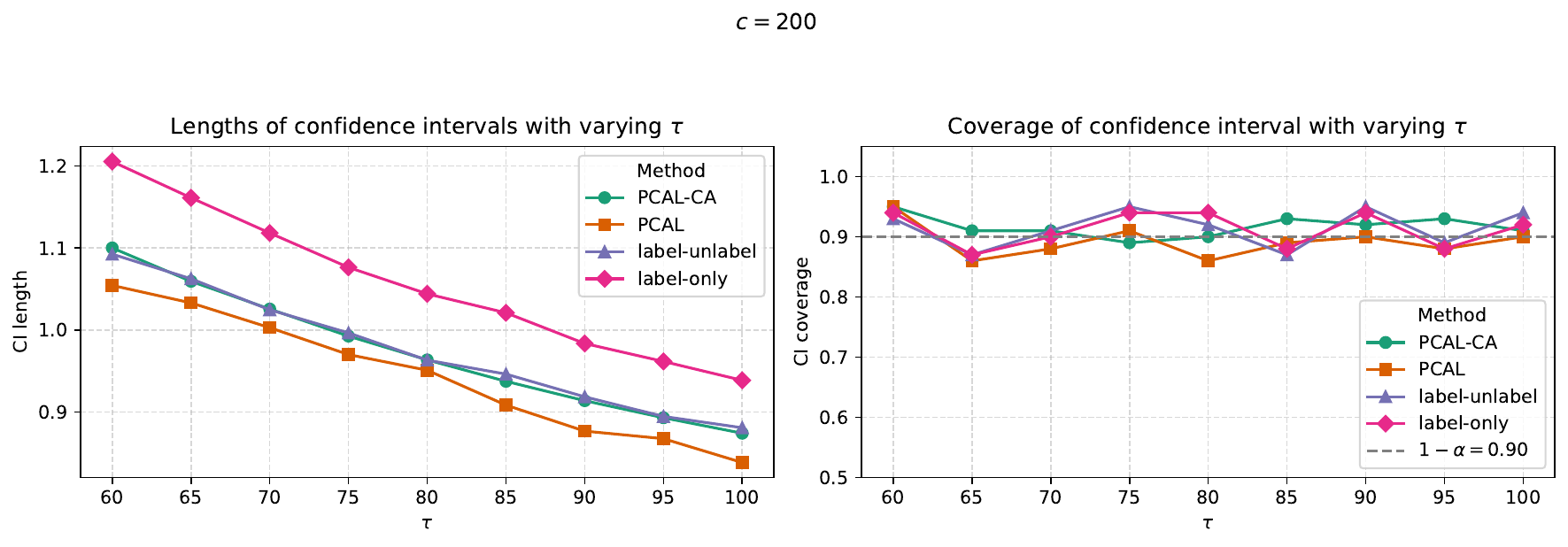}
  \caption{Confidence interval results of feature 3 with $c=200$.}
  \label{fig:politeness_c200_appendix}
\end{figure}
\end{document}